\newtheorem{assumption}{Assumption}
\newtheorem{problem}{Problem}
\newcommand{\figref}[1]{Figure~\ref{fig:#1}\xspace}
\newcommand{\secref}[1]{Sect.~\ref{sec:#1}\xspace}
\newcommand{\planner}{\ensuremath{P}\xspace}
\newcommand{\G}{{\mathcal{G}}}
\newcommand{\T}{{\mathcal{T}}}
\newcommand{\M}{{\mathcal{M}}}
\newcommand{\HM}{{\mathcal{H}}}
\newcommand{\zug}[1]{\langle #1  \rangle}
\newcommand{\set}[1]{\left\{ #1 \right\}}
\newcommand{\stam}[1]{}
\newcommand{\overbar}[1]{\mkern 1.5mu\overline{\mkern-1.5mu#1\mkern-1.5mu}\mkern 1.5mu}
\newcommand{\pacman}{%
    \begin{tikzpicture}[scale=0.125] % Adjust the scale as needed
        % Pacman's body outline
        \draw[thick] (0,0) -- (30:1) arc (30:330:1) -- cycle;

        % Pacman's body
        \fill[yellow] (0,0) -- (30:1) arc (30:330:1) -- cycle;

        % Eye
        \fill[black] (0.4,0.5) circle (0.125);
    \end{tikzpicture}%
}
\newcommand{\ghost}[1]{\tikz[baseline=.1em,scale=.4]{
  \draw [fill=#1] (0,0) -- (0,.5) arc (+180:0:.3) -- (.6,0) --
  (.5,.15) -- (.4,0) -- (.3,.15) -- (.2,0) -- (.1,.15) -- cycle;
    \coordinate (eye) at (180:.03);
    \foreach \x in {.17,.43}{
      \fill[white] (\x,.5) circle[radius=.1];
      \fill[black] (\x,.5) ++(eye) circle[radius=.05];
    }
}}
\definecolor{gold}{rgb}{0.83, 0.69, 0.22}
\newcommand{\ghosttile}[1]{\tikz[baseline=.1em,scale=.4]{
    % Fill with pattern
    \fill[pattern=falling_tiles,pattern color=red] (0,0) -- (0,.5) arc (+180:0:.3) -- (.6,0) --
    (.5,.15) -- (.4,0) -- (.3,.15) -- (.2,0) -- (.1,.15) -- cycle;
    % Draw dotted outline
    \draw[red] (0,0) -- (0,.5) arc (+180:0:.3) -- (.6,0) --
    (.5,.15) -- (.4,0) -- (.3,.15) -- (.2,0) -- (.1,.15) -- cycle;
}}
\newcommand{\fun}[1]{\ensuremath{\mathopen{}\mathclose\bgroup\left(#1\aftergroup\egroup\right)}}
\newcommand{\tuple}[1]{\zug{#1}}
\newcommand{\Reals}{\ensuremath{\mathbb{R}}}
\newcommand{\hmdp}{\ensuremath{\mathcal{H}}}
\newcommand{\condition}[1]{\ensuremath{\mathds{1}\set{{#1}}}}
\newcommand{\abs}[1]{\ensuremath{\left| #1 \right|}}
\newcommand{\norm}[1]{\ensuremath{\left\| #1 \right\|}}
\newcommand{\expect}{\ensuremath{\mathbb{E}}}
\newcommand{\Prob}{\ensuremath{\mathbb{P}}}
\newcommand{\expectedsymbol}[1]{\expect_{#1\,}}
\newcommand{\expected}[2]{\expect_{#1}\left[#2\right]}
\newcommand{\sampledot}{\ensuremath{{\cdotp}}}
\newcommand{\distributionssymbol}{\ensuremath{\Delta}}
\newcommand{\distributions}[1]{\ensuremath{\distributionssymbol\fun{#1}}}
\newcommand{\measurableset}{\ensuremath{\mathcal{X}}}
\newcommand{\support}[1]{\text{supp}\fun{#1}}
\newcommand{\divergence}{\ensuremath{\mathcal{D}}}
\newcommand{\mdp}{\ensuremath{\mathcal{M}}}
\newcommand{\car}{\ensuremath{\scaleto{\rotatebox[origin=c]{90}{$\circlearrowright$}}{1.4ex}}}
\newcommand{\selfloop}[2]{\ensuremath{#1^{\nobreak\hspace{.065em} \car #2 }}}
\newcommand{\istates}{\ensuremath{\mathcal{S}}}
\newcommand{\actions}{\ensuremath{\mathcal{A}}}
\newcommand{\transitionfn}{\ensuremath{\mathbf{P}}}
\newcommand{\probtransitions}{\ensuremath{\transitionfn}}
\newcommand{\mdpI}{\ensuremath{\mathbf{I}}}
\newcommand{\mdptuple}{\tuple{\istates, \actions, \transitionfn, \mdpI}}
\newcommand{\istate}{\ensuremath{s}}
\newcommand{\action}{\ensuremath{a}}
\newcommand{\pathdistribution}[2]{\ensuremath{{\textnormal{Pr}}^{#1}_{#2}}}
\newcommand{\policy}{\ensuremath{\pi}}
\newcommand{\policystates}{\ensuremath{\mathcal{Q}}}
\newcommand{\policystate}{\ensuremath{q}}
\newcommand{\mealyaction}[1]{\ensuremath{#1_{\textnormal{a}}}}
\newcommand{\mealyupdate}[1]{\ensuremath{#1_{\textnormal{u}}}}
\newcommand{\mdppath}{\ensuremath{\rho}}
\newcommand{\objective}{\ensuremath{\mathbb{O}}}
\newcommand{\valuefn}[4]{\ensuremath{
    \ifthenelse{\equal{#4}{}}{
        \ifthenelse{\equal{#2}{}}{
            %\expected{\mdpI}
            V^{#1}_{\mdpI}
        }{
            %\expected{\mdpI}
            V^{#1}_{\mdpI}\fun{#2}
        }
    }{\ifthenelse{\equal{#4}{none}}{
        \ifthenelse{\equal{#2}{}}{
            %\expected{\mdpI}
            V^{#1}
        }{
            %\expected{\mdpI}
            V^{#1}\fun{#2}
        }
    }{
        \ifthenelse{\equal{#2}{}}{
            V^{#1}\fun{#4}
        }{
            V^{#1}\fun{#4, #2}
        }
    }
}}}
\newcommand{\val}[3]{\valuefn{#1}{#2}{#3}{}}
\newcommand{\values}[3]{\valinit{#1}{#2}{\discount}{#3}}
\newcommand{\latentvalues}[3]{\latentvalinit{#1}{#2}{\discount}{#3}}
\newcommand{\valinit}[4]{\valuefn{#1}{#2}{#3}{#4}}
\newcommand{\valinitfast}[1]{\valinit{\policy}{\objective}{\discount}{#1}}
\newcommand{\discount}{\gamma}
\newcommand{\eventually}[1]{\ensuremath{\Diamond #1}}
\newcommand{\reach}[1]{\ensuremath{\eventually{#1}}}
\newcommand{\reachavoid}[2]{\ensuremath{\objective(#1,#2)}}
\newcommand{\stationary}[1]{\ensuremath{\xi_{#1}}}
\newcommand{\sreset}{\ensuremath{\istate_{\text{reset}}}}
\newcommand{\sinit}{\sreset}
\newcommand{\reward}{\ensuremath{\mathit{rew}}\xspace}
\newcommand{\graph}{\ensuremath{\mathcal{G}}}
\newcommand{\vertices}{\ensuremath{\mathcal{V}}}
\newcommand{\vertex}{\ensuremath{v}}
\newcommand{\varvertex}{\ensuremath{u}}
\newcommand{\edges}{\ensuremath{E}}
\newcommand{\neighbors}[1]{\ensuremath{N\fun{#1}}}
\renewcommand{\planner}{\ensuremath{\mathrm{\Pi}}}
\newcommand{\controller}{\ensuremath{\tau}}
\newcommand{\out}[1]{\ensuremath{\textit{out}\fun{#1}}}
\newcommand{\room}{\ensuremath{R}}
\newcommand{\rooms}{\ensuremath{\mathcal{R}}}
\newcommand{\labelingfn}{\ensuremath{\ell}}
\newcommand{\directions}{\ensuremath{D}}
\newcommand{\direction}{\ensuremath{d}}
\newcommand{\exitfn}{\ensuremath{\mathcal{O}}}
\newcommand{\entrancefn}{\ensuremath{\mathcal{I}}}
\newcommand{\stationaryroom}{\ensuremath{\stationary{\latentpolicy_{\room, \direction}}^{\room_{\direction, \policy}}}}
\newcommand{\stationaryroomtrain}{\ensuremath{\stationary{\latentpolicy_{\room, \direction}}^{\room}}}
\newcommand{\refinedmdp}[1]{\ensuremath{\mdp_{\planner}^{#1}}}
\newcommand{\refinedstates}{\ensuremath{\istates_{\planner}}}
\newcommand{\refinedactions}{\ensuremath{\actions_{\planner}}}
\newcommand{\mdpplanner}[1]{\ensuremath{\refinedmdp{#1}}}
\newcommand{\istatesplanner}{\ensuremath{\refinedstates}}
\newcommand{\actionsplanner}{\ensuremath{\refinedactions}}
\newcommand{\transitionfnplanner}{\ensuremath{\transitionfn_{\planner}}}
\newcommand{\latenttransitionfnplanner}{\ensuremath{\latentprobtransitions_{\planner}}}
\newcommand{\mdpIplanner}{\ensuremath{\mdpI}_{\planner}}
\newcommand{\plannerobjective}{\ensuremath{\objective}}
\newcommand{\ldqn}{\ensuremath{L_{\text{DQN}}}}
\newcommand{\overbarit}[1]{\,\overline{\!{#1}}}
\newcommand{\latentmdp}{\ensuremath{\overbarit{\mdp}}}
\newcommand{\latentstates}{\ensuremath{\overbarit{\mathcal{\istates}}}}
\newcommand{\latentstate}{\ensuremath{\bar{\istate}}}
\newcommand{\latentaction}{\ensuremath{{\action}}}
\newcommand{\latentpolicy}{\ensuremath{\overbar{\policy}}}
\newcommand{\latentprobtransitions}{\ensuremath{\overbar{\transitionfn}}}
\newcommand{\latentmdpI}{\ensuremath{\overline{\mdpI}}}
\newcommand{\latentmdptuple}{\ensuremath{\langle{\latentstates, \actions, \latentprobtransitions, \latentmdpI \rangle}}}
\newcommand{\embed}{\ensuremath{\phi}}
\newcommand{\latentvaluefn}[4]{\ensuremath{
    \ifthenelse{\equal{#4}{}}{
        \ifthenelse{\equal{#2}{}}{
            %\expected{\mdpI}
            {\overbar{V}_{\latentmdpI}^{\scriptstyle #1}}
        }{
            %\expected{\mdpI}
            {\overbar{V}_{\latentmdpI}^{\scriptstyle #1}\fun{#2}}
        }
    }{\ifthenelse{\equal{#4}{none}}{
        \ifthenelse{\equal{#2}{}}{
            %\expected{\mdpI}
            {\overbar{V}^{\scriptstyle #1}}
        }{
            %\expected{\mdpI}
            {\overbar{V}^{\scriptstyle #1}\fun{#2}}
        }
    }{
        \ifthenelse{\equal{#2}{}}{
            \overbar{V}^{\scriptstyle #1}\fun{#4}
        }{
            \overbar{V}^{\scriptstyle #1}\fun{#4, #2}
        }
    }
}}}
\newcommand{\latentvalinit}[4]{\latentvaluefn{#1}{#2}{#3}{#4}}
\newcommand{\localtransitionloss}[1]{\transitionloss^{}}
\newcommand{\transitionloss}{L_{\probtransitions}}
\newcommand{\localtransitionlossupper}[1]{{L}_{\probtransitions}^{\uparrow}}
\newcommand{\latententrancefn}{\ensuremath{\overline{\entrancefn}}}
\newcommand{\entranceloss}{\ensuremath{L_{\entrancefn}}}
\newcommand{\error}{\ensuremath{\varepsilon}}
\newcommand{\varerror}{\ensuremath{\zeta}}
\newcommand{\proberror}{\ensuremath{\delta}}
\newcommand{\localtransitionlossapprox}[1]{\widehat{L}_{\probtransitions}^{#1}}
\newcommand{\stationaryapprox}{\ensuremath{\widehat{\xi}_{\text{reset}}}}
\colorlet{graphcolor}{black!50!green}
\colorlet{mdpcolor}{blue}
\definecolor{amaranth}{rgb}{0.7, 0.17, 0.9}
\newcommand{\smallparagraph}[1]{\noindent\textbf{#1}}
\theoremstyle{definition}
\newtheorem{theorem}{Theorem}
\newtheorem{lemma}{Lemma}
\newtheorem{property}{Property}
\theoremstyle{remark}
\newtheorem{example}{Example}
\gdef\@copyrightpermission{
  \begin{minipage}{0.2\columnwidth}
   \href{https://creativecommons.org/licenses/by/4.0/}{\includegraphics[width=0.90\textwidth]{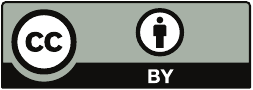}}
  \end{minipage}\hfill
  \begin{minipage}{0.8\columnwidth}
   \href{https://creativecommons.org/licenses/by/4.0/}{This work is licensed under a Creative Commons Attribution International 4.0 License.}
  \end{minipage}
  \vspace{5pt}
}
\newcommand{\highlight}[1]{#1}%{{\color{blue!45!red}#1}}
\newcommand{\highlightt}[1]{#1}%{{\color{blue!45!red}#1}}
\title{\highlight{Composing Reinforcement Learning Policies, with~Formal~Guarantees}}
\keywords{Planning and Reasoning under Uncertainty; Controller Synthesis; Model Checking; Representation Learning; Reinforcement Learning}
\author{Florent Delgrange \orcidlink{0000-0003-2254-0596}}
\affiliation{
  \institution{Vrije Universiteit Brussel}
  \city{Brussels}
  \country{Belgium}}
\email{florent.delgrange@vub.be}
\author{Guy Avni \orcidlink{0000-0001-5588-8287}}
\affiliation{
  \institution{University of Haifa}
  \city{Haifa}
  \country{Israel}}
\email{gavni@cs.haifa.ac.il}
\author{Anna Lukina \orcidlink{0000-0001-9525-0333}}
\affiliation{
  \institution{TU Delft}
  \city{Delft}
  \country{The Netherlands}}
\email{a.lukina@tudelft.nl}
\author{Christian Schilling \orcidlink{0000-0003-3658-1065}}
\affiliation{
  \institution{Aalborg University}
  \city{Aalborg}
  \country{Denmark}}
\email{christianms@cs.aau.dk}
\author{Ann Now\'e \orcidlink{0000-0001-6346-4564}}
\affiliation{
  \institution{Vrije Universiteit Brussel}
  \city{Brussels}
  \country{Belgium}}
\email{ann.nowe@vub.be}
\author{Guillermo A. P\'erez \orcidlink{0000-0002-1200-4952}}
\affiliation{
  \institution{University of Antwerp}
  \city{Antwerp}
  \country{Belgium}}
\email{guillermo.perez@uantwerpen.be}
\begin{abstract}
\highlight{
We propose a novel framework to controller design in environments with a two-level structure: a known high-level graph (``map'') in which each vertex is populated by a Markov decision process, called a ``room''. The framework ``separates concerns'' by using different design techniques for low- and high-level tasks. 
We apply reactive synthesis for high-level tasks: given a specification as a logical formula over the high-level graph and a collection of low-level policies obtained together with ``concise'' {\em latent} structures, we construct a ``planner'' that selects which low-level policy to apply in each room. 
We develop a \highlightt{reinforcement learning}
procedure to train low-level policies on latent structures, which unlike previous approaches, circumvents a model distillation step. We pair the policy with probably approximately correct guarantees on its performance and on the abstraction quality, and lift these guarantees to the high-level task. 
These formal guarantees are the main advantage of the framework. Other advantages include scalability (rooms are large and their dynamics are unknown) and reusability of low-level policies. %, and easing the engineering burden (reward engineering is only done locally and high-level synthesis requires intuitive specifications). 
We demonstrate feasibility in challenging case studies where an agent navigates environments with moving obstacles \highlightt{and visual inputs}.
}
\stam{%prev abstract
We propose a novel framework to controller design in environments with a two-level structure: a high-level graph in which each vertex is populated by a  Markov decision process, called a ``room'', with several low-level objectives. 
We proceed as follows. 
First, we apply deep reinforcement learning (DRL) to obtain low-level policies for each room and objective.
Second, we apply reactive synthesis to obtain a planner that selects which low-level policy to apply in each room. 
Reactive 
synthesis refers to constructing a planner for a given model of the environment that satisfies a given objective (typically specified as a temporal logic formula) by design. 
The main advantage of the framework is formal guarantees. 
In addition, the framework enables a ``separation of concerns'': low-level tasks are addressed using DRL, which enables scaling to large rooms of unknown dynamics, reward engineering is only done locally, and policies can be reused, whereas users can specify high-level tasks intuitively and naturally.
The central challenge in synthesis is the need for a model of the rooms. We address this challenge by developing a DRL procedure to train concise ``latent'' policies together with latent abstract rooms, both paired with probably approximately correct (PAC) guarantees on performance and abstraction quality. Unlike previous approaches, this circumvents a model distillation step.
We demonstrate feasibility in a case study involving agent navigation in an environment with moving obstacles.
}
\end{abstract}
\begin{document}

\maketitle

\section{Introduction}\label{sec:introduction}
We consider the fundamental problem of constructing control {\em policies} for environments modeled as {\em Markov decision processes} (MDPs) with formal guarantees. 
We deal with long-horizon tasks in environments with prior structural knowledge: the input to our method is a (high-level) {\em map} given as a \emph{graph}, where each vertex is populated by an MDP with unknown dynamics called a {\em room}, and the long-horizon task is given on the map. 
Such settings arise naturally. 
As a running example, a robot delivers a package in a warehouse with moving obstacles (e.g., forklifts, workers, or other robots); while it is infeasible to model the low-level interactions of the agent with its immediate surroundings, modeling the high-level map of the rooms in the warehouse requires minimal engineering effort. 
\highlight{%
We list other examples of two-level domains with prior knowledge of the high-level architecture and in which our method is relevant: 
(i)~\emph{routing}~\cite{DBLP:conf/cav/JungesS22}: the network topology, e.g., connection between routers, is often known but modeling low-level routing decisions is intricate; 
(ii)~\emph{skill graphs}~\cite{DBLP:conf/icml/BagariaS021} of agents, e.g., ``grab a key'' and ``open a door'', and their dependencies are naturally modeled as a graph; (iii)~\emph{software systems}~\cite{DBLP:conf/sosp/RyzhykCKSH09}, in particular \emph{probabilistic programs}~\cite{DBLP:conf/cav/JungesS22}: each vertex represents a software component (an MDP in a probabilistic program) and edges capture dependencies or interactions.
}%

Our framework ``separates concerns'' by using different design techniques for low- and high-level tasks with complementary benefits and drawbacks.
For high-level tasks, we apply {\em reactive synthesis}~\cite{PnueliR89}, which constructs an optimal policy \emph{based on a model of the environment} and \emph{a specification as a logical formula}, yielding a {\em guarantee that the policy satisfies the specification}. Logic is an \emph{intuitive and natural specification language}. The reliance on an explicit environment model hinders scalability and application to domains with partially-known dynamics. 
\highlightt{Hence, we solve low-level tasks via {\em reinforcement learning} (RL~\cite{SuttonB98}).
In particular, we may use \emph{deep} RL (DRL~\cite{MK+13})}, which is successful in domains of \emph{high-dimensional feature spaces with unknown dynamics}.
However, RL generally lacks formal guarantees and struggles with long-term objectives, where one needs to deal with the notorious problem of sparse rewards~\cite{DBLP:journals/inffus/LadoszWKO22} by guiding the agent
~\cite{DBLP:conf/ijcai/LiuZ022}, which in turn poses an engineering effort. 
\begin{figure*}[t]
    \begin{subfigure}[c]{.2\linewidth}
        \centering
        \includegraphics[width=\linewidth]{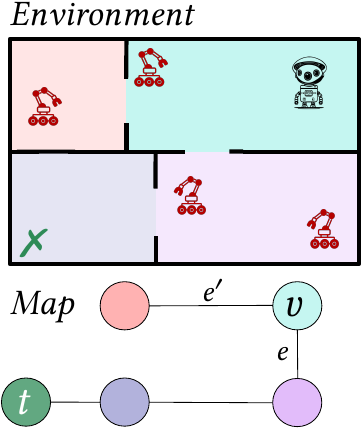}
        \caption{%
        \highlight{%
            Two-level environment partitioned into rooms.
            }%
        }%
        \label{fig:framework-a}
    \end{subfigure}
    \hfill
    \begin{subfigure}[c]{.35\linewidth}
        \centering
        \includegraphics[width=.95\linewidth]{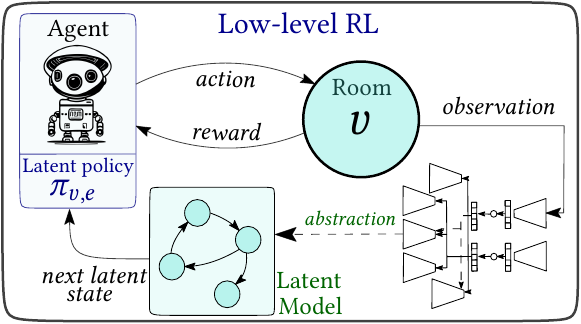}
        \caption{Latent models and policies are learned conjointly with the RL process. Both are paired with PAC guarantees on the abstraction quality of the model and the performance of the policy.}
        \label{fig:intro-1}
    \end{subfigure}
    \hfill
    \begin{subfigure}[c]{.41\linewidth}
        \centering
        \includegraphics[width=\linewidth]{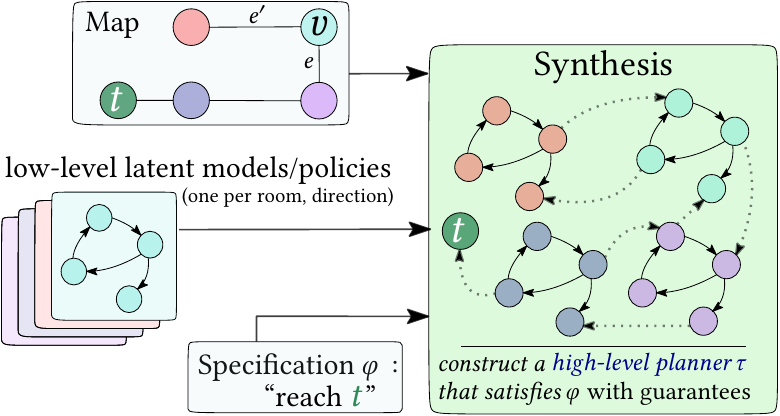}
        \caption{Planner synthesis.}
        \label{fig:intro-2}
    \end{subfigure}
        \vspace{-.5em}
    \caption{%
        \highlight{%
        \textbf{(a)}~Environment in which the agent (top-right) needs to reach the target (green, bottom-left) while avoiding moving adversaries (red).
        The target appears in the map as a dedicated vertex
        {\color{green!50!black}$t$}.
        }%
        \textbf{(b)}~The agent is trained to exit \emph{each room}, in \emph{every possible direction}. Training is performed in \emph{parallel simulations}. An abstraction of the environment is learned via neural networks, yielding a latent model for each room.
        Simultaneously, a policy is learned via RL on the learned latent representation, which guarantees the agent's low-level behavior conformity through PAC bounds.
        \emph{More details in Sect.~\ref{sec:low_level}}.
        \textbf{(c)}~Given a high-level description of the environment, a collection of latent models and policies for each room, and the specifications, synthesis outputs a high-level planner guaranteed to satisfy the specifications.
    The challenge resides in the way the low-level components are merged to apply synthesis while maintaining their guarantees.
    \emph{More details 
    in Sect.~\ref{sec:high_level}}.
    }
    \label{fig:framework}
    \Description{Three depictions of the framework proposed in the paper: first, a maze and the corresponding graph representation for the map; second, the usual agent-environment loop showing a neural network observing the environment and then abstracting it into a latent model updated and feeding its output (latent) observation back to the agent; finally, a depiction of how the map feeds into the synthesis procedure that can string together the low-level latent policies to satisfy a specified requirement.}
    \vspace{-.5em}
\end{figure*}

\highlight{%
\paragraph{Framework (Fig.~\ref{fig:framework})} 
We output a {\em two-level controller} for an agent, consisting of a collection of low-level policies $\Pi$ and a high-level {\em planner} $\tau$. When the agent enters a room corresponding to a vertex $v$ of the map, the planner chooses an outgoing edge $e$ and deploys the associated policy $\pi_{v, e} \in \Pi$. The agent follows $\pi_{v, e}$ until it exits the room. For example, $e$ can model a door between two rooms. Note that the agent may exit from direction $e' \neq e$. It is thus key to have an estimate of the success probability of $\pi_{v, e}$ when designing~$\tau$.

We obtain low-level policies by developing a novel \highlightt{RL} procedure that is run locally in each room $v$ and outputs {\em latent policies} $\pi_{v, e}$, for each direction $e$. These policies are represented on a concise model of the room (Fig.~\ref{fig:intro-1}).
Again, we only assume simulation access to the rooms;  {\em the latent policies are learned} and 
paired with probably approximately correct (PAC) performance guarantees.

Finally, given a map, a collection of policies $\Pi$, and a high-level specification $\varphi$ given as a logical formula over the map, we design an algorithm to find a planner $\tau$ that optimizes for $\varphi$ while lifting the guarantees on the policies in $\Pi$ to $\tau$ (Fig.~\ref{fig:intro-2}).
}%

\paragraph{Advantages.} We point to the advantages of the framework. First and foremost, it \emph{provides guarantees on the operation of the controller}. A key design objective is to ease the engineering burden: reward engineering is only done locally (for each room), and the high-level map and tasks are given in an intuitive specification language. Second, our framework enables \emph{reusability}: a policy~$\pi_{v, e}$, including its guarantees, is reusable across similar rooms $v'$ and when the high-level task or structure changes. Finally, our framework offers a remedy for the notorious challenge of sparse rewards in RL.

\paragraph{Case study.}
We complement our theoretical results with
\highlight{%
illustrations of feasibility in two case studies,
}%
where an agent must reach a distant location while avoiding mobile adversarial obstacles with stochastic dynamics. 
\highlight{%
The first case study is a grid world; the second case study is a vision-based Doom environment~\cite{DBLP:conf/cig/KempkaWRTJ16}.
}
DQN~\citep{DBLP:journals/nature/MnihKSRVBGRFOPB15} struggles to find a policy in our domain, even with reward shaping. 
In the rooms, we demonstrate our novel procedure for training concise latent policies directly. 
We synthesize a planner based on the latent policies and show the following results.
First, our two-level controller achieves high success probability, demonstrating that our approach overcomes the challenge of sparse rewards.
Second, the values predicted in the latent model are close to those observed, demonstrating the quality of our automatically constructed model.

\highlight{%
\paragraph{Contributions.}
We outline our key theoretical contributions.
\begin{enumerate}[(i)]
\item \emph{Learning guarantees for low-level policies.} 
We tie between the {\em values} (the 
probability that the low-level objective is satisfied)
of the latent model and that of the environment via a loss function (Thm.~\ref{thm:initial-value-bound}) and demonstrate that PAC bounds can be computed for these value differences  (Thm.~\ref{thm:init-values-pac-learnable}).
\item \emph{Guarantees on the synthesized controller.} We prove memory bounds on the size of an optimal high-level planner (Thm.~\ref{thm:mdp-planner-equiv}). Moreover, we show that an optimal planner can be obtained by solving an MDP whose size is proportional to the size of the map, i.e., disregarding the size of the rooms (Thm.~\ref{thm:succint-mdp-plan}).
\item \emph{Unified learning and synthesis guarantees.} We show that the learning guarantees for the low-level policies can be lifted to the two-level controller.
Specifically, minimizing the loss function to learn an abstraction of each room independently (and in parallel) guarantees that the values obtained under the two-level controller in the abstraction closely match those obtained in the true two-level environment (Thm.~\ref{thm:lifting-guarantees}).
\end{enumerate}
}%

\smallparagraph{Related work.}
\highlight{%
Hierarchical RL (HRL)~\cite{DBLP:journals/csur/PateriaSTQ21} (see also the {\em option} framework~\cite{SuttonPS99}) is an approach that outputs two-level controllers. 
Our approach is very different despite similarity in terms of outputs and motivation (e.g., both enable reusability and modularity). The most significant difference is that \emph{our framework provides guarantees}, which \emph{HRL generally lacks}. In our framework, high-level planners are synthesized based on prior knowledge of the environment (the map) and only after the low-level policies are learned. In HRL, both low-level policies and two-level controllers can be learned concurrently and with no prior knowledge.
Another difference is that in HRL, the low-level objectives generally need to be learned, whereas in our approach they are known.
We argue that the ``separation of concerns'' in our framework eases the engineering burden while HRL notoriously requires significant engineering efforts.
\highlightt{
Finally, unlike option-inspired approaches, where the integration of high- and low-level components results in a ``semi-''Markovian process, our framework ensures that a small amount of memory for the high-level planner is sufficient to enable the agent to operate within a \emph{fully} Markovian process. This facilitates the design of planning and synthesis solutions at the highest level of the environment.
}%

{\em Distillation}~\cite{HVD15} is an established approach: a neural network (NN) is trained then {\em distilled} into a concise {\em latent} model.
Verification of NN controllers is challenging, e.g.,~\cite{ASK21}. Verification-based distillation is a popular approach in which verification is applied to a latent policy, e.g.,~\cite{ernst2005tree,DBLP:conf/aaai/DelgrangeN022,BPS18,BGP21,CJT21}. 
In contrast, {\em we study controller-synthesis based on latent policies.} To our knowledge, only~\cite{AA+20} develops a synthesis based on distillation approach, but with no guarantees. 
In addition, {\em we develop a novel training procedure that trains a latent policy directly and circumvents the need for model distillation}. 
We stress that the abstraction is learned unlike~\cite{RoderickGT18,JothimuruganBA21}.

CLAPS~\cite{vzikelic2023compositional} is a recent approach that outputs a two-level controller with correctness guarantees. 
The technique to obtain guarantees is very different from ours. Low-level policies are trained using~\cite{ZLHC23}, which accompanies a policy with a super-martingale on the environment states that gives rise to reach-avoid guarantees. 
On the other hand, our policies are given on a learned latent model, which we accompany with PAC guarantees on the quality of the abstraction. 
We point to further differences: they assume prior knowledge of the transitions whereas we only assume simulation access, their policy is limited to be stationary and deterministic whereas our policies are general, and their high-level structure arises from the logical specification whereas ours arises from the structure of the environment.

It is known that safety objectives in RL are intractable~\cite{AlurBBJ22}. 
{\em Shielding}~\cite{AlshiekhBEKNT18,KonighoferBEP22,BrorholtJLLS23,BrorholtLS25} circumvents the difficulty of ensuring safety during training by monitoring a policy and blocking unsafe actions. Shielding has been applied to low-level policies in a hierarchical controller~\cite{XiongAJ22}. The limitation of this approach is that interference with the trained policy might break its guarantees. 
LTL objectives add intractability~\cite{YangLC21} to the already complex hierarchical scenarios in RL~\cite{KulkarniNST16} and only allow for PAC guarantees if the MDP structure is known~\cite{FuT14}.
Reactive synthesis is applied in~\cite{PrakashEDSJ23} to obtain low-level controllers, but scalability is a shortcoming of synthesis. 
Approaches encouraging but not ensuring safety use constrained policy optimization~\cite{AchiamHTA17}, safe padding in small steps~\cite{HasanbeigAK20}, time-bounded safety~\cite{GiacobbeHKW21}, safety-augmented MDPs~\cite{SootlaCJWMWA22}, differentiable probabilistic logic~\cite{YangMRR23}, or distribution sampling~\cite{BadingsRAPPSJ23}.
}

\section{Preliminaries}
\label{sec:prelim}

\paragraph{Markov Decision Processes (MDPs).}
Let~$\distributions{\measurableset}$ denote the set of distributions on~$\measurableset$.
An {\em MDP} is a tuple $\mdp = \mdptuple$ with states~$\istates$, actions~$\actions$, transition function~$\transitionfn \colon\istates \times \actions \to \distributions{\istates}$, and initial distribution~$\mdpI \in \distributions{\istates}$.
\highlight{%
An agent interacts with $\mdp$ as follows.
At each step, the agent is in some state $\istate \in \istates$. It performs an action $\action \in \actions$ and subsequently goes to the next state according to the transition function: $\istate' \sim \probtransitions\fun{\sampledot \mid \istate, \action}$.
}
A
\emph{policy} $\policy \colon\istates \to \distributions{\actions}$ 
\highlight{%
prescribes which action to choose at each step and
}%
gives rise to a distribution over \emph{paths} of~$\mdp$, denoted by~$\pathdistribution{\mdp}{\policy}$. The probability of finite paths is defined inductively. Trivial paths~$\istate \in \istates$ have probability~$\pathdistribution{\mdp}{\policy}\fun{\istate} = \mdpI\fun{\istate}$.
Paths~$\mdppath = \istate_0, \istate_1, \ldots, \istate_n$ have probability
$\pathdistribution{\mdp}{\policy}\fun{\istate_0, \istate_1, \ldots, \istate_{n-1}} \cdot \expectedsymbol{\action \sim \policy\fun{\sampledot \mid \istate_{n - 1}}}\transitionfn\fun{\istate_n \mid\istate_{n-1},\action}$. 
\paragraph{\highlight{Limiting behaviors in MDPs}}
The \highlight{\emph{transient measure}~\cite{BK08}}
\[
\highlight{\mu_{\policy}^{n}}\fun{\istate' | \istate} =  \Prob_{\mdppath \sim \pathdistribution{\mdp}{\policy}}[\rho \in \set{\istate_0, \dots, \istate_n | \istate_n = \istate'} \mid \istate_0 = \istate ]
\]
gives the probability of visiting
each state $\istate'$ after exactly $n$ steps starting from~$\istate \in \istates$.
Under policy~$\policy$, $C \subseteq \istates$ is a \emph{bottom strongly connected component} (BSCC) of~$\mdp$ if (i)~$C$ is a maximal subset satisfying $\highlight{\mu_{\policy}^n}\fun{\istate'\mid \istate} > 0$  for any $\istate, \istate' \in C$ and some~$n \geq 0$, and (ii) $\expectedsymbol{\action \sim \policy\fun{\sampledot \mid \istate}} \probtransitions\fun{C \mid \istate, \action}=1$ for all~$\istate \in \istates$.
MDP~$\mdp$ is \emph{ergodic} if, under any stationary policy~$\policy$, the set of reachable states
\[
\highlight{\textit{Reach}\fun{\mdp, \policy}} = 
\set{\istate \in \istates \mid \exists n \geq 0, \expectedsymbol{\istate_0 \sim \mdpI}\, \highlight{\mu_{\policy}^{n}}\fun{\istate \mid \istate_0} > 0}
\vspace{-.25em}
\]
consist of a unique aperiodic BSCC.
Then, for~$\istate \in \istates$, the \emph{stationary distribution} of~$\mdp$ under~$\policy$ is given by $\stationary{\policy} = \lim_{n \to \infty} \highlight{\mu_{\policy}^n}\fun{\sampledot \mid \istate}$.

\paragraph{Objectives and values.}~%
A qualitative {\em objective} is a set of infinite paths~$\objective \subseteq \istates^\omega$.
For~$B, T \subseteq \istates$,
we consider {\em reach-avoid objectives} $\reachavoid{T}{B} = \set{s_0, s_1, \ldots \mid \exists i. \, \istate_i \in T \text{ and } \forall j \leq i, \, \istate_j \notin B}$
(or just~$\objective$ if clear from context) where the goal is to reach a ``target'' in~$T$ while avoiding the ``bad'' states~$B$.
Henceforth, fix a \emph{discount factor}
$\discount \in \mathopen(0, 1\mathclose)$. 
In this work, we consider \emph{discounted} value functions (see, e.g., ~\cite{DBLP:conf/icalp/AlfaroHM03}).
The {\em value} of any state $\istate \in \istates$ for policy~$\policy$ w.r.t.\ objective~$\objective$ is denoted by $\values{\policy}{\objective}{\istate}$ and corresponds to the probability of satisfying $\objective$ from state $\istate$ as $\discount$ goes to one, i.e., $\lim_{\discount \to 1} \values{\policy}{\objective}{\istate} = \Prob_{\mdppath \sim \pathdistribution{\mdp}{\policy}}\left[\mdppath \in \objective \mid \istate_0 = \istate \right]$.
In particular, for the reach-avoid objective~$\reachavoid{T}{B}$, 
$\values{\policy}{\objective}{\istate}$ corresponds to the 
discounted probability of visiting $T$ for the first time while avoiding $B$, i.e.,
$\values{\policy}{\objective}{\istate} = \expectedsymbol{\mdppath \sim \pathdistribution{\mdp}{\policy}}\left[  \sup_{i \geq 0} \discount^{i} \cdot \condition{\istate_i \in T \wedge \forall j \leq i,\,\istate_j \not\in B} \mid \istate_0 = \istate\right ]$, where $\istate_i$, $\istate_j$ are respectively the~${i^\text{th}}, {j^\text{th}}$ state of $\mdppath$.
We are particularly interested in the values obtained from the beginning of the execution, written $\values{\policy}{\objective}{} = \expected{\istate_0 \sim \mdpI}{\values{\policy}{\objective}{\istate_0}}$. 
We may sometimes omit~$\objective$ and simply write $\values{\policy}{}{none}$ and $\values{\policy}{}{}$.

\emph{Reinforcement learning}~obtains a policy in a model-free way.
Executing action~$\action_i$ in state~$\istate_i$ and transitioning to~$\istate_{i + 1}$ incurs a 
reward $r_{i} = \reward(\istate_i, \action_i, \istate_{i + 1}),$ computed via a \emph{reward function} $\reward \colon\istates \times \actions \times \istates \to \Reals$.
An RL agent's goal is to learn a policy~$\policy^*$ maximizing the return $\expected{\mdppath \sim \Pr^{\mdp}_{\policy^*}}{\sum_{i \geq 0}\discount^i r_i}$.
The agent is trained by interacting with the environment in episodic simulations, each ending in one of three ways: success, failure, or an eventual reset.

\begin{figure*}[t]
    \begin{subfigure}[c]{.65\linewidth}
        \centering
        \includegraphics[width=\linewidth,height=43mm,keepaspectratio]{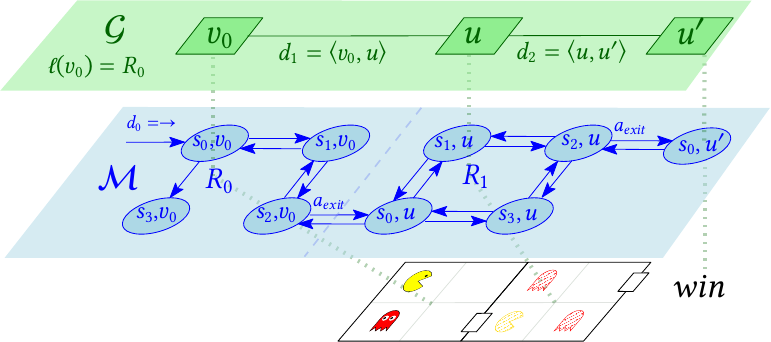}
        \caption{A two-level model of a simple grid-world environment.\\$\,$}
        \label{fig:example-hierarchy}
    \end{subfigure}
    \hfill
    \begin{subfigure}[c]{.345\linewidth}
        \centering
        \includegraphics[width=\linewidth,height=30mm,keepaspectratio]{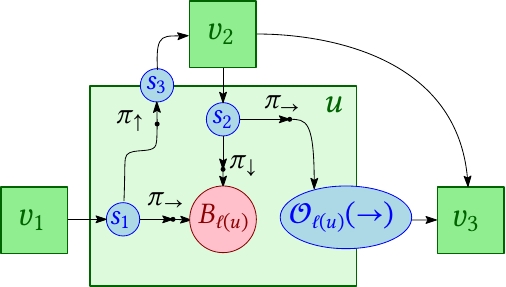}
        \caption{A two-level model for which an optimal planner requires memory, here flattened in $2$D.}
        \label{fig:example-controller}
    \end{subfigure}
        \vspace{-1.5em}
        \caption{(a)~Top:~The high-level graph~$\color{graphcolor}{\graph}$ with two rooms $\color{mdpcolor}{\room_0} = \labelingfn(\vertex_0)$ and $\color{mdpcolor}{\room_1} = \labelingfn(\varvertex)$. 
    Middle:~Part of the explicit MDP for the bottom layer; e.g., the MDP $\color{mdpcolor}{\room_0}$ contains $16$ states.
    Traversing the edge $\tuple{\tuple{\istate_2,\vertex_0}, \tuple{\istate_0, \varvertex}}$ corresponds to exiting $\color{mdpcolor}{\room_0}$ and entering $\color{mdpcolor}{\room_1}$ from direction $d_1 = \tuple{\vertex_0, \varvertex}$. 
The goal of~\protect\pacman \ is to reach $\varvertex'$ by exiting the room $\color{mdpcolor}{\room_1}$ from direction $d_2 = \tuple{\varvertex, \varvertex'}$ while avoiding the moving adversaries~\protect\ghost{red}. For $i \in \set{0,1}$, the entrance function~$\color{mdpcolor}{\entrancefn_{\room_i}}$ models the distribution from which the initial location of~\protect\ghost{red} in $\room_i$ is drawn.
    (b)~A room with four policies for a planner to choose from; e.g., $\pi_\rightarrow(\sampledot \mid s_1)$ leads to $B_{\labelingfn(\varvertex)}$ and $\pi_\uparrow(\sampledot \mid s_1)$ leads to $s_3$. Note that, while these are deterministic policies, in general, the policies in rooms are probabilistic. 
    }
    \Description{Two figures describing how a high-level map yields a Markov decision process. The first one shows the abstract map view on top of actual states and transitions of the MDP to show how each vertex of the map gives rise to multiple states based on the direction in which the room is entered. The second picture shows a very simple situation in which playing optimally requires using memory to recall in which direction the room was entered.}
\end{figure*}

\section{Problem Formulation}\label{sec:problem}
In this section, we 
formally model a two-level environment and state the problem of two-level controller synthesis.
The environment MDP is
a high-level \emph{map}: an undirected graph whose vertices are associated with ``low-level'' MDPs called {\em rooms} (Fig.~\ref{fig:example-hierarchy}). A two-level controller works
as follows. In each room, we assume access to a set of {\em low-level policies}, each optimizing a local (room) reach-avoid objective (Fig.~\ref{fig:example-controller}). When transitioning to a new room, a high-level {\em planner} selects the next low-level policy.

\paragraph{Two-level model.}~%
A \emph{room} $\room = \tuple{\istates_\room, \actions_\room, \transitionfn_\room, \directions_\room, \entrancefn_\room, \exitfn_\room}$ consists of~$\istates_\room$, $\actions_\room$, $\transitionfn_\room$ as in an MDP, a set of {\em directions}~$\directions_\room$, an {\em entrance function} $\entrancefn_\room \colon \directions_\room \rightarrow \distributions{\istates_\room}$ taking a direction from which the room is entered and producing an initial distribution over states, and an {\em exit function} $\exitfn_\room \colon \directions_\room \rightarrow 2^{\istates_\room}$ returning a set of \emph{exit states} from the room in a given direction $\direction \in \directions_\room$.
States are assigned to at most one exit, i.e., if $\istate \in \exitfn_{\room}\fun{\direction}$ and $\istate \in \exitfn_{\room}\fun{\direction'}$, then $\direction' = \direction$.

    \begin{figure}[H]
    \vspace{-0.7em}
        \centering
        \includegraphics[width=.56\linewidth]{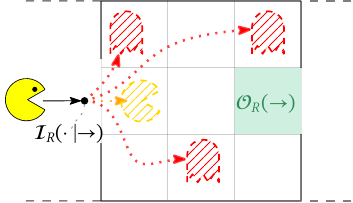}
        \vspace{-.5em}
        \caption{Small room in a grid world.}\label{fig:grid-room}
    \vspace{-1em}
    \Description{The figure shows a pacman entering a room from the left toward the right direction. The random location of a red ghost is shown too, as well as the desired exit through the right side.}
    \end{figure}
\begin{example}[Room]\label{example:room}
    Consider the grid world of Fig.~\ref{fig:grid-room} as a room $\room$ populated by an adversary \ghost{red}\ .
    One can encode the position of \ghost{red}\ in $\istates_{\room}$ and its behaviors through $\transitionfn_{\room}$.
    \highlight{%
    This can be achieved by, e.g., considering states of the form $\istate = \tuple{\fun{x_1, y_1}, \fun{x_2, y_2}} \in \istates_{\room}$ where $\fun{x_1, y_1}$ is the position of \pacman\ and $\fun{x_2, y_2}$ the one of \ghost{red}\ in the grid.
    }
    Note that the position of \ghost{red}\, depends on the direction from which the agent \pacman\ enters $\room$.
    The agent enters from the left in direction $\rightarrow$ to the states of $\room$ distributed according to the entrance function 
    {\raggedright $\entrancefn_{\room}\fun{\sampledot \mid \direction = \rightarrow}$} (the tiling patterns highlight its support). Precisely, while the agent \pacman\, enters (in a deterministic way) in the leftmost cell (yellow tiling), $\entrancefn_{\room}$ allows to (probabilistically) model the possible positions of \ghost{red}\, when entering the room (red tiling) from direction $d = \rightarrow$.
    When reaching the green area, depicting states from $\exitfn_{\room}\fun{\rightarrow}$, \pacman\, exits $\room$ by the right direction $\rightarrow$.
\end{example}

A \emph{map} is a graph $\graph = \tuple{\vertices, \edges}$ with vertices $\vertices$ and undirected edges $\edges \subseteq \vertices \times \vertices$.
The {\em neighbors} of $\vertex \in \vertices$ are $\neighbors{\vertex} = \{\varvertex \in \vertices \mid \allowbreak\tuple{\varvertex,\vertex} \in \edges\}$ and the outgoing edges from $v$ are $\out{v} = \set{e = \zug{v, u} \in \edges}$.
A {\em two-level model} $\HM = \zug{\graph, \rooms, \labelingfn, v_0, \tuple{d_0, d_1}}$ consists of a map~$\graph = \zug{\vertices, \edges}$, a set of rooms~$\rooms$, a labeling $\labelingfn \colon\vertices \rightarrow \rooms$ of each vertex~$\vertex \in \vertices$ with a room~$\labelingfn\fun{\vertex}$ and directions~$\directions_{\labelingfn\fun{\vertex}} = out(\vertex)$, an initial room~$v_0 \in \vertices$, 
and directions~$d_0, d_1 \in \out{\vertex_0}$ in which~$v_0$ is respectively entered and must be exited.

Fix a two-level model~$\hmdp = \zug{\graph, \rooms, \labelingfn, \allowbreak v_0, \allowbreak \tuple{d_0, d_1}}$.
Intuitively, the \emph{explicit MDP}~$\M$ corresponding to~$\hmdp$ is obtained by ``stitching'' MDPs~$\room\in\rooms$ corresponding to neighboring rooms (Fig.~\ref{fig:example-hierarchy}). Formally, $\mdp = \mdptuple$,
where
$\istates = \set{\tuple{\istate, \vertex} \colon \istate \in \istates_{\labelingfn\fun{\vertex}}, \vertex \in \vertices}$, $\actions = \bigcup_{\room \in \rooms} \actions_{\room} \cup \set{\action_{\mathit{exit}}}$.
The initial distribution~$\mdpI$ simulates starting in room~$\labelingfn(v_0)$ from direction $d_0$; thus, for each~$\istate \in \istates_{\labelingfn(v_0)}$, $\mdpI(\zug{\istate, v_0}) = \entrancefn_{\labelingfn(v_0)}(\istate \mid d_0)$.
The transitions $\transitionfn$ coincide with $\transitionfn_\room$ for non-exit states.
Let $\direction = \tuple{\vertex, \varvertex} \in \edges$ with $\vertex \in \neighbors{\varvertex}$; 
$\exitfn_{\room}\fun{\direction}$
are the exit states in room~$\room$ associated with $\vertex$ in direction $\direction$, and $\entrancefn_{\labelingfn\fun{\varvertex}}\fun{\sampledot \mid \direction}$ is the entrance distribution in~$\room$ associated with $\varvertex$ in direction $\direction$.
The successor state of $\istate \in \exitfn_{\room}\fun{\direction}$ follows~$\entrancefn_{\labelingfn\fun{\varvertex}}\fun{\sampledot \mid \direction}$ when $a_\mathit{exit}$ is chosen.
Each path $\mdppath$ in~$\mdp$ corresponds to a unique $\text{path}(\mdppath)$ in~$\graph$ traversing the rooms.

\paragraph{High-level reach and low-level reach-avoid objectives.}
The high-level reachability objective we consider is \highlight{ ``$\reach{T}$,''} where~$T \subseteq \vertices$ is a subset of vertices
in the graph of~$\hmdp$.
\highlight{%
Here, $\reach{T}$ is a temporal logic notation meaning “eventually visit the set $T$.”
}%
Formally, a path~$\mdppath$ in~$\M$ satisfies~$\reach{T}$ iff~$\text{path}(\mdppath)$ visits a vertex~$\vertex$ in~$T$.
The low-level safety objective is defined over states of the rooms in~$\rooms$. For each room~$\room$, let~$B_\room \subseteq \istates_\room$ be a set of ``bad'' states.
For room~$\room$ and direction~$d \in \directions_\room$, the reach-avoid objective $\objective^d_\room \in \istates_\room^*$ is $\{\istate_0, \dots, \istate_n \mid \istate_n \in \exitfn_\room(d) \allowbreak \linebreak[0]\text{ and } \istate_i \notin B_\room \text{ for all } i \leq n \}$, i.e., exit~$\room$ via~$d$ avoiding~$B_\room$.

\paragraph{High-level control.}~%
We define a \emph{high-level planner}~$\tau \colon\vertices^* \rightarrow E$ and a set of low-level policies~$\planner$ such that, for each room~$\room \in \rooms$ and a direction~$d \in \directions_\room$, $\planner$ contains a policy~$\policy_{\room, d}$ for the objective~$\objective_{\room}^d$.
The pair~$\policy = \zug{\tau, \Pi}$ is a {\em two-level controller} for~$\hmdp$,
defined inductively as follows. Consider the initial vertex~$\vertex_0 \in \vertices$.
First, the planner always chooses $\tau(v_0) = d_1$, thus
control in~$\labelingfn(\vertex_0)$ follows $\policy_{\labelingfn(\vertex_0), \direction_1}$.
Then, let~$\mdppath$ be a path in~$\hmdp$ ending in~$\istate \in \istates_\room$, for some room~$\room  = \labelingfn\fun{\vertex}$. 
If~$s$ is not an exit state of~$\room$, then control follows a policy~$\policy_{\room, \direction}$ with~$\direction = \tuple{\vertex, \varvertex}$ and~$\varvertex \in \neighbors{\vertex}$. 
If~$s$ is an exit state in direction~$\direction$ and~$\text{path}(\mdppath)$ ends in~$v$, i.e.,
$s \in \exitfn_\room(\direction)$,
then~$a_\mathit{exit}$ is taken in~$s$ and the next state is an initial state in~$\room' = \labelingfn(u)$ drawn from~$\entrancefn_{\room'}(\sampledot\mid\direction)$. 
The planner chooses a direction~$d' = \tau(\text{path}(\mdppath) \cdot u) \in \out{u}$ to exit~$\room'$.
Control of~$\room'$ proceeds with the low-level policy~$\policy_{\room', d'}$.
Note that~$\policy$ is a policy in the explicit MDP~$\mdp$.

\begin{problem}
Given
a two-level model~$\hmdp = \zug{\graph, \allowbreak \rooms, \allowbreak \labelingfn, v_0,\allowbreak \tuple{d_0, d_1}}$, discount factor~$\discount \in \mathopen(0, 1\mathclose)$,
high-level objective $\reach T$,
and low-level objectives $\{\objective^{\direction}_{\room} \mid \room \in \rooms, \direction \in \directions_{\room}\}$,
construct a two-level controller~$\policy = \zug{\tau, \Pi}$ maximizing the probability of satisfying the objectives.
\end{problem}

\section{Obtaining Low-Level RL policies}\label{sec:low_level}
There are
challenges in reasoning about RL policies%
\highlight{---especially those obtained via DRL, which are typically represented by large NNs}.
We develop a novel, unified \highlight{approach}
which outputs a latent model together with a concise policy.
\highlight{%
The idea is to learn a \emph{tractable} latent model for each room, where the values of the low-level objectives can be explicitly computed.
Each latent model is accompanied by \emph{probably approximately correct} (PAC) guarantees on their abstraction quality.
}%
We first focus on those guarantees.
\highlight{%
In the next section, we will then focus on how to synthesize a planner (with guarantees) based on these learned models and policies.
}%

\subsection{Quantifying the quality of the abstraction}\label{sec:quality-abstr} 
In this section, we fix an MDP environment $\mdp=\mdptuple$. 
A {\em latent model} abstracts a concrete MDP and is itself an MDP $\latentmdp = \latentmdptuple$ whose state space is linked to $\mdp$ via a \emph{state-embedding function} $\embed\colon \istates \to \latentstates$. We focus on latent MDPs with a finite state space, \highlight{ where values can be exactly computed}.

Let $\latentpolicy$ be a policy in $\latentmdp$, called a {\em latent policy}.
The key feature is that $\embed$ allows to control $\mdp$ using $\latentpolicy$: for each state $\istate \in \istates$, let $\latentpolicy\fun{\sampledot \mid \istate}$ in $\mdp$ follow the distribution $\latentpolicy\fun{\sampledot \mid \embed\fun{\istate}}$ in $\latentmdp$. Abusing notation, we refer to $\latentpolicy$ as a policy in $\mdp$.
We write $\latentvalues{\latentpolicy}{}{none}$ for the value function of $\latentmdp$ operating under $\latentpolicy$. 

Given $\latentmdp$ and $\latentpolicy$, we bound the difference between $\values{\latentpolicy}{}{none}$ and $\latentvalues{\latentpolicy}{}{none}$; the smaller the difference, the more accurately $\latentmdp$ abstracts $\mdp$. Computing $\valuefn{\latentpolicy}{}{\gamma}{none}$ is intractable. To overcome this, in the same spirit as \cite{DBLP:conf/icml/GeladaKBNB19,DBLP:conf/aaai/DelgrangeN022},
we define a local measure 
on the transitions of $\mdp$ and $\latentmdp$ to bound the difference between the values obtained under $\latentpolicy$ 
(cf.~Fig.~\ref{fig:distance-abstraction}).
We define the \emph{transition loss} $L_{\probtransitions}^{\latentpolicy}$ w.r.t.\ a distance metric $\divergence$ on distributions over $\latentstates$. We focus on the \emph{total variation distance} (TV) $\divergence(P, P') = \nicefrac{1}{2}\norm{P - P'}_1$ for $P,P' \in \distributions{\latentstates}$.
We compute~$L_{\probtransitions}^{\latentpolicy}$ by taking the expectation according to the stationary distribution~$\stationary{\latentpolicy}$:
\begin{equation}\label{eq:transition-loss}
    L_{\probtransitions}^{ \latentpolicy} = 
    \expectedsymbol{\istate \sim \stationary{\latentpolicy}, \action \sim \latentpolicy(\sampledot \mid \istate)}\,
    \divergence\fun{\embed\probtransitions\fun{\sampledot \mid \istate, \latentaction}, \latentprobtransitions\fun{\sampledot \mid \embed\fun{\istate}, \latentaction}}.
\end{equation}
The superscript is omitted when clear from the context.
Efficiently sampling from the stationary distribution can be done via randomized algorithms, even for unknown probabilities~\cite{DBLP:journals/combinatorics/LovaszW95,DBLP:journals/jal/ProppW98}.

\begin{figure}[b]
    \vspace{-.5em}
    \centering
    \begin{minipage}{.4\linewidth}
    \begin{tikzpicture}[t/.style={->,thick}]
    \draw[fill=mdpcolor!30!white, draw opacity=0, fill opacity=0.45] (-0.3,-2.75) rectangle (1.4,0.65);
    \draw[fill=orange!30!white, draw opacity=0, fill opacity=0.6] (1.4,-2.75) rectangle (3,0.65);
    \node (s) {$s$};
    \node[right=22.5mm of s] (sb) {$\latentstate$};
    \node[below=7.5mm of s] (a) {$\action$};
    \node[at=(sb|-a)] (ab) {$\action$};
    \node[below=7.5mm of a] (s') {$\istate'$};
    \node[at=(ab|-s')] (ss) {$\latentstate_2$};
    \node[left=4mm of ss] (sss) {$\latentstate_1$};
    \node[at=(ss)] (phantomss) {};
    \node[at=(sss)] (phantomsss) {};
    \node[above=0mm of s,xshift=4.5mm] (M) {$\mdp$};
    \node[above=0mm of sb,xshift=-4.5mm] (Mb) {$\latentmdp$};
    \draw[t] (s) to node[above]{$\embed$} (sb);
    \draw[t] (sb) to node[left]{$\latentpolicy$} (ab);
    \draw[dashed, t] (ab) to node{} (a);
    \draw[t] (a) to node[right]{$\transitionfn$} (s');
    \draw[t] (ab) to node[left]{$\latentprobtransitions$} (ss);
    \draw[t] (s') to node[above]{$\embed$} (sss);
    \draw[dotted, very thick, red] (phantomsss) -- (phantomss);
\end{tikzpicture}
    \end{minipage}
    \hfill
    \begin{minipage}{.55\linewidth}
    \vspace{-.75em}
    \caption{
    {To run $\latentpolicy$ in the original environment $\mdp$, (i)~map $\istate$ to $\embed\fun{\istate} = \latentstate$,
    (ii)~draw $\action \sim \latentpolicy\fun{\sampledot \mid \latentstate}$.
    $\transitionloss$ measures {the gap} (in {\color{red}red}) between latent states produced via $\latentstate_1 = \embed\fun{\istate'}$ with $\istate' \sim \probtransitions\fun{\sampledot \mid \istate, \action}$ (shortened as $\latentstate_1 \sim \embed\probtransitions\fun{\sampledot \mid \istate, \action}$) and those produced directly in the latent space: $\latentstate_2\sim \latentprobtransitions\fun{\sampledot \mid \latentstate, \action}$.}
    }
    \label{fig:distance-abstraction}
    \end{minipage}
    \Description{The figure shows a schematic of how a latent policy gives rise to a concrete policy by taking the state in the original environment and mapping it to a latent state, then feeding that to the latent policy to obtain a concrete action that can be used in the latent and concrete environments.}
\end{figure}

Recall that RL is episodic, terminating when the objective is satisfied/violated or via a reset. We thus restrict $\mdp$ to an {\em episodic process}, which implies ergodicity of both~$\mdp$ and $\latentmdp$ under mild conditions (cf.~\cite{DBLP:conf/nips/Huang20} for a discussion).
\begin{assumption}[Episodic process]\label{assumption:episodic}
The environment~$\mdp$ has a reset state $\sreset$ such that (i)~$\sreset$ is almost surely visited under any policy, and (ii)~$\mdp$ follows the initial distribution once reset: $\probtransitions\fun{\sampledot \mid \sreset, \action} = \mdpI$ for any $\action \in \actions$. The latent model~$\latentmdp$ is also episodic with reset state~$\embed\fun{\sreset}$.
\end{assumption}

\begin{assumption}\label{assumption:original-vs-latent}
The abstraction preserves information regarding the objectives. Formally, let $\tuple{T, \overline{T}}$, $\tuple{B, \overline{B}} \subseteq \istates \times \latentstates$ be sets of \emph{target} and \emph{bad} states, respectively.
Then, for $\measurableset \in \set{T, B}$, $\istate \in \measurableset$ iff $\embed\fun{s} \in \overline{\measurableset}$.%
    \footnote{By labeling states with atomic propositions, a standard in model checking~\cite{DBLP:conf/aaai/DelgrangeN022}.}
    We consider the objective $\objective\fun{T, B}$ in $\mdp$ and $\objective\fun{\overline{T}, \overline{B}}$ in $\latentmdp$.
\end{assumption}
{\raggedleft The following lemma establishes a bound on the difference in values based on $\transitionloss$. Notably, as $\transitionloss$ goes to zero, the two models \emph{almost surely} have the same values from every state.}
\begin{lemma}[\cite{DBLP:conf/aaai/DelgrangeN022}]\label{theorem:value-diff-bound}
    Let $\latentpolicy$ be a latent policy and $\stationary{\latentpolicy}$ be the unique stationary measure of $\mdp$, then
    \emph{the average value difference} is bounded by $\transitionloss$:
    $\expectedsymbol{\istate \sim \stationary{\latentpolicy}} \left|\values{\latentpolicy}{}{\istate} - \latentvalues{\latentpolicy}{}{\embed\fun{\istate}} \right| \leq \ \frac{\discount\localtransitionloss{\stationary{\latentpolicy}}}{1 - \discount}.$
\end{lemma}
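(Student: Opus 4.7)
The plan is to set up a Bellman-style recursion for the pointwise value gap and close it using the invariance property of the stationary distribution. Define $\Delta(\istate) = \left|\values{\latentpolicy}{}{\istate} - \latentvalues{\latentpolicy}{}{\embed(\istate)}\right|$. Under Assumption~\ref{assumption:original-vs-latent}, membership in $T$ or $B$ is preserved by $\embed$, so both value functions satisfy the same boundary conditions ($V = 1$ on target, $V = 0$ on bad states, treated as absorbing by the $\sup$ in the reach-avoid definition). In particular $\Delta(\istate) = 0$ whenever $\istate \in T \cup B$. Off these sets, both $V$ and $\overline{V}$ satisfy a Bellman equation of the form $V(\istate) = \discount\,\expect_{a\sim\latentpolicy}\expect_{\istate'\sim \probtransitions(\cdot \mid \istate, a)} V(\istate')$, and similarly for $\overline{V}$ with $\latentprobtransitions(\cdot\mid\embed(\istate),a)$.

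Next, I bound $\Delta(\istate)$ by inserting the hybrid term $\expect_{a}\expect_{\istate' \sim \probtransitions(\cdot\mid\istate,a)}\overline{V}(\embed(\istate'))$ and applying the triangle inequality:
\[
\Delta(\istate) \leq \discount \expect_{a}\expect_{\istate'} \Delta(\istate') + \discount \expect_{a}\bigl|\expect_{\bar{\istate}' \sim \embed \probtransitions}\overline{V}(\bar{\istate}') - \expect_{\bar{\istate}' \sim \latentprobtransitions}\overline{V}(\bar{\istate}')\bigr|.
\]
Since reach-avoid values lie in $[0,1]$, the second term is at most $\discount \expect_{a}\divergence(\embed\probtransitions(\cdot\mid\istate,a), \latentprobtransitions(\cdot\mid\embed(\istate),a))$ by the standard dual characterization of total variation.

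To close the recursion, I take the expectation of both sides under $\istate \sim \stationary{\latentpolicy}$. The second summand becomes exactly the transition loss $\localtransitionloss{\stationary{\latentpolicy}}$ from Eq.~\eqref{eq:transition-loss}. For the first summand, I invoke the defining invariance of the stationary distribution under the $\latentpolicy$-induced transition kernel of $\mdp$: $\expect_{\istate \sim \stationary{\latentpolicy}}\expect_{a}\expect_{\istate'} \Delta(\istate') = \expect_{\istate \sim \stationary{\latentpolicy}} \Delta(\istate)$. Rearranging $(1-\discount) \expect_{\stationary{\latentpolicy}}[\Delta] \leq \discount \localtransitionloss{\stationary{\latentpolicy}}$ yields the claimed inequality.

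The main obstacle is ensuring the Bellman recursion is set up correctly for the first-visit reach-avoid semantics: because of the $\sup$ in the definition of $V^{\latentpolicy}$, targets and bad states must be effectively absorbing, and Assumption~\ref{assumption:original-vs-latent} is essential so that the boundary conditions match between $\mdp$ and $\latentmdp$, making $\Delta$ vanish on $T \cup B$. A secondary subtlety is justifying that $\stationary{\latentpolicy}$ exists and satisfies invariance along $\latentpolicy$-paths in $\mdp$, which follows from Assumption~\ref{assumption:episodic} (ergodicity via the reset state). Everything else is a routine triangle-inequality and stationarity argument.
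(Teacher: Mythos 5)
Your argument is correct, but note that the paper itself does not prove this lemma: it is imported verbatim from the cited reference~\cite{DBLP:conf/aaai/DelgrangeN022}, so there is no in-paper proof to compare against. Your derivation is the standard local-to-global argument for latent-MDP value bounds and is sound as written: the discounted reach-avoid value satisfies $V(\istate)=\discount\,\expectedsymbol{\action\sim\latentpolicy}\expectedsymbol{\istate'\sim\probtransitions}V(\istate')$ off $T\cup B$ (this is exactly the Bellman characterization the paper itself uses in the appendix proof of the succinct-MDP theorem), Assumption~\ref{assumption:original-vs-latent} makes the boundary values coincide so that $\Delta$ vanishes on $T\cup B$ (including on $T\cap B$, where both values are $0$), the dual bound $\left|\expect_{P}\overbar{V}-\expect_{P'}\overbar{V}\right|\leq\divergence(P,P')$ is valid because $\overbar{V}\in[0,1]$, and invariance of $\stationary{\latentpolicy}$ under the $\latentpolicy$-induced kernel closes the recursion to give $(1-\discount)\,\expectedsymbol{\istate\sim\stationary{\latentpolicy}}\Delta(\istate)\leq\discount\,\transitionloss$. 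Two points deserve the care you already flag: the action in the concrete MDP must be drawn from $\latentpolicy(\sampledot\mid\embed(\istate))$ so that the two Bellman equations share the same action distribution (this is how the paper defines running $\latentpolicy$ in $\mdp$, and it is also how $\transitionloss$ in Eq.~\eqref{eq:transition-loss} is averaged), and dropping the indicator $\condition{\istate\notin T\cup B}$ before applying stationarity is harmless because all terms are nonnegative. With those observations made explicit, your proof is a complete and faithful reconstruction of the cited result.
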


The next theorem provides a
bound applicable to the initial distribution, removing the need of the expectation in Lem.~\ref{theorem:value-diff-bound}.
It follows from plugging the stationary distribution in $\sreset$
into Lem.~\ref{theorem:value-diff-bound} and observing that $\nicefrac{1}{\stationary{\latentpolicy}\fun{\sreset}}$ is \emph{the average episode length}
\cite{serfozo2009basics}.

\begin{theorem}\label{thm:initial-value-bound}    
The value difference \emph{from the initial states} is bounded by $\transitionloss$:
      $\left| \values{\latentpolicy}{}{}{} -
      \latentvalues{\latentpolicy}{}{}{} \right| \leq \frac{\localtransitionloss{\stationary{\latentpolicy}}}{\stationary{\latentpolicy}\fun{\sreset}\fun{1 {-} \discount }}.$
\end{theorem}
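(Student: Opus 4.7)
The plan is to derive the initial-state bound from Lemma~\ref{theorem:value-diff-bound} by isolating the single summand in the stationary-distribution expectation that corresponds to the reset state $\sreset$, and then using the episodic structure of Assumption~\ref{assumption:episodic} to rewrite the value at $\sreset$ in terms of the initial-distribution value $\values{\latentpolicy}{}{}$. First I would invoke Lemma~\ref{theorem:value-diff-bound}, which is applicable because $\mdp$ and $\latentmdp$ are both ergodic under Assumption~\ref{assumption:episodic} (so that $\stationary{\latentpolicy}$ is well-defined and $\stationary{\latentpolicy}\fun{\sreset} > 0$). Since the expectation on its left-hand side is a sum of nonnegative terms, the single term at $\istate = \sreset$ alone satisfies
\[
\stationary{\latentpolicy}\fun{\sreset}\cdot\bigl|\values{\latentpolicy}{}{\sreset} - \latentvalues{\latentpolicy}{}{\embed\fun{\sreset}}\bigr| \;\leq\; \frac{\discount\,\localtransitionloss{\stationary{\latentpolicy}}}{1 - \discount}.
\]

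The key step is then to rewrite the values at the reset state in terms of the initial-distribution values. By Assumption~\ref{assumption:episodic}, for every $\action \in \actions$ we have $\probtransitions\fun{\sampledot \mid \sreset, \action} = \mdpI$, and likewise $\latentprobtransitions\fun{\sampledot \mid \embed\fun{\sreset}, \action} = \latentmdpI$. Provided $\sreset$ lies outside the reach-avoid target and bad sets---a natural modelling convention, since a reset marks episode termination rather than its outcome---the one-step Bellman recursion for the discounted reach-avoid value yields $\values{\latentpolicy}{}{\sreset} = \discount \cdot \expectedsymbol{\istate' \sim \mdpI}\values{\latentpolicy}{}{\istate'} = \discount \cdot \values{\latentpolicy}{}{}$, and the analogous identity holds in $\latentmdp$. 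Substituting these two identities into the previous display and dividing both sides by $\discount \cdot \stationary{\latentpolicy}\fun{\sreset}$ cancels the $\discount$ in the numerator and produces exactly the claimed bound. To close, I would remark that by Kac's formula the reciprocal $1/\stationary{\latentpolicy}\fun{\sreset}$ equals the expected return time to $\sreset$, i.e., the mean episode length, giving the right-hand side its intuitive interpretation.

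The main obstacle is really only a small bookkeeping point: one must ensure that $\sreset$ is neither a target nor a bad state, so that the Bellman unrolling at the reset does not introduce an additive contribution and the cancellation with $\discount$ stays clean. Under the standard convention compatible with Assumption~\ref{assumption:original-vs-latent}---that resets represent termination rather than success or failure---this is immediate, after which the rest of the argument is a single line of algebra on top of Lemma~\ref{theorem:value-diff-bound}.
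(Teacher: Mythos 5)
Your proposal is correct and follows essentially the same route as the paper: bound the single stationary-measure term at $\sreset$ by the expectation in Lemma~\ref{theorem:value-diff-bound}, use Assumption~\ref{assumption:episodic} to unroll the Bellman recursion at $\sreset$ so that $\values{\latentpolicy}{}{\sreset} = \discount\cdot\values{\latentpolicy}{}{}$ (and analogously in $\latentmdp$), and cancel the factor $\discount$. Your explicit remark that $\sreset$ must lie outside the target and bad sets makes precise what the paper leaves implicit in its ``by Assumption~\ref{assumption:episodic}'' step, so the argument is complete.
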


\subsection{PAC estimates of the abstraction quality}\label{sec:pac}
Thm.~\ref{thm:initial-value-bound} establishes a bound on the quality of the abstraction based on $\localtransitionloss{}$ and $\stationary{\latentpolicy}\fun{\sinit}$. Computing these quantities
is not possible in practice since the transition probabilities of $\mdp$ are unknown.

Instead, we obtain PAC bounds on $\stationary{\latentpolicy}\fun{\sinit}$ and $\localtransitionloss{}$ by simulating $\mdp$. The estimate of~$\stationary{\latentpolicy}\fun{\sinit}$ is obtained by taking the portion of visits to $\sinit$ in a simulation and Hoeffding's inequality.
The estimate of~$\localtransitionloss{}$ is obtained as follows. 
When the simulation goes from~$s$ to~$s'$
following action~$a$, we add a ``reward'' of $\latentprobtransitions\fun{\embed\fun{\istate'} \mid \embed\fun{\istate}, \action}$. 
Since~$\transitionloss{}$ is a loss, we subtract the average reward from~$1$.

\begin{lemma}\label{lem:pac-bounds}
    Let $\set{\tuple{\istate_t, \action_t, \istate'_{t}} \colon 1 \leq t \leq \T}$ be a set of $\T$ transitions drawn from $\stationary{\latentpolicy}$ by simulating $\mdp_{\latentpolicy}$.
    Let
    \hypertarget{LP}{
    \[\localtransitionlossapprox{} \!=\! 1-\frac{1}{\T} \sum_{t = 1}^{\T} \latentprobtransitions\fun{\embed\fun{\istate'_{t}} \mid \embed\fun{\istate_t}, \action_t}
        \text{ and }\stationaryapprox \!=\! \frac{1}{\T} \sum_{t = 0}^{\T} \condition{\istate_t = \sinit}\!.\]
    }
    Then, for all~$\error, \proberror > 0$ and $\T \geq \left\lceil \nicefrac{-\log\fun{\varerror}}{2 \error^2}\right\rceil$, with at least probability $1 - \proberror$ we have that
    \begin{enumerate}[(i)]
        \item if $\varerror \leq \delta$,
        $\localtransitionlossapprox{} + \error > \localtransitionloss{}$,
        \item if $\varerror \leq \nicefrac{\delta}{2}$,
        $\localtransitionlossapprox{} + \error > \localtransitionloss{}$ and ${\stationary{\latentpolicy}\fun{\sinit}} >{\stationaryapprox} - \error$.
    \end{enumerate}
\end{lemma}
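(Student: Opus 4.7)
The plan is to apply one-sided Hoeffding concentration to each of $\localtransitionlossapprox{}$ and $\stationaryapprox$---both empirical means of $[0,1]$-valued random variables---and then combine them via a union bound for part~(ii). Writing $Y_t = 1 - \latentprobtransitions(\embed(\istate_t') \mid \embed(\istate_t), \action_t)$ and $Z_t = \condition{\istate_t = \sinit}$, we have $\localtransitionlossapprox{} = \tfrac{1}{\T}\sum_t Y_t$ and $\stationaryapprox = \tfrac{1}{\T}\sum_t Z_t$. Since $\istate_t \sim \stationary{\latentpolicy}$, the identity $\expected{}{Z_t} = \stationary{\latentpolicy}\fun{\sinit}$ is immediate; the first task is therefore to relate $\expected{}{Y_t}$ to $\localtransitionloss{}$.

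The key non-routine step is showing that $\expected{}{Y_t}$ is an \emph{upper bound} on $\localtransitionloss{}$ (rather than an equality), which suffices for the one-sided guarantee of the lemma. Conditioning on $(\istate_t, \action_t)$ and integrating out $\istate_t' \sim \probtransitions(\sampledot \mid \istate_t, \action_t)$ gives
\[
\expected{}{Y_t \mid \istate_t, \action_t} = 1 - \sum_{\latentstate'} \embed\probtransitions(\latentstate' \mid \istate_t, \action_t) \cdot \latentprobtransitions(\latentstate' \mid \embed(\istate_t), \action_t).
\]
Using the identity $\divergence(P, Q) = 1 - \sum_x \min(P(x), Q(x))$ for total variation together with the pointwise inequality $\min(p, q) \geq p q$ valid for $p, q \in [0,1]$, this quantity dominates $\divergence\fun{\embed\probtransitions(\sampledot \mid \istate_t, \action_t), \latentprobtransitions(\sampledot \mid \embed(\istate_t), \action_t)}$. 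Taking the outer expectation over $\istate_t \sim \stationary{\latentpolicy}$ and $\action_t \sim \latentpolicy(\sampledot \mid \istate_t)$ yields $\expected{}{Y_t} \geq \localtransitionloss{}$.

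With these expectation comparisons in hand, one-sided Hoeffding on $\T$ approximately i.i.d.\ samples from $\stationary{\latentpolicy}$ (justified by Assumption~\ref{assumption:episodic} and the randomized stationary-sampling procedures cited in the paper) gives $\Prob\left[\localtransitionlossapprox{} + \error < \localtransitionloss{}\right] \leq \Prob\left[\localtransitionlossapprox{} < \expected{}{Y_1} - \error\right] \leq \exp(-2\T\error^2)$ and, analogously, $\Prob\left[\stationaryapprox - \error > \stationary{\latentpolicy}\fun{\sinit}\right] \leq \exp(-2\T\error^2)$. The sample-size assumption $\T \geq \lceil -\log(\varerror) / (2\error^2) \rceil$ forces each of these tail probabilities to be at most $\varerror$; part~(i) then follows immediately with $\varerror \leq \proberror$, and part~(ii) by taking $\varerror \leq \proberror/2$ and union-bounding the two failure events. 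The only genuinely non-mechanical ingredient is the inequality $\expected{}{Y_t} \geq \localtransitionloss{}$, since the estimator resembles an inner product $\sum P(x) Q(x)$ rather than a total variation distance; once that point is settled, the rest is a textbook concentration argument.
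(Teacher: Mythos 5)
Your proposal is correct and follows the same overall structure as the paper's proof: a one-sided Hoeffding bound for each of the two empirical means, a union bound for part~(ii), and the sample-size condition $\T \geq \lceil -\log(\varerror)/(2\error^2)\rceil$ forcing each tail probability below $\varerror$, with $\varerror\leq\proberror$ resp.\ $\varerror\leq\nicefrac{\proberror}{2}$. The only (minor) difference lies in how the key inequality $\expect[Y_t]\geq \localtransitionloss{}$ is obtained: the paper first applies Jensen's inequality to get the upper bound $\localtransitionlossupper{}$ and then computes exactly the total variation distance between the Dirac $\embed(\sampledot\mid\istate')$ and $\latentprobtransitions(\sampledot\mid\embed(\istate),\action)$ as $1-\latentprobtransitions(\embed(\istate')\mid\embed(\istate),\action)$, whereas you compare the overlap $\sum_{\latentstate'} \embed\probtransitions(\latentstate')\latentprobtransitions(\latentstate')$ with $\sum_{\latentstate'}\min\{\embed\probtransitions(\latentstate'),\latentprobtransitions(\latentstate')\}$ via $\min(p,q)\geq pq$ and the identity $\divergence(P,Q)=1-\sum_x\min\{P(x),Q(x)\}$ --- an equally valid and elementary route to the same intermediate bound.
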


The following theorem has two key implications:
\ref{enum-thm-PAC:i}~it establishes a lower bound on the minimum number of samples necessary to calculate the PAC upper bound for the average value difference;
\ref{enum-thm-PAC:ii}~it suggests an online algorithm with a termination criterion for the value difference bound \highlight{obtained from the initial states}.
\begin{figure*}[t]
    \centering
    \includegraphics[width=\linewidth]{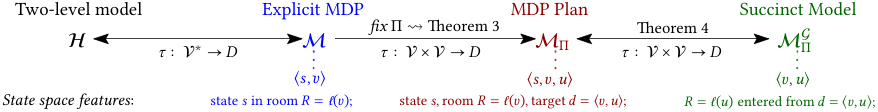}
    \vspace{-2em}
    \caption{Chain of reductions for synthesizing a planner $\controller$ in a two-level model $\hmdp$.
    $\hmdp$ can be formulated as an explicit MDP $\mdp$.
    Once the low-level policies $\planner$ are learned (Fig.~\ref{fig:intro-1}), the synthesis problem reduces to constructing a stationary policy in an \emph{MDP plan} $\mdp_{\planner}$ where $\planner$ is fixed and the state space of $\mdp_{\planner}$ encodes the directions chosen in each room.
    From this policy, one can derive a $\abs{\vertices}$-memory planner $\controller$ for $\hmdp$  (Thm.~\ref{thm:mdp-planner-equiv}).
    Finally, finding a policy in $\mdp_{\planner}$ is equivalent to finding a policy in a \emph{succinct model} $\mdp^{\graph}_{\planner}$ where (i) the state space corresponds to the directions from which rooms are entered, (ii) the actions to the choices of the planner, and (iii) the transition probabilities to the values achieved by the latent policy chosen (Thm.~\ref{thm:succint-mdp-plan}).
    }\label{fig:synthesis-overview}
    \Description{The figure shows the chain of reductions described in the work to realize synthesis of a planner in a two-level model. In short, the model becomes an explicit MDP which yields an MDP plan and in turn a finite-memory planner for the original model. Finally, this is the same as finding a policy in the succinct model where states encode directions, action choices of the planner, and transition probabilities to the values achieved by the latent policy chosen.}
\end{figure*}
\begin{theorem}[The value bounds are PAC learnable]
\label{thm:init-values-pac-learnable}
    Consider $\T$ transitions
    $\set{\tuple{\istate_t, \action_t, \istate'_{t}} \colon 1 \leq t \leq \T}$ drawn from $\stationary{\latentpolicy}$ by simulating $\mdp$ under ${\latentpolicy}$.
    Then, for any $\error, \proberror > 0$, $\T \geq \left\lceil \nicefrac{- \discount' \log(\delta')}{\fun{2 \error^2(1 - \discount)^2 \varerror}}\right\rceil$, with at least probability $1 - \proberror$, \highlight{the following value bounds hold, on}
    \begin{enumerate}[(i)]
        \item \label{enum-thm-PAC:i} \highlight{\emph{the \textbf{average} value gap}}:
        $
        \expectedsymbol{\istate \sim \stationary{\latentpolicy}} \left|\values{\latentpolicy}{}{\istate} - \latentvalues{\latentpolicy}{}{\embed\fun{\istate}} \right| \leq \frac{\discount\localtransitionlossapprox{}}{1 - \discount} + \error
        $
        with $\proberror' = \proberror$, $\discount' = \discount^2$, and $\varerror = 1$, and
    \item \label{enum-thm-PAC:ii} \emph{\highlight{the value gap \textbf{from the initial states}:}}\\
        \vspace{-.75em}
        \[\abs{\values{\latentpolicy}{}{} - \latentvalues{\latentpolicy}{}{}} \leq \frac{ \localtransitionlossapprox{} }{\stationaryapprox \, \fun{1 - \discount}} + \error\]
        with $\delta' = \nicefrac{\delta}{2}$, $\discount' = ( \localtransitionlossapprox{} + \stationaryapprox\fun{1 + \error\fun{1 - \discount}})^2$, and $\varerror = \stationaryapprox^{\, 4}$.
    \end{enumerate}
\end{theorem}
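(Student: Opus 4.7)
The plan is to derive both PAC estimates by ``upgrading'' the deterministic bounds of Lemma~\ref{theorem:value-diff-bound} and Thm.~\ref{thm:initial-value-bound}: substitute the empirical quantities $\localtransitionlossapprox{}$ and $\stationaryapprox$ in place of $\localtransitionloss{}$ and $\stationary{\latentpolicy}\fun{\sreset}$ via Lem.~\ref{lem:pac-bounds}, and then choose, per estimated quantity, the sampling slack $\error'$ so that the Hoeffding sample complexity of Lem.~\ref{lem:pac-bounds} collapses to the theorem's expression for $T$.

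For part~(\ref{enum-thm-PAC:i}), I start from the bound $\discount \localtransitionloss{}/(1-\discount)$ of Lemma~\ref{theorem:value-diff-bound}; only the transition loss needs to be estimated. I invoke Lem.~\ref{lem:pac-bounds}(i) with slack $\error' = \error(1-\discount)/\discount$, chosen so that multiplying the inequality $\localtransitionloss{} < \localtransitionlossapprox{} + \error'$ by $\discount/(1-\discount)$ produces precisely the target additive error $\error$. Substituting this $\error'$ into $T \geq \lceil -\log(\proberror)/(2(\error')^2)\rceil$ yields the theorem's expression with $\discount' = \discount^2$, $\varerror = 1$, and $\proberror' = \proberror$.

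For part~(\ref{enum-thm-PAC:ii}), I start from $\localtransitionloss{}/(\stationary{\latentpolicy}\fun{\sreset}(1-\discount))$ of Thm.~\ref{thm:initial-value-bound}, where \emph{both} numerator and denominator are estimated. Using Lem.~\ref{lem:pac-bounds}(ii), allocating $\proberror/2$ to each event via a union bound, we get jointly $\localtransitionloss{} \leq \localtransitionlossapprox{} + \error'$ and $\stationary{\latentpolicy}\fun{\sreset} \geq \stationaryapprox - \error'$ with probability $\geq 1-\proberror$. Writing $\alpha = \localtransitionlossapprox{}$ and $\beta = \stationaryapprox$ for brevity, substituting these into the deterministic bound and requiring the result to be at most $\alpha/(\beta(1-\discount)) + \error$ reduces, after clearing the two denominators and collecting the terms in $\error'$, to the sufficient condition $\error' \leq \error \beta^2 (1-\discount)/(\alpha + \beta(1 + \error(1-\discount)))$. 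Plugging the square of the right-hand side back into $T \geq \lceil -\log(\proberror/2)/(2(\error')^2)\rceil$ matches exactly the theorem's formula with $\discount' = (\alpha + \beta(1+\error(1-\discount)))^2$ and $\varerror = \beta^4$.

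The main obstacle is the algebraic step in part~(\ref{enum-thm-PAC:ii}): because the empirical error perturbs both the numerator and the denominator of the value-difference ratio, solving for $\error'$ leaves a rational inequality that must be linearised carefully. A secondary subtlety, implicit in the derivation, is that one must ensure $\error' < \stationaryapprox$ to keep the perturbed denominator positive; this is automatically satisfied once $T$ is taken to be at least the prescribed size, since $\error'$ is then of order $\error$ while $\stationaryapprox$ is a concrete empirical frequency.
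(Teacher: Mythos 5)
Your proposal is correct and follows essentially the same route as the paper: both parts upgrade the deterministic bounds of Lemma~\ref{theorem:value-diff-bound} and Thm.~\ref{thm:initial-value-bound} by substituting the empirical estimates via Lem.~\ref{lem:pac-bounds}, then solve for the Hoeffding slack $\varerror$ (giving $\varerror \leq \error(1-\discount)/\discount$ in part~(i) and $\varerror \leq \stationaryapprox^2\error(1-\discount)/(\localtransitionlossapprox{}+\stationaryapprox(1+\error(1-\discount)))$ in part~(ii)) and plug it into the sample-complexity formula, exactly as in the paper's proof. Your added remark that $\varerror < \stationaryapprox$ keeps the perturbed denominator positive is a sound observation that the paper only addresses implicitly via the well-definedness of the bound on $\varerror$.
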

Unlike~\ref{enum-thm-PAC:i}, which enables precomputing the number of samples to estimate the bound, \ref{enum-thm-PAC:ii} allows estimating
with an 
algorithm, almost surely terminating but without predetermined endpoint since $\T$ relies in that case on the current approximations of $\transitionloss$ and $\stationary{\latentpolicy}$.

\subsection{Obtaining latent policies during training}\label{sec:wae-dqn}
\highlight{%
As highlighted in the last section, our guarantees rely on learning a policy on the representation induced by a suitable, latent abstraction.
Accordingly, we propose
}%
a DRL procedure that trains the policy and the latent model \emph{simultaneously}. Previous approaches used a two-step process: train a policy $\policy$ in $\mdp$ and then \emph{distill} it.
In contrast, our one-step approach alternates between optimizing a latent policy~$\latentpolicy$ via DQN~\citep{DBLP:journals/nature/MnihKSRVBGRFOPB15} and representation learning through \emph{Wasserstein auto-encoded MDPs} (WAE-MDPs~\citep{delgrange2023wasserstein}). This process \emph{avoids the distillation step} by directly learning $\latentpolicy$ and minimizing $\transitionloss$. That way, the DQN policy is directly optimized on the learned latent space (cf.~Fig.~\ref{fig:intro-1}). We call this procedure \emph{WAE-DQN}.

The combination of these techniques is nontrivial and requires addressing stability issues.
To summarize, WAE-DQN ensures the following properties:
(i)~$\embed$ groups states with close values, supporting the learning of $\latentpolicy$;
(ii)~$\latentpolicy$ prescribes the same actions for states with close behaviors, improving robustness and enabling reuse of the latent space for rooms with similar structure.

\section{Obtaining a Planner}\label{sec:high_level}

Fix $\planner$ as a collection of low-level, latent policies.
In this section, we show that synthesizing a planner reduces to constructing a policy in a succinct model, where the action space coincides with the edges of the map $\graph$ (i.e., the choices of the planner).
In the following, we describe the chain of reductions leading to this result. An overview is given in Fig.~\ref{fig:synthesis-overview}.
We further discuss the memory requirements of the planner.
Precisely, we study the following problem:

\begin{problem}
Given a two-level model $\hmdp$,
 a collection of latent policies $\planner$, and an objective $\objective$, construct a planner $\tau$ such that the controller $\tuple{\tau, \planner}$ is optimal for $\objective$ in $\hmdp$.
\end{problem}

\begin{example}[Planners require memory]\label{ex:memory-requirements}
Consider again
Fig.~\ref{fig:example-controller}.
To reach~$\vertex_3$ and avoid~$B_{\labelingfn\fun{\varvertex}}$ from $\varvertex$,
$\controller$ must remember from where the room $\labelingfn(\varvertex)$ is entered:
$\controller$ must choose~$\uparrow$ from~$\vertex_1$, and~$\rightarrow$ from~$\vertex_2$.
\end{example}

Next, we establish a memory bound for an optimal planner. Upon entering a room $\room \in \rooms$, the planner selects a direction $d \in E$, so the policy operating in $\room$ is $\latentpolicy_{\room, d} \in \planner$, optimizing the objective $\objective_{\room}^{d}$ to exit $\room$ via $d$. We construct an \emph{MDP plan} $\mdpplanner{} = \tuple{\istatesplanner, \actionsplanner, \transitionfnplanner, \mdpIplanner}$ to simulate this interaction. A state $\istate^* = \tuple{\istate, \vertex, \varvertex} \in \istatesplanner$ represents $\hmdp$ being at vertex $\vertex$, the room $\room = \labelingfn(\vertex)$ at state $\istate$, and the operating policy $\latentpolicy_{\room, \direction = \tuple{\vertex, \varvertex}}$. For non-exit states~$\istate$, the transition function~$\probtransitions_{\planner}(\cdot \mid \istate^*)$ follows $\probtransitions_{\room}(\cdot \mid \istate, \action)$ with $\action \sim \latentpolicy_{\room, \direction}(\cdot \mid \istate)$;
for exit states, the planner chooses direction $\direction' \in \directions_{\room'}$ for the next room $\room' = \labelingfn(\varvertex)$, where $\probtransitions_{\planner}(\cdot \mid \istate^{}, \direction')$ follows
$\entrancefn_{\room'}(\cdot \mid \direction)$ from $\direction = \tuple{\vertex, \varvertex}$. 

An optimal stationary policy exists for $\mdpplanner{}$~\citep{DBLP:books/wi/Puterman94} and can be implemented by a planner that memorizes the room's entry direction. This requires \emph{memory of size}~$\abs{\vertices}$, as decisions depend on any of the $\abs{\vertices}$ preceding vertices.

\begin{theorem}\label{thm:mdp-planner-equiv}
    Given low-level policies~$\planner$, 
    there is a $\abs{\vertices}$-memory planner~$\controller$ maximizing~$\objective$ in~$\hmdp$ iff there is a deterministic stationary policy~$\policy^{\star}$ maximizing~$\plannerobjective$ in~$\mdpplanner{}$.
\end{theorem}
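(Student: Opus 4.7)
The plan is to establish a value-preserving correspondence between $|\vertices|$-memory planners in $\hmdp$ and stationary policies in $\mdpplanner{}$, and then invoke the classical existence of an optimal deterministic stationary policy for finite MDPs. The key conceptual step is to interpret the $|\vertices|$-memory of the planner as the ``intended exit direction,'' namely, the neighbor $u$ of the current vertex $v$ indicating that the policy currently in force in room $\labelingfn(v)$ is $\latentpolicy_{\labelingfn(v), \tuple{v, u}} \in \planner$. This interpretation matches exactly the third coordinate of a state $\tuple{s, v, u}$ in $\mdpplanner{}$, so both objects live on the same state space up to relabeling.

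For the ($\Leftarrow$) direction, given a deterministic stationary $\pi^\star$ in $\mdpplanner{}$, I would build $\tau$ as a Mealy machine with memory states $\policystates = \vertices$ and initial memory $u_1$ such that $d_1 = \tuple{v_0, u_1}$. On a non-exit input $\tuple{s, v}$ with memory $u$, the output action is drawn from $\latentpolicy_{\labelingfn(v), \tuple{v, u}}(\sampledot \mid s)$ and the memory is kept; on an exit state $s \in \exitfn_{\labelingfn(v)}(\tuple{v, w})$, the output is $a_{\mathit{exit}}$ and the memory is updated to $u'$ where $\pi^\star(\tuple{s, v, u}) = \tuple{w, u'}$. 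A short induction on path length then shows that, under $\tau$, the joint process $(\hmdp\text{-state}, \text{memory})$ has the same law as the state process of $\mdpplanner{}$ under $\pi^\star$. Since the high-level objective $\reach{T}$ depends only on the vertex sequence and the low-level reach-avoid components are encoded in the state space (cf.~Assumption~\ref{assumption:original-vs-latent} together with the construction of $\mdpplanner{}$), the values coincide.

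For the ($\Rightarrow$) direction, classical MDP theory~\cite{DBLP:books/wi/Puterman94} guarantees that the finite MDP $\mdpplanner{}$ admits a deterministic stationary optimal policy $\pi^\star$; the construction above then yields a $|\vertices|$-memory planner $\tau^\star$ of equal value. Conversely, any planner in $\hmdp$ (of arbitrary memory) can be lifted to a (possibly history-dependent) policy in $\mdpplanner{}$ via the same correspondence---augmenting states with the tracked ``intended direction'' and translating planner choices into actions at exit states---and its value cannot exceed $\val{\pi^\star}{}{}$. Hence $\tau^\star$ is optimal among \emph{all} planners in $\hmdp$, proving the biconditional.

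The main obstacle will be the probabilistic coupling underlying the ($\Leftarrow$) induction: one must verify that each transition of $\hmdp$ under $\tau$ corresponds to exactly one transition of $\mdpplanner{}$ under $\pi^\star$ with matching probability. This requires (i)~threading the Mealy memory update through the ``exit then enter next room'' transition so that the new third coordinate $u'$ in $\mdpplanner{}$ agrees with the updated memory and the successor state is distributed according to $\entrancefn_{\labelingfn(w)}(\sampledot \mid \tuple{v, w})$ in both views, and (ii)~using the assumption that each state belongs to at most one exit, so that the actual exit direction $\tuple{v, w}$ is a deterministic function of an exit state $s$, ensuring $\pi^\star(\tuple{s, v, u})$ returns a well-defined successor direction $\tuple{w, u'}$ with $u' \in \neighbors{w}$.
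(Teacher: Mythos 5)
Your proposal is correct and takes essentially the same route as the paper: the paper also encodes the planner as a Mealy machine whose $\abs{\vertices}$ memory states are (after renaming) exactly the intended exit neighbors, shows by induction that the induced path distribution in $\hmdp$ equals that of the corresponding deterministic stationary policy in $\mdpplanner{}$ (matching the non-exit, exit, and reset cases of the plan's transition function), and then invokes the classical existence of optimal deterministic stationary policies for discounted reach-avoid objectives. The only detail the paper additionally spells out, and which your construction implicitly assumes, is the handling of ``improper'' policies that take sink-bound actions in $\mdpplanner{}$: an optimal $\policy^{\star}$ may be improper only where the objective is already decided, and can be repaired into a proper one of equal value so that $\policy^{\star}(\tuple{\istate,\vertex,\varvertex})$ indeed always yields a legal next direction for your Mealy update.
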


\paragraph{Planner synthesis.}
As a first step, we construct a \emph{succinct MDP} $\mdp_{\planner}^{\graph}$ that preserves the value of~$\mdpplanner{}$.
States of~$\mdp_{\planner}^{\graph}$ are pairs~$\tuple{\vertex, \varvertex}$ indicating room~$\room = \labelingfn(\varvertex)$ is entered via direction~$\direction = \tuple{\vertex, \varvertex}$.
As in~$\mdpplanner{}$, a planner selects an exit direction~$d' = \tuple{\varvertex, \vertex'}$ for $\room$. We use the following trick. Recall that we consider discounted properties; when $\room$ is exited via direction $d'$ after $j$ steps, the utility is $\gamma^j$. In $\mdp_{\planner}^{\graph}$, we set the probability of transitioning to $\vertex'$ upon choosing $d'$ to the expected value achieved by policy~$\latentpolicy_{\room, \direction'}$ in~$\room$.

\highlight{%
The next example illustrates how setting transition probabilities to be expected values maintains the values between the models.
\begin{example}
Consider the explicit model of Fig.~\ref{fig:example-hierarchy}, projected on two dimensions in Fig.~\ref{fig:example-hierarchy:2d}. 
Each directed arrow corresponds to a transition with a non-zero probability.
    A state of the form $\tuple{\istate, \vertex}$ indicates that the agent is in state~$\istate$ of room~$\labelingfn\fun{\vertex}$. 
Consider a path $\mdppath$ that enters $\labelingfn\fun{\vertex_0} = \room_0$, exits after~$i=3$ steps ($\color{blue} \istate_0 \rightarrow \istate_1 \rightarrow \istate_2 \xrightarrow{\action_{\textit{exit}}}$), enters $\labelingfn\fun{\varvertex} = \room_1$, exits after~$j=3$ steps ($\color{blue} \istate_0 \rightarrow \istate_1 \rightarrow s_2 \xrightarrow{\action_{\textit{exit}}}$), and finally reaches the high-level goal.
The prefix of $\mdppath$ 
in $\room_0$
is discounted to $\gamma^3$ when the agent exits.
Similarly, the suffix of $\mdppath$ in $\room_1$ is discounted to $\gamma^3$.
Once in the goal, the agent gets a ``reward" of one (the goal is reached).
The discounted reward obtained along $\mdppath$ is thus $\gamma^{i+j}=\gamma^{6}$.
In expectation, this corresponds to multiplying the values in the individual rooms and, in turn, with the semantics of $\mdp_{\planner}^{\graph}$ where probabilities are multiplied along a path.
\end{example}
\begin{figure}[H]
\vspace{-1em}
\includegraphics[width=.8\linewidth]{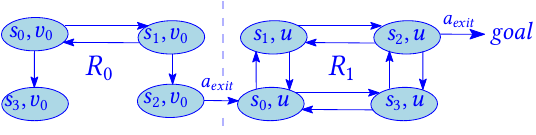}
    \captionof{figure}{Projection of Fig.~\ref{fig:example-hierarchy} on two dimensions.}\label{fig:example-hierarchy:2d}
    \Description{The figure is a recollection of the left element of Figure 2 where room-states with entrance directions are shown below the map vertices that abstract them.}
\end{figure}
}%

Let $\mdp_{\planner}^{\graph} = \tuple{\istates, \actions, \probtransitions, \mdpI}$
with~$\istates = \edges \cup \set{\bot}$, $\actions = \edges$, $\mdpI\fun{\direction_0} = 1$,
\begin{equation}
\probtransitions\fun{\tuple{\varvertex, t} | \tuple{\vertex, \varvertex}, \direction} = 
    \expected{\istate \sim \entrancefn_{\labelingfn\fun{\varvertex}}\fun{\sampledot \mid \tuple{\vertex, \varvertex}}}{\valinit{\latentpolicy_{\labelingfn\fun{\varvertex}, \direction}}{\objective_{\labelingfn\fun{\varvertex}}^{\direction}}{\discount}{\istate}}\label{eq:succint-mdp-transition},
\end{equation}
and
$\probtransitions\fun{\bot \mid \tuple{\vertex, \varvertex}, \direction} = 1 - \probtransitions\fun{\tuple{ \varvertex, t} \mid \tuple{\vertex, \varvertex}, \direction}$ for any~$\tuple{\vertex, \varvertex} \in E$ with target direction $\direction = \tuple{\varvertex, t} \in \directions_{\labelingfn\fun{\varvertex}}$, while $\probtransitions\fun{\bot \mid \bot, \direction} = 1$.
The sink state~$\bot$ captures when low-level policies do not satisfy the objective. 

\begin{theorem}\label{thm:succint-mdp-plan}
    Let $\tuple{\controller, \planner}$ be a $\abs{\vertices}$-memory controller for $\hmdp$ and $\policy$ be an equivalent policy in $\mdp_{\planner}$, the values obtained under $\policy$ for $\objective$ in $\mdp_{\planner}$ are equal to those under $\controller$ obtained in $\mdp_{\planner}^{\graph}$ for the reachability objective to states $\vertices \times T$.
\end{theorem}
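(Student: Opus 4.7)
The key observation is that $\mdp_{\planner}^{\graph}$ collapses the internal dynamics of every room into a single transition whose probability equals the expected discounted value of the corresponding low-level reach-avoid objective. The bridge between $\mdp_{\planner}$ and $\mdp_{\planner}^{\graph}$ is the strong Markov property, applied whenever the agent exits a room via $\action_{\textit{exit}}$ and enters the next. First, I would argue that $\controller$ induces a stationary policy on $\mdp_{\planner}^{\graph}$: a state $\tuple{\vertex, \varvertex}$ encodes both the room $\labelingfn\fun{\varvertex}$ being entered and the direction $\tuple{\vertex, \varvertex}$ from which it is entered, which by Thm.~\ref{thm:mdp-planner-equiv} is exactly the memory that $\controller$ needs in order to pick an outgoing direction.

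I would then compute $\val{\policy}{\objective}{}$ in $\mdp_{\planner}$ by partitioning trajectories according to the sequence of rooms they visit. Fix a graph path $\vertex_0, \vertex_1, \dots, \vertex_n$ with $\vertex_n$ the first hit of $T$, and set $\direction_i = \tuple{\vertex_i, \vertex_{i+1}}$. Applying the strong Markov property at each $\action_{\textit{exit}}$ transition together with the tower rule, the discounted reach-avoid value aggregated over trajectories realising this room sequence factorises as
\begin{equation*}
    \prod_{i=0}^{n-1}
    \expectedsymbol{\istate \sim \entrancefn_{\labelingfn\fun{\vertex_{i+1}}}\fun{\sampledot \mid \direction_{i}}}
    \valinit{\latentpolicy_{\labelingfn\fun{\vertex_{i+1}}, \direction_{i+1}}}{\objective_{\labelingfn\fun{\vertex_{i+1}}}^{\direction_{i+1}}}{\discount}{\istate}.
\end{equation*}
Each factor is the expected discounted probability that $\latentpolicy_{\labelingfn\fun{\vertex_{i+1}}, \direction_{i+1}}$ exits $\labelingfn\fun{\vertex_{i+1}}$ via $\direction_{i+1}$ while avoiding $B_{\labelingfn\fun{\vertex_{i+1}}}$, starting from the entrance distribution induced by $\direction_{i}$.

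Next, I would recognise each factor as exactly the transition probability $\probtransitions\fun{\tuple{\vertex_{i+1}, \vertex_{i+2}} \mid \tuple{\vertex_i, \vertex_{i+1}}, \direction_{i+1}}$ in $\mdp_{\planner}^{\graph}$ prescribed by Eq.~\eqref{eq:succint-mdp-transition}. Hence the product above is precisely the probability, under $\controller$ in $\mdp_{\planner}^{\graph}$, of traversing the corresponding edge sequence without falling into $\bot$ and reaching an edge in $\vertices \times T$. Summing over all such graph paths identifies the two values.

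The main obstacle is the careful bookkeeping of discount factors across room boundaries. The decomposition succeeds because each per-room value is itself a discounted expectation of $\discount^{\tau_{\textit{exit}}}$, so multiplying per-room factors automatically gathers $\discount^{\sum_i \tau_i}$, matching the overall discount of the full trajectory in $\mdp_{\planner}$. A small boundary check is needed at the initial room, where $\mdpI$ of $\mdp_{\planner}$ is $\entrancefn_{\labelingfn\fun{\vertex_0}}\fun{\sampledot \mid \direction_0}$ and the initial state of $\mdp_{\planner}^{\graph}$ is $\direction_0$ itself, so the convention that $\controller$ plays $\direction_1$ at $\vertex_0$ aligns the two decompositions.
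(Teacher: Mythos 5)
Your route---partitioning the trajectories of $\mdp_{\planner}$ by the sequence of rooms visited and factorizing via the strong Markov property at each crossing, using that the next entrance distribution depends only on the direction and not on the exit state or exit time---is genuinely different from the paper's proof, which argues through the Bellman recursion: it first observes that all exit states of a room in a given direction share a single continuation value, and then shows this value satisfies the same fixed-point equation as $\discount$ times the value of $\mdp_{\planner}^{\graph}$. Your decomposition could be made to work, but as written it breaks exactly at the point you flag as the main obstacle: the discount bookkeeping at room boundaries. In $\mdp_{\planner}$ (as in the explicit MDP), leaving a room is itself a transition: from an exit state, the chosen direction $\direction'$ (resp.\ $\action_{\mathit{exit}}$) leads in one step to a state drawn from $\entrancefn_{\room'}\fun{\sampledot \mid \direction}$, and the high-level target set $\mathbf{T}$ is only hit at that entering step. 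Hence a trajectory traversing $n$ rooms before reaching $T$ is discounted by $\discount^{\sum_k \tau_k + n}$, not $\discount^{\sum_k \tau_k}$ as you claim: each per-room value only gathers $\discount^{\tau_k}$ up to the first visit of the exit set, and every crossing contributes one further factor $\discount$.

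Correspondingly, on the succinct side the theorem concerns the \emph{value} of reaching $\vertices \times T$ in $\mdp_{\planner}^{\graph}$, which under the paper's standing convention (and in the quantity the synthesis procedure computes) is the $\discount$-discounted value; its contribution along an edge path of length $n$ is $\discount^{n}$ times the product of the transition probabilities of Eq.~\eqref{eq:succint-mdp-transition}, one discount factor per edge, not the bare path probability. You drop the $\discount^{n}$ on the $\mdp_{\planner}$ side and then match against undiscounted path probabilities on the $\mdp_{\planner}^{\graph}$ side; the two slips are mutually consistent, but the identity you end up with---value in $\mdp_{\planner}$ equals the undiscounted probability of reaching $\vertices \times T$ in $\mdp_{\planner}^{\graph}$---is not the theorem, and it differs from the discounted value whenever $\discount < 1$ and the reach value is positive (any successful path has length at least one since $\vertex_0 \notin T$). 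The repair is local: keep the extra $\discount$ per crossing and note it is exactly absorbed by the per-edge discount in $\mdp_{\planner}^{\graph}$, then do the initial-alignment check you already mention (no crossing step before the first room, no discount before the first edge transition), which is precisely the $\nicefrac{1}{\discount}\cdot\discount$ cancellation the paper performs at the end of its proof. Two smaller omissions: the factorization needs the paper's convention that resets count as bad states ($\sreset \in B_{\room}$), so reset episodes contribute zero on both sides, and the independence step should be stated as the observation that the continuation value is identical from every exit state of a given direction, which is what licenses pulling the per-room expectation out of the product.
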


We are ready to describe the algorithm to synthesize a planner.
Note that the values $V^{\latentpolicy_{\room, {\direction}}}$ 
in Eq.~\eqref{eq:succint-mdp-transition}
are either unknown or computationally intractable.
Instead, we leverage the latent model to evaluate the \emph{latent value} of each low-level objective using standard techniques for discounted reachability objectives~\cite{DBLP:conf/icalp/AlfaroHM03}.
We construct 
$\latentmdp_{\planner}^{\graph}$ similar to $\mdp_{\planner}^{\graph}$ and obtain the controller $\tuple{\controller, \planner}$ by computing a planner $\controller$  
optimizing the values of $\latentmdp_{\planner}^{\graph}$~\cite{DBLP:books/wi/Puterman94}.
As $\latentmdp_{\planner}^{\graph}$ and $\mdp_{\planner}^{\graph}$ have identical state spaces, planners for $\latentmdp_{\planner}^{\graph}$ are compatible with~$\mdp_{\planner}^{\graph}$.

\paragraph{Lifting the guarantees.}%
We now lift the guarantees for low-level policies to a planner operating on the two-level model, overcoming the following challenge. To learn one latent model per room~$\room$ and the set of low-level policies~$\planner$, we run WAE-DQN independently (and possibly in parallel) in each room~$\room$ (Fig.~\ref{fig:intro-1}).
Viewing $\room$ as an MDP, we obtain a transition loss~$\transitionloss^{\room, \direction}$ for every direction~$\direction$, associated with latent policy~$\latentpolicy_{\room, \direction} \in \planner$.
Independent training introduces complications. Each room~$\room$ has its own initial distribution~$\mdpI_{\room}$, while at synthesis time, the initial distribution depends on the controller~$\policy = \tuple{\controller, \planner}$ and marginalizes $\entrancefn_{\room}(\cdot \mid \direction)$ over directions~$\direction$ chosen by~$\controller$. Recall that $\transitionloss^{\room, \direction}$ is the TV between original and latent transition functions, averaged over $\stationary{\latentpolicy_{\room, \direction}}$, i.e., states likely to be visited under~$\latentpolicy$ when using~$\mdpI_{\room}$ as the entrance function.
The latter differs from~$\entrancefn_{\room}$, used at synthesis time.
As $\stationary{\latentpolicy_{\room, \direction}}$ may not align with the state distribution visited under the two-level controller~$\policy$, $\transitionloss^{\room, \direction}$ (and thus the guarantees from the latent model) may become obsolete or non-reusable.

Fig.~\ref{fig:distr-shift} illustrates the distribution shift.
Assume that~$\controller$ chooses the right direction~$\rightarrow$ in~$\room$.
As~$\color{blue!75!white}\mdpI_{\room}$ is uniform, every state is included in
\begin{wrapfigure}{r}{.35\linewidth}
\includegraphics[width=\linewidth,keepaspectratio,clip,trim=0mm 1mm 11.5mm 8.5mm]{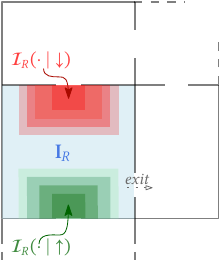}
  \caption{Uniform distribution $\mdpI_{\room}$~({\color{blue!75!white}blue}), entrance~function~$\entrancefn_{\room}$ (\textcolor{red}{red}:~$\downarrow$, \textcolor{green!50!black}{green}:~$\uparrow$).
  }%
    \label{fig:distr-shift}
    \Description{The figure shows the distribution shift when the entrance function is uniform. Hence, every state is included in the support of the distribution of states visited at training time.}
\end{wrapfigure}
the support of the distribution $\stationary{\latentpolicy_{\room, \rightarrow}}$ of states visited under~$\latentpolicy_{\room, \rightarrow}$ at training time.
In contrast, under a two-level controller, rooms are entered according to $\entrancefn_{\room}\fun{\sampledot \mid \direction \in \set{{\color{red}\downarrow}, {\color{green!50!black}\uparrow}}}$.
Since the goal is to exit on the
right,
all states of~$\room$ need not be visited under~$\latentpolicy_{\room, \rightarrow}$, so the distribution over visited states may differ.
The question is whether we can
recover the latent models' guarantees
at synthesis time.

Fortunately, as we will show in the following theorem, it turns out that if the initial distribution~$\mdpI_{\room}$ of each room~$\room$ is well designed and provides sufficient coverage of the state space of~$\room$, it is possible to learn a \emph{latent entrance function~$\latententrancefn_{\room}$} so that the guarantees associated with each room can be lifted to the two-level controller.
\begin{theorem}\label{thm:lifting-guarantees}
Let~$\tuple{\controller, \planner}$ be a $\abs{\vertices}$-memory controller for~$\hmdp$ and
$\policy$ be an equivalent stationary policy in $\mdpplanner{}$.
\begin{itemize}
    \item \emph{\highlight{(Entrance loss)}} Define
    $\latententrancefn_{\room} \colon \directions_{\room} \to \Delta\big({\latentstates}\big)$ and
\vspace{-.5em}
\[
	\entranceloss = \expectedsymbol{\room, \direction \sim \stationary{\policy}} \divergence\fun{{\embed\entrancefn_{\room}\fun{\sampledot \mid \direction}, \,  \latententrancefn_{\room}\fun{\sampledot \mid \direction}}}, 
\]
where
$\stationary{\policy}$ is the stationary measure of $\mdpplanner{}$
under $\policy$
and
\[
\embed\entrancefn_{\room}\fun{\latentstate \mid \direction} = \Prob_{\istate \sim \entrancefn_{\room}\fun{\sampledot \mid \direction}}[\latentstate = \embed_{\room}\fun{\istate}]
\quad \text{for all } \latentstate \in \latentstates;
\]
\item \emph{\highlight{(State coverage)}} Assume that for any training room $\room \in \rooms$ and direction $\direction \in \directions_{\room}$, the projection of the BSCC of $\mdpplanner{}$ under $\policy$ to $\istates_{\room}$ is included in the BSCC of $\room$ under~$\latentpolicy_{\room, \direction}$;
\end{itemize}
Then, there exists a constant $K \geq 0$ so that:
\begin{align*}
      \Big| \, V_{\mdpI}^{\mdp_{\planner}, \policy} -\overline{V}_{\latentmdpI}^{\latentmdp_{\planner}^{\graph}, \controller}\, \Big| \leq 
      \frac{\entranceloss +  K \cdot \expectedsymbol{\room, \direction \sim \stationary{\policy}} \transitionloss^{\room, \direction}}
      {\stationary{\policy}\fun{\sreset} \cdot \fun{1 {-} \discount }}.
\end{align*}
\end{theorem}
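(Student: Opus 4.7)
The high-level idea is to reduce the two-level bound to a single application of Thm.~\ref{thm:initial-value-bound} on the MDP plan $\mdpplanner{}$, viewed as a concrete MDP whose latent abstraction is the ``latent MDP plan'' $\latentmdp_{\planner}$---obtained from $\mdpplanner{}$ by replacing every room~$\room$ by its latent counterpart and every entrance $\entrancefn_{\room}\fun{\sampledot \mid \direction}$ by $\latententrancefn_{\room}\fun{\sampledot \mid \direction}$. Applied on the latent side, Thm.~\ref{thm:succint-mdp-plan} rewrites $\overline{V}_{\latentmdpI}^{\latentmdp_{\planner}^{\graph}, \controller}$ as the value under $\policy$ in $\latentmdp_{\planner}$ (here $\policy$ is the stationary policy in $\mdpplanner{}$ equivalent to $\tuple{\controller, \planner}$ via Thm.~\ref{thm:mdp-planner-equiv}), so both sides of the bound live on MDPs with the same combinatorial state space $\set{\tuple{\istate, \vertex, \varvertex}}$ and the same stationary policy $\policy$.

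Define the componentwise embedding $\embed^{\star}\fun{\tuple{\istate, \vertex, \varvertex}} = \tuple{\embed_{\labelingfn\fun{\vertex}}\fun{\istate}, \vertex, \varvertex}$ and apply Thm.~\ref{thm:initial-value-bound} to the pair $\fun{\mdpplanner{}, \latentmdp_{\planner}}$ under~$\policy$. This immediately produces the denominator $\fun{1 - \discount}\,\stationary{\policy}\fun{\sreset}$; the task reduces to upper-bounding the global transition loss $\transitionloss^{\mdpplanner{}, \policy}$ by $\entranceloss + K\cdot\expectedsymbol{\room, \direction \sim \stationary{\policy}}\transitionloss^{\room, \direction}$. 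I would split the TV integrand of $\transitionloss^{\mdpplanner{}, \policy}$ according to whether the current state is an exit state: on non-exit states it matches, pointwise, the integrand defining some $\transitionloss^{\room, \direction}$; on exit states it matches the TV between $\embed\entrancefn_{\room'}\fun{\sampledot \mid \direction}$ and $\latententrancefn_{\room'}\fun{\sampledot \mid \direction}$. Marginalizing $\stationary{\policy}$ over exit events yields exactly $\entranceloss$ for this second contribution.

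The main obstacle is the intra-room term. Each $\transitionloss^{\room, \direction}$ is averaged against the stationary distribution $\stationary{\latentpolicy_{\room, \direction}}^{\room}$ of the \emph{isolated} room (trained with $\mdpI_{\room}$ as entrance), whereas the intra-room contribution to $\transitionloss^{\mdpplanner{}, \policy}$ is averaged against the conditional of $\stationary{\policy}$ restricted to visits of $\tuple{\room, \direction}$. The state-coverage hypothesis is designed precisely for this point: it ensures that the support of the restricted measure is contained in the BSCC of $\room$ under $\latentpolicy_{\room, \direction}$, where $\stationary{\latentpolicy_{\room, \direction}}^{\room}$ is strictly positive, so a change of measure is well-defined on the finite state space of~$\room$. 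Setting $K$ to be the supremum over $\tuple{\room, \direction}$ of the Radon--Nikodym derivative of the restricted measure with respect to $\stationary{\latentpolicy_{\room, \direction}}^{\room}$ yields the claimed multiplicative constant. Verifying that $K$ is finite and depends only on the two-level model, not on the particular $\tuple{\controller, \planner}$ at hand, is the delicate part of the argument; the remaining manipulations are routine bookkeeping on top of Thms.~\ref{thm:initial-value-bound}, \ref{thm:mdp-planner-equiv}, and~\ref{thm:succint-mdp-plan}.
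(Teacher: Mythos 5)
Your proposal takes essentially the same route as the paper: it applies Thm.~\ref{thm:initial-value-bound} to the MDP plan $\mdpplanner{}$ versus its latent counterpart under $\policy$ (with Thm.~\ref{thm:succint-mdp-plan} used on the latent side to recover $\overline{V}_{\latentmdpI}^{\latentmdp_{\planner}^{\graph},\controller}$), and then bounds the global transition loss by $\entranceloss$ plus a change-of-measure constant times $\expectedsymbol{\room,\direction\sim\stationary{\policy}}\transitionloss^{\room,\direction}$, which is exactly the paper's Thms.~\ref{thm:reusable-components} and~\ref{thm:final-value-bound}, whose $K=\nicefrac{\kappa}{\stationary{\text{continue}}^{\min}}$ is your likelihood-ratio bound split into a reset/exit renormalization and an initial-distribution ratio. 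Only two cosmetic remarks: the intra-room integrand matches the per-room loss via Jensen's inequality rather than pointwise (the plan marginalizes $\action\sim\latentpolicy_{\room,\direction}$ inside the TV), and the theorem only asserts existence of some $K\geq 0$ for the given controller, so independence of $K$ from $\tuple{\controller,\planner}$ need not be established.
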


\highlight{
Essentially, under mild conditions, the guarantees obtained for \emph{individually trained} rooms can be \emph{reused} for the entire two-level environment. By minimizing losses within each room \emph{independently}, the true environment's values increasingly align with those computed in the latent space for the high-level objective.

This
is the building block that enables our technique, as low-level latent policies are trained
\emph{before} performing synthesis. 
}

\section{Case Studies}\label{sec:evaluation}
\highlight{%
While the focus of this work is primarily of a theoretical nature, we show in the following that our theory is grounded
through a navigation domain involving an agent required to reach a distant location while avoiding moving adversaries.
We consider two challenging case studies.
The first one consists of a large grid world of scalable size with a nontrivial observation space.
The second one is a large \texttt{ViZDoom} environment~\cite{DBLP:conf/cig/KempkaWRTJ16} with visual inputs.
}%

\highlight{
Our framework
allows formally verifying the values of the specification in a learned model, providing PAC bounds on the abstraction quality of this model, and synthesizing a controller in such large environments with guarantees. 
Thus, this section aims to show the following:} 
\begin{enumerate*}[(1)]
    \item our method successfully trains latent policies in non-trivial settings;
    \item the theoretical bounds are a good
prediction for the observed behavior;
    \item our low-level policies are reusable and can be composed into a strong global policy.
\end{enumerate*}

\begin{figure}[tb]
    \centering
    \begin{subfigure}{.495\linewidth}
        \centering
        \includegraphics[width=\textwidth,keepaspectratio]{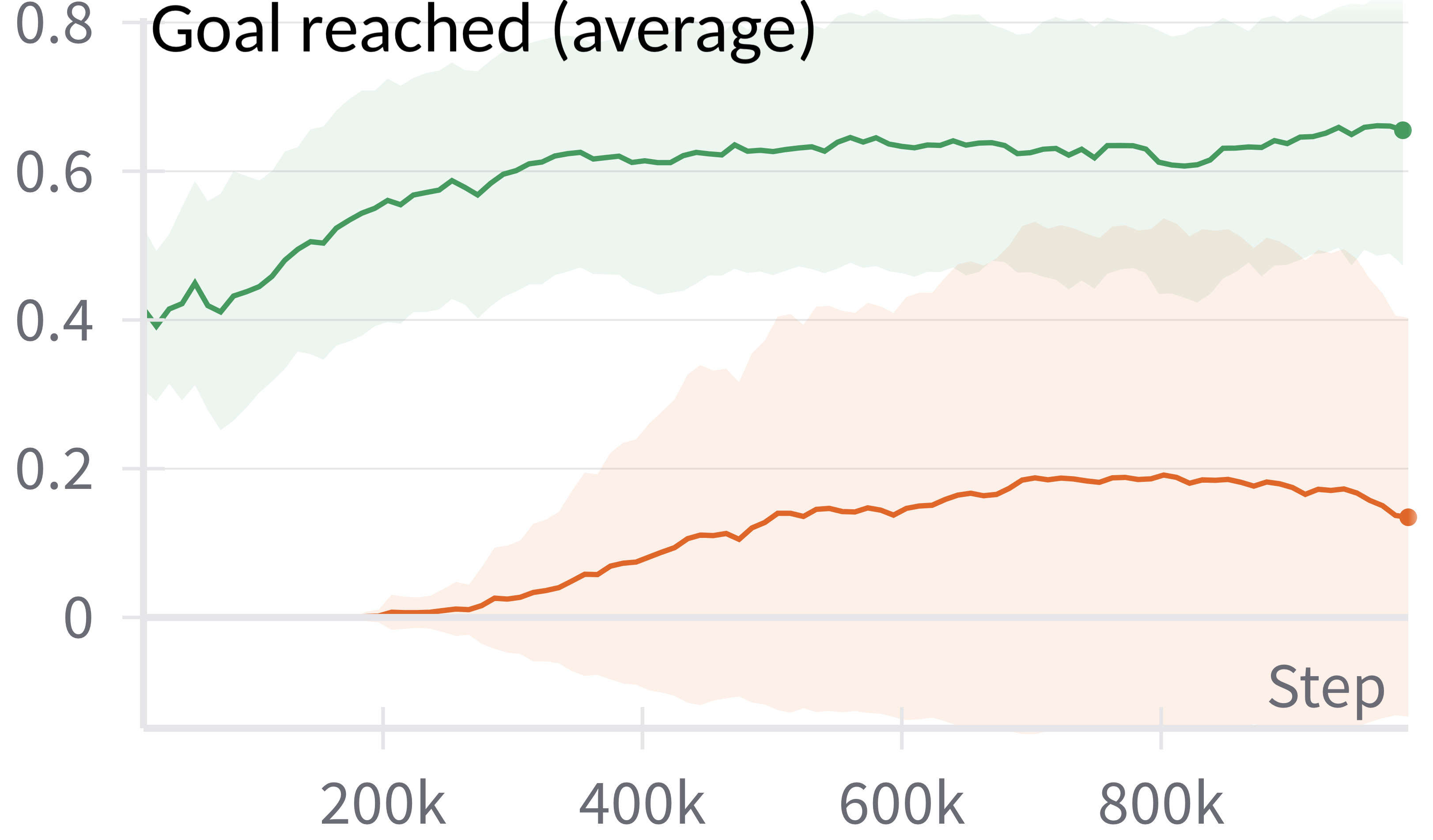}
    \end{subfigure}
    \hfill
    \begin{subfigure}{.495\linewidth}
        \centering
        \includegraphics[width=\textwidth,keepaspectratio]{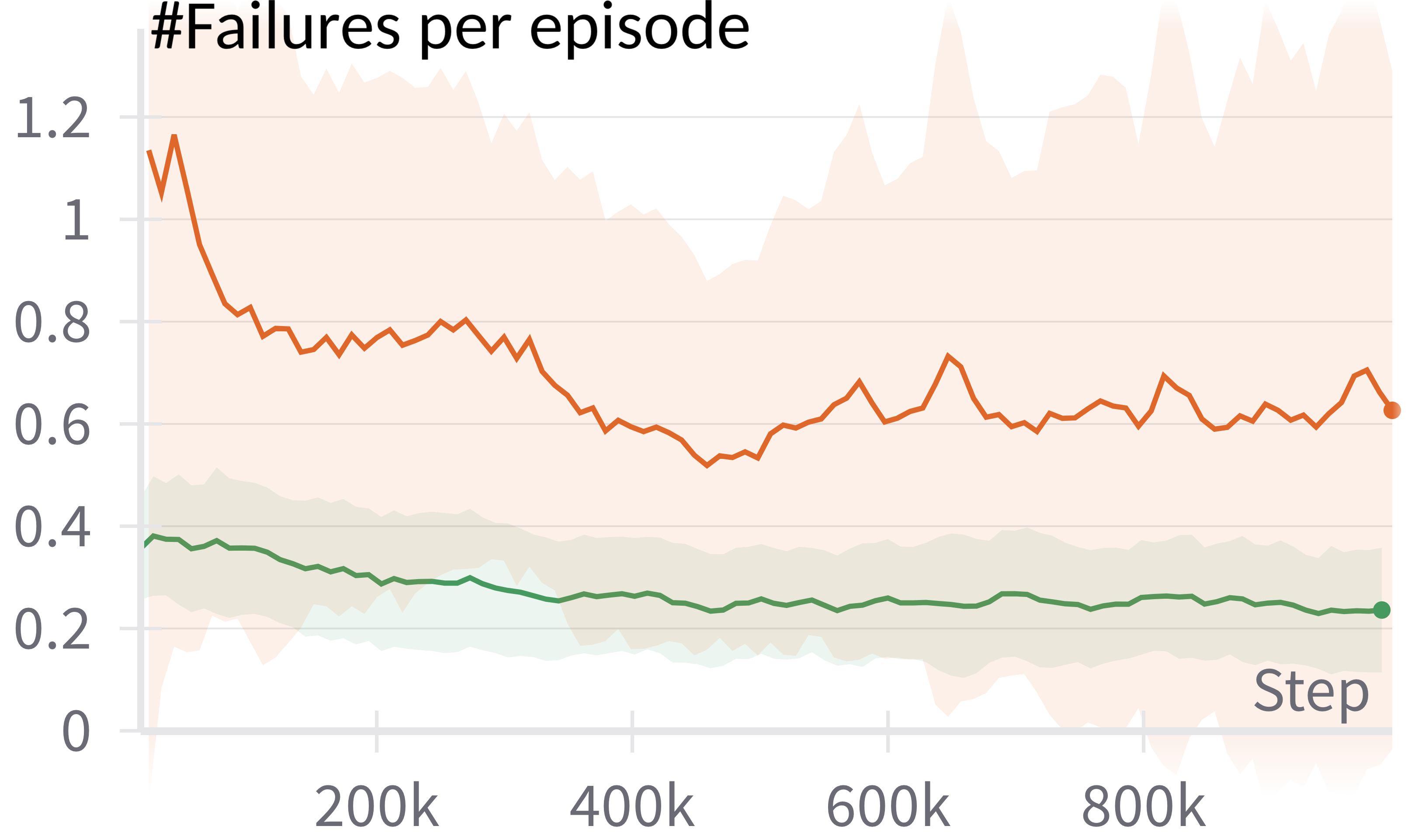}
    \end{subfigure}
    \caption{Evaluation of {\color{green!45!black}WAE-DQN (low-level)} and {\color{red!55!yellow!90!black}DQN (high-level)} policies respectively {\color{green!45!black}in each room/direction} and {\color{red!55!yellow!90!black} in a $9$-room, $20 \times 20$ grid-world environment} (avg.~over $30$ rollouts).
    }
    \label{fig:DRL}
    \Description{The figures show how WAE-DQN fares against DQN in each room/direction in a 9-room 20 by 20 grid-world environment. It is visibly clear that WAE-DQN reaches the goal more often on average and that it fails less often on average.}
\vspace{-.5em}
\end{figure}

\paragraph{Grid world.}
The grid-world environments consist of $N$ rooms of $m \times n$ cells, each containing at most $l$ possible items: walls, entries/exits, power-ups, and $A$ adversaries. The latter patrol, \emph{moving between rooms}, with varying \emph{stochastic behaviors} (along walls, chasing the agent, or fully random). 
\emph{The rooms need not be identical}.
Each state features
(i) a bitmap of rank $4$ and shape $\left[N, l, m, n\right]$ and
(ii) step, power-up, and life-point (\textsc{Lp}) counters.
The
state space is
large and policies may require, e.g.,
convolutional NNs
to process the observations.
Fig.~\ref{fig:DRL} shows that DRL (here, DQN with SOTA extensions and reward shaping, \cite{DBLP:conf/aaai/HesselMHSODHPAS18,DBLP:conf/icml/NgHR99}) struggles to learn for $9$ rooms/$11$ adversaries, while applying WAE-DQN independently in each room
allows learning to satisfy low-level reach-avoid objectives.

\highlight{%
\paragraph{ViZDoom.}
We designed a map for the
video game \emph{Doom} consisting of $N=8$ distinct rooms. The map includes $A$ adversaries that
pursue and attack the agent, reducing the agent's health upon successful hits. Additional adversaries spawn randomly on the map (every ${\sim}60$ steps). Similar to the grid-world environment, adversaries can move freely between rooms. The agent has the ability to shoot; however, missed shots incur a negative reward during the RL phase, penalizing wasted ammunition.
The agent's observations consist of (i) a single frame of the game (\emph{visual input}), (ii) the velocity of the agent along the $x, y$ axes, (iii) the agent's angle w.r.t.~the map, and (iv) its current health.
Notice that the resulting state space is inherently colossal due to the inclusion of these variables.
}%

\begin{wraptable}{r}{.4\linewidth}
    \vspace{-1em}
    \resizebox{\linewidth}{!}{%
    \begin{tabular}{@{}ccc@{}}
    \toprule
    {$d$} & {Grid World} & \highlight{\texttt{ViZDoom}} \\ \midrule
    $\rightarrow$      & 0.50412 & 0.32011                                 \\
    $\leftarrow$       & 0.77787 &          0.44883                      \\
    $\uparrow$         & 0.49631 &                   0.37931             \\
    $\downarrow$       & 0.48058 &                  0.48108              \\ \bottomrule
    \end{tabular}%
    }
    \captionof{table}{PAC bounds \protect\hyperlink{LP}{\ensuremath{\localtransitionlossapprox{d}}}.}
    \label{tab:pac-bounds}
    \vspace{-2em}
\end{wraptable}
\paragraph{Results.}
We use WAE-DQN to train low-level latent models and policies in a $9$-room, $20\times 20$ grid world as well as in the \texttt{ViZDoom} environment. At the start of each episode, the agent is placed in a random room, and the episode concludes successfully when the agent reaches a sub-goal.
Leveraging the representation learning capabilities of WAE-MDPs, the latent space
\begin{table}[t]
\centering
    \resizebox{\linewidth}{!}{%
    \begin{tabular}{@{}lcccccccc@{}}
\toprule
 & $N$ & \textsc{Lp} & $A$ & \multicolumn{2}{c}{avg.~return ($\discount=1$)} & latent value & \multicolumn{2}{c}{avg.~value (original)} \\ \midrule
\multirow{5}{*}{\rotatebox[origin=c]{90}{Grid World}} 
 & $9$  & $1$  & $11$  & $0.5467 \pm 0.1017$  &  & $0.1378$       & $0.07506 \pm 0.01664$    &   \\
 & $9$  & $3$  & $11$  & $0.7 \pm 0.09428$    &  & $0.4343$       & $0.01 \pm 0.00163$       &   \\
 & $25$ & $3$  & $23$  & $0.4933 \pm 0.09832$ &  & $0.1763$       & $0.007833 \pm 0.002131$  &   \\
 & $25$ & $5$  & $23$  & $0.5667 \pm 0.07817$ &  & $0.346$        & $0.00832 \pm 0.00288$    &   \\
 & $49$ & $7$  & $47$  & $0.02667 \pm 0.01491$&  & $0.004229$     & $5.565\text{e{-6}} \pm 7\text{e-6}$ &   \\ \midrule
\multirow{3}{*}{\rotatebox[origin=c]{90}{\texttt{\small \highlight{ViZDoom}}}}
 & $8$  & /    & $8$   & $0.89333 \pm 0.059628$ &  & $0.24171$   & $0.23405 \pm 0.014781$   &   \\
 & $8$  & /    & $14$  & $0.78 \pm 0.064979$    &  & $0.16459$   & $0.16733 \pm 0.023117$   &   \\
 & $8$  & /    & $20$  & $0.39333 \pm 0.11643$  &  & $0.086714$  & $0.06898 \pm 0.017788$   &   \\ \bottomrule
\end{tabular}%
    }
\caption{
Synthesis for 
$\discount = 0.99$.
\highlight{%
\emph{Avg.~return} is the observed, empirical probability of reaching the high-level goal when running the synthesized two-level controller in the environment.
This metric serves as a reference for the controller's performance.
\emph{Latent value} is the predicted value of the high-level objective computed in the latent model.
\emph{Avg.~value} is the empirical value of this objective approximated by simulating the environment under the controller.
}%
}
\vspace{-1.7em}
\label{tab:synthesis}
\end{table}
 generalizes over all rooms: \emph{we only train $4$ policies} (one for each direction). 
 PAC bounds for each direction are reported in Tab.~\ref{tab:pac-bounds} ($\error=0.01$, $\proberror=0.05$).
 \highlight{%
 The lower the bounds, the more accurately the latent model is guaranteed to represent the true underlying dynamics (Thm.~\ref{thm:init-values-pac-learnable}).
}%
From those policies,
we apply our synthesis procedure to construct a two-level controller.
The results are shown in Tab.~\ref{tab:synthesis}.
To emphasize the reusability of the low-level components, we modify the environments by significantly increasing both the number of rooms and adversaries in the grid world (up to $50$ each)
\highlight{%
and the initial number of adversaries in the \texttt{ViZDoom} environment (from $8$ to $20$),
}%
while keeping the same latent models and policies unchanged.

In the grid world, the predicted latent values are consistent with the observed ones and comprised of the approximate return and values in the environment (averaged over $30$ rollouts).
\highlight{%
 In \texttt{ViZDoom}, the PAC bounds (Tab.~\ref{tab:pac-bounds}) are lower, theoretically indicating that the latent model is of higher quality and greater accuracy. This theoretical insight is supported by the results, as the latent values are closer to the empirical, observed ones.
}%

\section{Conclusion}\label{sec:conclusion}
Our approach enables synthesis in environments where traditional formal synthesis does not scale.
Given a high-level map,
we integrate RL in the low-level rooms by training latent policies, which ensure PAC bounds on their value function.
Composing with the latent policies allows to construct a high-level planner in a two-level model, where the guarantees can be lifted.
Experiments show the feasibility in scenarios that are even challenging for pure DRL.

While we believe the map is a mild requirement, future work involves its relaxation to ``emulate'' synthesis with only the specification as input (end-to-end).
In that sense, integrating skill discovery~\cite{DBLP:conf/icml/BagariaS021} or goal-oriented~\cite{DBLP:conf/ijcai/LiuZ022} RL are promising directions.
The problem tackled in this work involves, in essence, multiple objectives. 
A natural extension is to incorporate traditional multi-objective reasoning (e.g., \cite{DBLP:conf/stacs/ChatterjeeMH06,DBLP:journals/aamas/HayesRBKMRVZDHH22}) into the decision process, allowing to reason about the trade-offs between the different low-level objectives.

\begin{acks}

We thank Sterre Lutz and Willem Röpke for providing valuable feedback during the preparation of this manuscript.

This research was supported by the Belgian Flemish AI Research Program, the ``DESCARTES'' iBOF and ``SynthEx'' (G0AH524N) FWO  projects; the Dutch Research Council (NWO) Talent Programme (VI.Veni.222.119); 
Independent Research Fund Denmark (10.46540/3120-00041B), DIREC - Digital Research Centre Denmark (9142-0001B), Villum Investigator Grant S4OS (37819); and the ISF grant (1679/21). 
This work was done in part while Anna Lukina was visiting the Simons Institute for the Theory of Computing.

\end{acks}

\bibliographystyle{ACM-Reference-Format} 
\bibliography{references}

\newpage
\appendix
\onecolumn
\section*{Appendix}

\highlight{
\section{Further Related Work}
\paragraph{On the importance of reachability specifications.}
RL --- and in particular, DRL --- algorithms lack both theoretical and practical guarantees. Our approach aims to advance towards formal guarantees in partially known environments.
In our work, we consider \emph{reach-avoid specifications}, which have attracted considerable attention in recent years from the AI community --- particularly within goal-oriented RL~\cite{DBLP:conf/ijcai/LiuZ022} and reliable safe AI~\cite{dalrymple2024guaranteedsafeaiframework}, both prominent research areas.

Importantly,
\emph{reachability properties are building blocks to verify specifications in stochastic systems}, e.g., for LTL~\cite{DBLP:conf/focs/Pnueli77} or PCTL~\cite{DBLP:journals/fac/HanssonJ94}. Specifically, verifying specifications in an MDP typically boils down to checking the reachability to recurrent regions within a product of the MDP and an (omega-regular) automaton, or a tree decomposition of the formula, involving repeated reachability to satisfiability regions within the MDP~\cite{BK08,DBLP:reference/mc/BaierAFK18}. Although safety can be reduced to reachability --- minimize the value of reaching bad states --- we directly included safety in the specifications for convenience, given its widespread demand in RL. 
The analysis of such properties is necessary and the first step to enable reactive synthesis in settings like ours when limited information about the environment’s dynamics is available.

\paragraph{Multi-objective reasoning.}
The framework introduced in this paper provides latent models and policies that allow to formally reason about the behaviors of the agent.
Real-world systems are complex and often involve multiple trade-offs between (possibly conflicting) constraints, costs, rewards, and specifications.
In fact, the willingness to achieve sub-goals at the lower level of the environment while ensuring that a set of safety requirements are met is a typical example of a multi-objective problem.
In essence, then, our problem involves multiple objectives, not just at the same decision level, but in a multi-level classification of decisions.

Our framework tackles \emph{one} aspect of multi-objective decision making, which we note is not standard: traditional methods \cite{reymond2019pareto,DBLP:conf/icml/AbelsRLNS19,DBLP:conf/atal/ReymondBN22,DBLP:journals/aamas/HayesRBKMRVZDHH22,DBLP:conf/atal/AlegreBRN023,DBLP:conf/stacs/ChatterjeeMH06,DBLP:journals/lmcs/EtessamiKVY08,DBLP:conf/atva/ForejtKP12,DBLP:conf/tacas/HartmannsJKQ18,DBLP:conf/tacas/DelgrangeKQR20} involve the ability to reason about the multiple trade-offs by conducting multi-objective analyses (e.g., generating the \emph{Pareto} curve/set/frontier, embedding all the compromises).
In contrast, we focus on dealing and \emph{composing with the different objectives} in order to satisfy the high-level specification.

We note that \citet{DBLP:conf/tacas/WatanabeVHRJ24} consider multi-level environments {while} approximating Pareto curves to deal with the compromises incurred by the low-level tasks. 
However, that approach relies on a model and thus exhibits tractability issues while being inapplicable when the dynamics are not fully known. 
Furthermore, the formalization of our multi-level environment is more permissive and allows to encode information from neighboring rooms (e.g., obstacles or adversaries moving between rooms), which also requires memory for the planner (see \secref{high_level} for more information on memory requirements).
}
\section{Remark about Episodic Processes and Ergodicity}\label{rmk:ergodicity}
Assumption~\ref{assumption:episodic} implies ergodicity of both~$\mdp$ and $\latentmdp$ under mild conditions~\cite{DBLP:conf/nips/Huang20}.
In ergodic MDPs, each state is almost surely visited infinitely often~\cite{BK08}.
Thus, for unconstrained reachability goals ($B = \emptyset$), while a discount factor still provides insights into how quickly the objective is achieved, optimizing the values associated with reaching the target~$T$ before the episode concludes ($B = \set{\sreset}$) is often more appealing. This involves finding a policy~$\policy$ maximizing $\val{\policy}{\reachavoid{T}{B=\set{\sreset}}}{\discount}{}$. In essence, this is how an RL agent is trained: learning to fulfill the low-level objective before the episode concludes.
\section{Proofs from \secref{low_level}}\label{appendix:proofs:low-level}
\paragraph{Notation} For convenience, in the remaining of this Appendix, we may write $\istate, \action \sim \stationary{\policy}$ as shorthand for the distribution over~$\istates\times \actions$ obtained by sampling~$\istate$ from $\stationary{\policy}$ and then sampling~$\action$ from~$\policy(\sampledot \mid \istate)$.
\begin{proof}[Proof of Thm.~\ref{thm:initial-value-bound}]
    Note that
    \begin{equation*}   
    \left| \values{\latentpolicy}{\objective}{\istate} -  \latentvalues{\latentpolicy}{\objective}{\embed\fun{\istate}} \right| \leq \frac{1}{\stationary{\latentpolicy}\fun{\istate}} \expectedsymbol{\istate' \sim \stationary{\latentpolicy}}\left| \values{\latentpolicy}{\objective}{\istate'} -  \latentvalues{\latentpolicy}{\objective}{\embed\fun{\istate'}} \right|
    \end{equation*}
    for any $\istate \in \istates$.
    Since $\sinit$ is almost surely visited episodically, \emph{restarting} the MDP (i.e., visiting $\sreset$) is a measurable event, meaning that $\sreset$ has a non-zero probability  $\stationary{\latentpolicy}\fun{\sreset} \in \mathopen( 0, 1\mathclose)$.
    This gives us:
    \begin{align*}
        & \left| \values{\latentpolicy}{\objective}{} -  \latentvalues{\latentpolicy}{\objective}{} \right| \\
        =&  \left| \expectedsymbol{\istate \sim \mdpI\, } \values{\latentpolicy}{\objective}{\istate} -  \expectedsymbol{\latentstate \sim \latentmdpI\, } \latentvalues{\latentpolicy}{\objective}{\latentstate} \right| \\
        =&  \frac{1}{\discount} \left| \expectedsymbol{\istate \sim \mdpI} \left[ \discount \cdot \values{\latentpolicy}{\objective}{\istate}\right] -  \expectedsymbol{\latentstate \sim \latentmdpI} \left[ \discount \cdot\latentvalues{\latentpolicy}{\objective}{\latentstate}\right] \right| \\
        = & \frac{1}{\discount} \left| \values{\latentpolicy}{\objective}{\sinit} -  \latentvalues{\latentpolicy}{\objective}{\embed\fun{\sinit}} \right| \tag{by Assumption~\ref{assumption:episodic}}\\
        \leq& \, \frac{1}{\discount\stationary{\latentpolicy}\fun{\sinit}} \expectedsymbol{\istate \sim \stationary{\latentpolicy}} \left| \values{\latentpolicy}{\objective}{\istate} -  \latentvalues{\latentpolicy}{\objective}{\embed\fun{\istate}} \right|  \\
        \leq& \frac{\localtransitionloss{\stationary{\latentpolicy}}}{\stationary{\latentpolicy}\fun{\sinit}\fun{1 {-} \discount}}.
        \tag{by Lem.~\ref{theorem:value-diff-bound}}
    \end{align*}
\end{proof}

\begin{proof}[Proof of Lem.~\ref{lem:pac-bounds}]
By definition of the total variation distance, we have
\begin{align*}
\transitionloss{} =&\ \expectedsymbol{\istate, \action \sim \stationary{\latentpolicy}}D\fun{\phi\probtransitions\fun{\sampledot \mid \istate, \action}, \latentprobtransitions\fun{\sampledot \mid \embed\fun{{\istate}}, \action}} \\
=&\ \expected{\istate, \action \sim \stationary{\latentpolicy}}{\frac{1}{2} \sum_{\latentstate' \in \latentstates} \abs{\Prob_{\istate' \sim \probtransitions\fun{\sampledot \mid \istate, \action}}\left[\embed\fun{\istate'} = \latentstate'\right] - \latentprobtransitions\fun{\latentstate' \mid \istate, \action}}}\\
=&\ \expected{\istate, \action \sim \stationary{\latentpolicy}}{\frac{1}{2} \sum_{\latentstate' \in \latentstates} \abs{\expectedsymbol{\istate' \sim \probtransitions\fun{\sampledot \mid \istate, \action}}\condition{\embed\fun{\istate'} = \latentstate'} - \latentprobtransitions\fun{\latentstate' \mid \istate, \action}}}.
\end{align*}
Notice that this quantity cannot be approximated from samples distributed according to $\stationary{\latentpolicy}$ alone: intuitively, we need to have access to the original transition function $\probtransitions$ to be able to estimate the expectation $\expectedsymbol{\istate' \sim \probtransitions\fun{\sampledot \mid \istate, \action}}\condition{\embed\fun{\istate'} = \latentstate'}$ for each single point drawn from $\stationary{\latentpolicy}$. 

Instead, consider now the following upper bound on $\transitionloss$:
\begin{equation*}
    \localtransitionloss{} \leq \expectedsymbol{\istate, \action \sim \stationary{\latentpolicy}} \expectedsymbol{\istate' \sim \probtransitions\fun{\sampledot \mid \istate, \action}} D\fun{\embed\fun{\sampledot \mid \istate'}, \latentprobtransitions\fun{\sampledot \mid \latentstate, \action}} = \localtransitionlossupper{},
\end{equation*}
where $\embed\fun{\latentstate' \mid \istate'}$ is defined as $\condition{\embed\fun{\istate'} = \latentstate'}$ for any $\latentstate' \in \latentstates$.
This bound directly follows from Jensen's inequality. We know from~\cite{DBLP:conf/aaai/DelgrangeN022} that $\localtransitionlossapprox{} + \error \leq \localtransitionlossupper{}$ with probability at most $\exp\fun{-2 \T \error^2}$.
We recall the proof for the sake of presentation:

    \begin{align*}
        &\ \localtransitionlossupper{\stationary{\latentpolicy}} \\
        =&\ \expectedsymbol{\istate, \latentaction, \istate' \sim \stationary{\latentpolicy}} D\fun{\embed\fun{\sampledot \mid \istate'},\latentprobtransitions\fun{\sampledot \mid \embed\fun{\istate}, \action}} \\
        =&\ \expected{\istate, \latentaction, \istate' \sim \stationary{\latentpolicy}}{ \frac{1}{2} \sum_{\latentstate' \in \istates} \left| \embed\fun{\latentstate' \mid \istate'} - \latentprobtransitions\fun{\latentstate' \mid \embed\fun{\istate}, \latentaction} \right|} \\
        =&\ \expected{\istate, \latentaction, \istate' \sim \stationary{\latentpolicy}}{ \frac{1}{2} \cdot \fun{\fun{1 - \latentprobtransitions\fun{\embed\fun{\istate'} \mid \embed\fun{\istate}, \latentaction}} + \sum_{\latentstate' \in \istates \setminus\set{\embed\fun{\istate'}}} \left| 0 - \latentprobtransitions\fun{\latentstate' \mid \embed\fun{\istate}, \latentaction} \right|}}
        \tag{because $\embed\fun{\latentstate' \mid \istate'} = 1$ if $ \embed\fun{\istate'} = \latentstate'$ and $0$ otherwise} \\
        =&\ \expected{\istate, \latentaction, \istate' \sim \stationary{\latentpolicy}}{ \frac{1}{2} \cdot \fun{\fun{1 - \latentprobtransitions\fun{\embed\fun{\istate'} \mid \embed\fun{\istate}, \latentaction}} + \sum_{\latentstate' \in \istates \setminus\set{\embed\fun{\istate'}}} \latentprobtransitions\fun{\latentstate' \mid \embed\fun{\istate}, \latentaction}}} \\
        =&\ \expected{\istate, \latentaction, \istate' \sim \stationary{\latentpolicy}}{\frac{1}{2} \cdot 2 \cdot \fun{1 - \latentprobtransitions\fun{\embed\fun{\istate'} \mid \embed\fun{\istate}, \latentaction}}} \\
        =&\ \expected{\istate, \latentaction, \istate' \sim \stationary{\latentpolicy}}{1 - \latentprobtransitions\fun{\embed\fun{\istate'} \mid \embed\fun{\istate}, \latentaction}}.
    \end{align*}
    By Hoeffding's inequality, we obtain that $\localtransitionlossapprox{} + \error \leq \localtransitionlossupper{}$ with probability at most $\exp\fun{-2\T\error^2}$.
    Equivalently, this means that $\localtransitionlossapprox{} + \error > \localtransitionlossupper{}$ with at least probability $1 - \exp\fun{-2\T\error^2}$.
    The fact that $\localtransitionlossapprox{}{} + \error > \localtransitionlossupper{} \geq \transitionloss$ finally yields the bound.

    By applying Hoeffding's inequality again, we obtain that with at most probability $\exp\fun{{-2\T\error^2}}$, we have $\stationaryapprox - \error \geq \stationary{\latentpolicy}\fun{\sreset}$.
    By the union bound, we have 
    \[
        \Prob\fun{\localtransitionlossapprox{} + \error \leq \localtransitionlossupper{} \text{ or } \stationaryapprox - \error \geq \stationary{\latentpolicy}\fun{\sreset}} \leq \exp\fun{-2\T\error^2} + \exp\fun{-2T\error^2}.
    \]
    Finding a $\T\geq0$ which yields $\delta \geq 2 \exp\fun{-2\T\error^2}$ is sufficient to ensure the bound.
    In that case, we have 
    \begin{align}
        \proberror \geq 2\exp\fun{-2\T\error^2} \Leftrightarrow\nicefrac{\proberror}{2} \geq \exp\fun{-2\T\error^2} \Leftrightarrow \log\fun{\nicefrac{\proberror}{2}} \geq {-2\T\error^2} \Leftrightarrow \T \geq \frac{-\log\fun{\nicefrac{\proberror}{2}}}{2\error^2}. \label{eq:proof-pac-bounds-2}
    \end{align}
    Then, we have that with at least probability $1 - \proberror$, $\localtransitionlossapprox{} + \error > \transitionloss$ and $\stationaryapprox - \error < \stationary{\latentpolicy}\fun{\sreset}$ if $\T \geq \left\lceil \nicefrac{-\log\fun{\proberror}}{2 \error^2}\right\rceil$.
\end{proof}

\begin{proof}[Proof of Thm.~\ref{thm:init-values-pac-learnable}]
 Let $\varerror, \proberror > 0$,
 then we know by Lem.~\ref{theorem:value-diff-bound}, Thm.~\ref{thm:initial-value-bound}, and Lem.~\ref{lem:pac-bounds} that
 \begin{enumerate}[(i)]
    \item $
       \expectedsymbol{\istate \sim \stationary{\latentpolicy}} \left|\values{\latentpolicy}{}{\istate} - \latentvalues{\latentpolicy}{}{\embed\fun{\istate}} \right| \leq \frac{\discount\transitionloss}{1 - \discount} \leq \frac{\discount\fun{\localtransitionlossapprox{} + \varerror}}{1 - \discount}
    $, with probability $1 - \proberror$. 
    Then, to ensure an error of at most $\error > 0$, we need to set $\varerror$ such that:
    \begin{align*}
    && \frac{\discount\fun{\localtransitionlossapprox{} + \varerror}}{1 - \discount} & \leq  \frac{\discount{\localtransitionlossapprox{}}}{1 - \discount} + \varepsilon &
    &\iff& \frac{\discount \varerror}{1 - \discount} & \leq \error  &
    &\iff& \varerror & \leq \frac{\error \fun{1 - \discount}}{\discount} .&
    \end{align*}
    Then, by Lem.~\ref{lem:pac-bounds}, we need $\T \geq \left\lceil \frac{- \log \proberror}{2 \varerror^2}\right\rceil = \left\lceil \frac{- \discount^2 \log \proberror}{2 \error^2(1 - \discount)^2}\right\rceil$ samples to provide an error of at most $\error$ with probability $1 - \proberror$.
    \item
     $
      \left| \values{\latentpolicy}{}{} -  \latentvalues{\latentpolicy}{}{} \right|
        \leq \frac{\localtransitionloss{\stationary{\latentpolicy}}}{\stationary{\latentpolicy}\fun{\sinit}\fun{1 {-} \discount}}
        \leq \frac{ \localtransitionlossapprox{} + \varerror}{\fun{\stationaryapprox - \varerror}\fun{1 - \discount}}
        $
 with probability at least $1 - \proberror$.
     Then, to ensure an error of at most $\error > 0$, we need to set $\varerror$ such that:
    \begin{align*}
    & \frac{\localtransitionlossapprox{}}{\stationaryapprox \cdot \fun{1 - \discount}} + \error \geq  \frac{ \localtransitionlossapprox{} + \varerror}{\fun{\stationaryapprox - \varerror}\fun{1 - \discount}}\\
    \iff & \fun{\stationaryapprox - \varerror} \fun{\frac{ \localtransitionlossapprox{}}{\stationaryapprox } + {\error \fun{1 - \discount}}} \geq \localtransitionlossapprox{} + \varerror \\
    \iff & {\color{red}\localtransitionlossapprox{}} + \stationaryapprox \cdot \error \fun{1 - \discount} - \frac{ \localtransitionlossapprox{} \cdot \varerror}{\stationaryapprox} - \error \cdot \varerror \fun{1 - \discount} \geq  {\color{red}\localtransitionlossapprox{}} + \varerror \\
    \iff & \stationaryapprox \cdot \error \fun{1 - \discount} \geq \varerror + \frac{ \localtransitionlossapprox{} \cdot \varerror}{\stationaryapprox} + \error \cdot \varerror \fun{1 - \discount}
    = \varerror \fun{1 + \frac{ \localtransitionlossapprox{}}{\stationaryapprox} + \error \fun{1 - \discount}}\\
    \iff & \frac{\stationaryapprox \cdot \error \fun{1 - \discount}}{1 + \frac{ \localtransitionlossapprox{}}{\stationaryapprox} + \error \fun{1 - \discount}} \geq \varerror \Leftrightarrow \frac{\stationaryapprox^2 \cdot  \error \fun{1 - \discount}}{\localtransitionlossapprox{} + \stationaryapprox \cdot  \fun{1 + \error\fun{1 - \discount}}} \geq \varerror.
    \end{align*}
    Notice that this upper bound on $\varerror > 0$ is well defined since 
    \begin{multicols}{2}
    \begin{enumerate}[(a)]
     \item $\stationaryapprox^2 \cdot \error \fun{1 - \discount} > 0$, and
     \item $\localtransitionlossapprox{} + \stationaryapprox \fun{1 + \error\fun{1 - \discount}} > 0$.
    \end{enumerate}
    \end{multicols}
    Then, setting $\varerror \leq \frac{\stationaryapprox^2 \cdot \error \cdot \fun{1 - \discount}}{\localtransitionlossapprox{} + \stationaryapprox \fun{1 + \error\fun{1 - \discount}}} $ means by Lem.~\ref{lem:pac-bounds} that we need \[\T \geq \left\lceil \frac{-\log\fun{\nicefrac{\proberror}{2}}}{2 \varerror^2}\right\rceil \geq \left\lceil\frac{-\log\fun{\nicefrac{\delta}{2}}\fun{ \localtransitionlossapprox{} + \stationaryapprox\fun{1 + \error\fun{1 - \discount}}}^2}{2 \stationaryapprox^4 \cdot \error^2\fun{1 - \discount}^2}\right\rceil\] samples to provide an error of at most $\error$ with probability at least $1 - \proberror$.
    \qedhere
 \end{enumerate}
\end{proof}

\section{WAE-DQN}\label{appendix:wae-dqn}
In this section, we give additional details on WAE-DQN, which combines representation (WAE-MDP) and policy (DQN) learning.
Before presenting the algorithm, we briefly recall basic RL concepts.
\paragraph{Q-Learning.}%
Q-learning is an RL algorithm whose goal is to learn the optimal solution of the Bellman equation~\cite{DBLP:books/wi/Puterman94}:
$Q^{*}\fun{\istate, \action} = \expectedsymbol{\istate' \sim \probtransitions\fun{\sampledot \mid \istate, \action}} \left[\reward\fun{\istate, \action, \istate'} + \discount \cdot \max_{\action' \in \actions}Q^{*}\fun{\istate', \action'}\right]$ for any $(\istate,\action) \in \istates \times \actions$, with
\[\expectedsymbol{\istate_0 \sim \mdpI} \left[\max_{\action \in \actions} Q^{*}\fun{\istate_0, \action}\right] = \max_{\policy} \expectedsymbol{\mdppath \sim \Pr^{\mdp}_{\policy}}\left[ \sum_{i \geq 0} \discount^i \cdot r_i \right].\] 
To do so, Q-learning relies on learning \emph{Q-values} iteratively:
at each step $i \geq 0$, a transition $\tuple{\istate, \action, r, \istate'}$ is drawn in $\mdp$, and 
$
    Q_{i + 1}\fun{\istate, \action} = Q_i\fun{\istate, \action} + \alpha (r + \discount \allowbreak \max_{\action' \in \actions} \allowbreak Q_{i}\fun{\istate', \action'} - Q_i\fun{\istate, \action})
$
for a given learning rate $\alpha \in \mathopen( 0, 1\mathclose)$. Under some assumptions, $Q_i$ is guaranteed to converge to $Q^{*}$~\cite{DBLP:journals/ml/Tsitsiklis94}.
Q-learning is implemented by maintaining a table of size $\abs{\istates \times \actions}$ of the Q-values. This is intractable for environments with large or continuous state spaces. 

\paragraph{Deep Q-networks} ({DQN}, ~\cite{DBLP:journals/nature/MnihKSRVBGRFOPB15}) is an established technique to scale Q-learning (even for continuous state spaces), at the cost of convergence guarantees, by approximating the Q-values in parameterized NNs. 
By fixing a network $Q({\sampledot, \theta})$ and, for stability~\cite{DBLP:journals/tac/TsitsiklisR97},
periodically fixing a parameter assignment $\widehat{\theta}$, DQN obtains \emph{the target network} $Q({\sampledot, \widehat{\theta}})$.
$Q$-values are then optimized by applying gradient descent on the following loss function:
\begin{equation}\label{eq:dqn}
    \ldqn(\theta) = \expectedsymbol{\istate, \action, r, \istate' \sim \mathcal{B}} \fun{r + \discount \max_{\action' \in \actions} Q(\istate', \action' \,;\, \widehat{\theta}) - Q\fun{\istate, \action \,;\, \theta}}^2,
\end{equation}
where $\policy^{\epsilon}$ is an \emph{$\epsilon$-greedy} exploration strategy, i.e., $\policy^{\epsilon}\fun{\action \mid \istate}=\fun{1 - \epsilon} \condition{a = \arg\max_{a'} Q\fun{\istate, \action'}} + \nicefrac{\epsilon}{\abs{\actions}}$ for some $\epsilon \in \mathopen(0, 1\mathclose)$.
In practice, $\stationary{\policy}$ is emulated by a \emph{replay buffer} $\mathcal{B}$ where encountered transitions  are stored and then sampled later on to minimize $\ldqn(\theta)$.

\emph{Wasserstein auto-encoded MDP} (WAE-MDP, \cite{delgrange2023wasserstein}) is a distillation technique providing PAC guarantees. Given an MDP $\mdp$, a policy $\policy$ and the number of states in $\latentmdp$, the transition probabilities and embedding function~$\embed$ (both modeled by NNs) are learned by minimizing~$\transitionloss$ via gradient descent. Also, a policy~$\latentpolicy$ in~$\latentmdp$ is distilled
such that  $\latentmdp$ 
exhibits \emph{bisimilarly close}~\cite{DBLP:conf/popl/LarsenS89,DBLP:journals/ai/GivanDG03,DBLP:conf/aaai/DelgrangeN022} behaviors to~$\mdp$ when executing~$\latentpolicy$, providing PAC guarantees on the difference of the two values from Lem.~\ref{theorem:value-diff-bound}.
WAE-MDPs enjoy \emph{representation} guarantees that any states clustered to the same latent representation yield close values when $\transitionloss$ is minimized~\cite{DBLP:conf/aaai/DelgrangeN022}: 
for any latent policy $\latentpolicy$ and $\istate_1, \istate_2 \in \istates$, $\embed\fun{\istate_1} = \embed\fun{\istate_2}$ implies
$\abs{\values{\latentpolicy}{}{\istate_1} - \values{\latentpolicy}{}{\istate_2}} \leq \frac{\discount \transitionloss}{1 - \discount} \fun{\nicefrac{1}{\stationary{\latentpolicy}\fun{\istate_1}} + \nicefrac{1}{\stationary{\latentpolicy}\fun{\istate_2}} }$.

\begin{figure}[b]
    \centering
    \includegraphics[width=.6\linewidth]{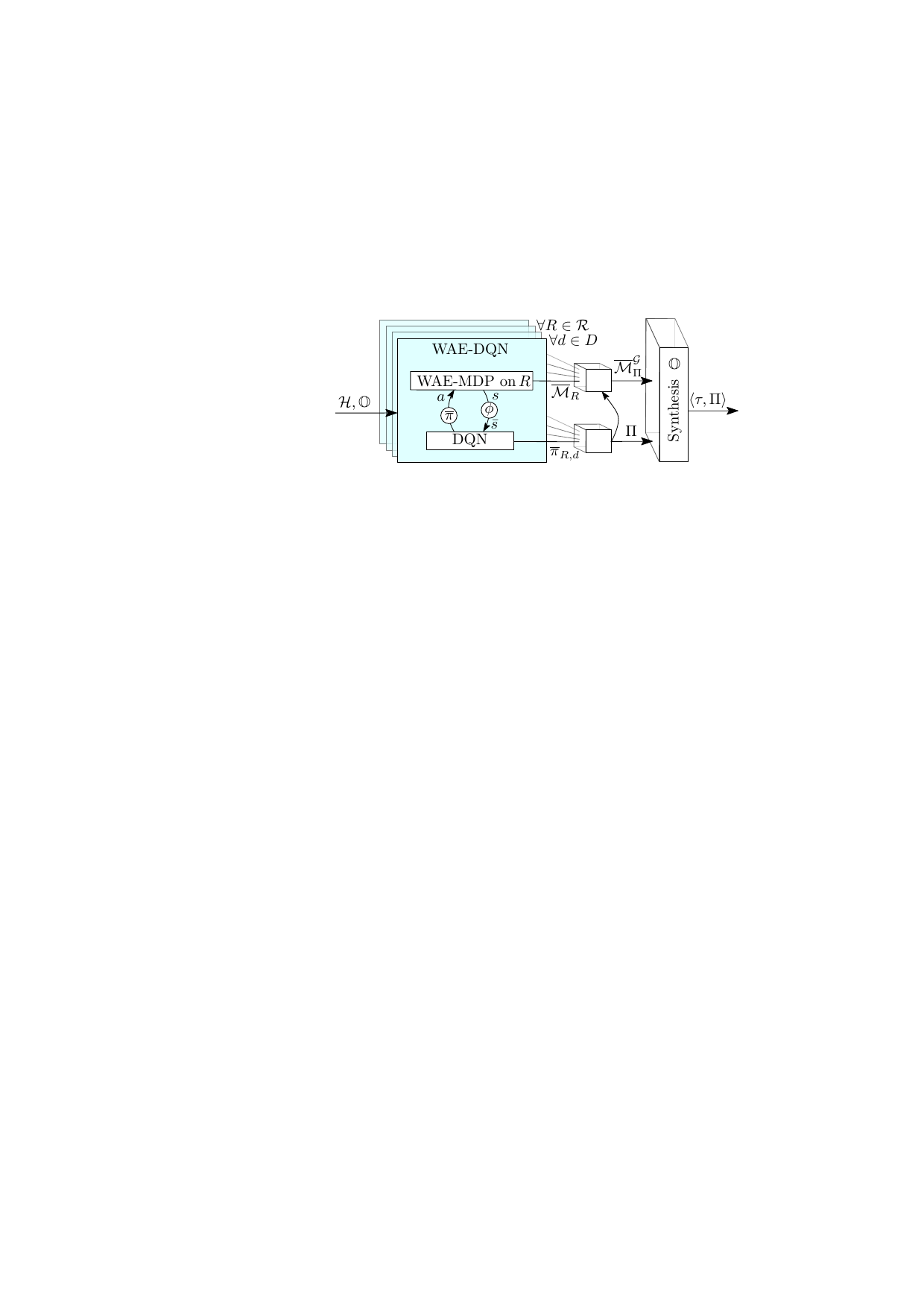}
    \caption{
    Given
    $\hmdp$ and
    $\objective$, we run WAE-DQN in each room $\room\in\rooms$ and direction $\direction\in\directions$ in parallel,
    yielding embedding $\embed$, latent MDPs, and policies $\planner$ with PAC guarantees.
    We then synthesize planner $\tau$ to maximize $\objective$ in succinct model 
    $\latentmdp_\planner^\graph$, aggregated as per the map of $\hmdp$, given as graph $\graph$.}
    \label{fig:approach-overview}
\end{figure}
\begin{algorithm}
\caption{WAE-DQN}\label{algo:wae-dqn}
\DontPrintSemicolon
\KwIn{%
% \emph{params.:} $\theta_{\text{WAE}}$, \ $\theta_{\text{DQN}}$, \ and $\iota$ of resp. $\latentmdp$, $Q$, $\embed$;
% \emph{other params.:}
steps $\mathcal{T}$, model updates $\mathcal{N}$, batch sizes $B_{\text{WAE}}, B_{\text{DQN}}$, and  $\alpha, \epsilon \in \mathopen(0, 1\mathclose)$;
%was params.\ of $\embed$, $\latentmdp$, $Q$ resp.\ denoted $\iota$, $\theta_{\text{WAE}}$, $\theta_{\text{DQN}}$; replay buffer $\mathcal{D}$; no.\ of steps $T$; no.\ of model updates $N$; $\alpha, \epsilon \in \mathopen(0, 1\mathclose)$; and batch sizes $B_{\text{WAE}}, B_{\text{DQN}}$
} 
%\SetNoFillComment
\SetKwComment{Comment}{$\triangleright$\ }{}
\SetCommentSty{textit}
\LinesNotNumbered 
\SetKwFor{RepTimes}{repeat}{times}{end}
\SetKwBlock{Begin}{function}{end function}
\smallskip
%$t \leftarrow 0$\;
% $\widehat{\embed}\fun{\sampledot \,;\, \widehat{\iota}} \leftarrow$ \textbf{copy} $\embed$ with parameters $\iota$\;
% $\widehat{Q}(\sampledot\, ;\, \widehat{\theta}) \leftarrow$ \textbf{copy} $Q$ with parameters $\theta_{\text{DQN}}$\;
Initialize the taget parameters: $\tuple{\hat{\iota}, \hat{\theta}_{\text{DQN}}}\leftarrow $ \textbf{copy} the parameters $\tuple{\iota, \theta_{\text{DQN}}}$ \;
Initialize replay buffer $\mathcal{B}$ with transitions from random exploration of $\mdp$\;
%\While{$t < T$}{
    %\RepTimes{$N_{\text{collect}}$}{
    \For{$t \in \set{1, \dots, \T}$ with $\istate_0 \sim \mdpI$}{
        Embed $\istate_t$ into the latent space: $\latentstate \leftarrow \embed\fun{\istate_t}$ \;
        Choose action $a_t$: 
        $\begin{cases} 
        \text{w.p.  }\fun{1 \!-\! \epsilon}\text{, define }a_t = \arg\max_{\action} Q\fun{\latentstate, \action}\text{, and }\\
        \text{w.p. }\epsilon\text{, draw }\action_t \text{ uniformly from } $\actions$ \end{cases}$\;
        %was: Get $ \arg\max_{\action} Q\fun{\latentstate, \action}$ w.p. $\fun{1 \!-\! \epsilon}$ or a random action w.p. $\epsilon$: $\action_t \sim \latentpolicy^{\epsilon}\fun{\sampledot \mid \latentstate}$\;
        Execute $\action_t$ in the environment $\mdp$, {receive} reward $r_t$, and {observe} $\istate_{t + 1}$\;
        %was Store the resulting transition in the RB:
        Store the transition in the replay buffer:
        $\mathcal{B} \leftarrow \mathcal{B} \cup \set{\tuple{\istate_t, \action_t, r_t, \istate_{t + 1}}}$ \;
        %$t \leftarrow t + 1$
    %}
	% \Comment{update the WAE-MDP model}
    \RepTimes{$\mathcal{N}$}{
        {Sample a batch of size $B_{\text{WAE}}$ from $\mathcal{B}$}:
        $X \leftarrow \set{\tuple{\istate, \action, r, \istate'}_i}_{i = 1}^{B_{\text{WAE}}} \sim \mathcal{B}$ \;
        % $\mathit{Update}\fun{\transitionloss,\, \mathcal{B}, \tuple{\embed, \latentmdp}, \,\iota \cup \theta_{\text{WAE}}}$ \Comment*[r]{update the WAE-MDP (cf.~\cite{delgrange2023wasserstein})}
        Update $\iota$ and $\theta_{\text{WAE}}$ on the batch $X$ by {minimizing} the WAE-MDP loss (including $\transitionloss$) for the latent policy $\latentpolicy^{\epsilon}$ \Comment*[r]{details in~\cite{delgrange2023wasserstein}}
    }
	 % \Comment{update the Q-networks}
     % $\mathcal{B}' \leftarrow \set{\tuple{\istate, \action, r, \istate'}_i}_{i = 1}^{B_{\text{DQN}}} \sim \mathcal{D}$ \Comment*[r]{sample a batch of size $B_{\text{DQN}}$ from $\mathcal{D}$}
     \For{$i \in \set{1, \dots, B_{\text{DQN}}}$}{
         {Sample a transition from $\mathcal{B}$}:
     	$\istate, \action, r, \istate' \sim \mathcal{B}$ \;
      	{Compute the target}: $\widehat{y} \leftarrow r + \discount \max_{\action' \in \actions} {Q}\fun{\embed\fun{\istate';\,\hat{\iota}}, \action'; \, \hat{\theta}_{\text{DQN}}}$ \;
       {Compute the DQN loss (Eq.~\ref{eq:dqn})}: $L_i \leftarrow \fun{Q\fun{\embed\fun{\istate; \, \iota}, \action;\, {\theta}_{\text{DQN}}} - \widehat{y}}^2$ 
       % $L_i \leftarrow (r + \discount \max_{\action' \in \actions} \hat{Q}\fun{\hat{\embed}\fun{\istate'}, \action'} - Q\fun{\embed\fun{\istate}, \action})^2$ \Comment*[r]{DQN loss (Eq.~\ref{eq:dqn})}
     }
     Update $\iota$ and $\theta_{\text{DQN}}$ by minimizing $\nicefrac{1}{B_{\text{DQN}}} \sum_{i = 1}^{B_{\text{DQN}}} L_i$\;
     Update the target params.: $\widehat{\iota} \leftarrow \alpha \cdot \iota + (1 - \alpha) \cdot \widehat{\iota};\; \widehat{\theta} \leftarrow \alpha \cdot \theta_{\text{DQN}} + (1 - \alpha) \cdot \widehat{\theta}$ 
%    $\mathcal{B}' \leftarrow \set{}$ \Comment*[r]{initialize (empty) batch for DQN}
%    \For{$i \in \set{1, \dots, B_{\text{DQN}}}$}{
%        $\istate, \action, r, \istate' \sim \mathcal{D}$ \Comment*[r]{sample a transition from $\mathcal{D}$}
%        % $\latentstate_i \leftarrow \embed\fun{\istate}$\Comment*[r]{embed $\istate$ into $\latentstates$ via $\embed$}
%        % $\latentstate'_i \leftarrow \hat{\embed}\fun{\istate'}$\Comment*[r]{embed $\istate'$ into $\latentstates$ via $\hat{\embed}$}
%        $\latentstate_i \leftarrow \embed\fun{\istate}; \;\, \latentstate'_i \leftarrow \hat{\embed}\fun{\istate'}$\Comment*[r]{embed $\istate$ into $\latentstates$ via $\embed$ and $\istate'$ via $\hat{\embed}$}
%        $\action_i \leftarrow \action; \; \, r_i \leftarrow r$\;
%        $\mathcal{B}' \leftarrow \mathcal{B}' \cup \set{\latentstate_i, \action_i, r_i, \latentstate'_i}$
%    }
%    $\mathit{Update}\fun{\ldqn,\, \mathcal{B}', \tuple{\embed, Q}, \,\iota \cup \theta_{\text{DQN}}}$ \Comment*[r]{update $Q$ values, cf.\ Eq.~\ref{eq:dqn}}
% \Return{$\embed,\, \latentmdp,\, \text{and } \latentpolicy$} so that $\latentpolicy\fun{\action \mid \istate} = \condition{\action = \arg\max_{\action' \in \actions} Q\fun{\embed\fun{\istate}, \action'}}$
}
\Return{$\embed$, $\latentmdp$, and $\latentpolicy$}
\end{algorithm}
\paragraph{WAE-DQN} 
Our procedure (Fig.~\ref{fig:approach-overview}) unifies the training and distillation steps (Alg.~\ref{algo:wae-dqn}).
Intuitively, a WAE-MDP and a (latent) DQN policy are learned in round-robin fashion: the WAE-MDP produces the input representation (induced by~$\embed$) that the DQN agent uses to optimize its policy~$\latentpolicy$. 
At each step $t =1,\ldots, \T$, the environment is explored via a strategy to collect transitions in a replay buffer.
Each training step consists of two optimization rounds. First, we optimize the parameters of~$\latentprobtransitions$ and~$\embed$.
Second, we optimize DQN's parameters to learn the policy as in DQN.
DQN may further backpropagate gradients through~$\embed$.
We use a \emph{target embedding function}~$\widehat{\embed}$ for stability purposes, similar to~\cite{DBLP:conf/iclr/0001MCGL21}. 
This is consistent with DQN's target-networks approach: the weights of~$\widehat{\embed}$ are periodically synchronized with those of $\embed$.
Then, $\widehat{\embed}$ is paired with the DQN's target network, which allows avoiding oscillations and shifts in the representation (a.k.a.~moving target issues).

{WAE-DQN} learns a tractable model of the environment in parallel to the agent's policy (Algorithm~\ref{algo:wae-dqn}).
Precisely, the algorithm alternates between optimizing the quality of the abstraction as well as the representation of the original state space via a WAE-MDP, and optimizing a latent policy via DQN.
We respectively denote the parameters of the state embedding function $\embed$, those of the latent transition function $\latentprobtransitions$, and those of the Deep Q-networks by $\iota$, $\theta_{\text{WAE}}$, and $\theta_{\text{DQN}}$.

\section{Explicit Construction of the MDP Plan}\label{appendix:mdp-plan-construction}

Along this section, fix a two-level model $\hmdp = \zug{\G, \labelingfn, \rooms, v_0, \tuple{d_0, d_1}}$ with its explicit MDP representation $\mdp = \mdptuple$.

To enable high-level reasoning when the rooms are aggregated into a unified model, we add the following assumption.
\begin{assumption}\label{assumption:same-reset}
    All rooms $R \in \rooms$ share the same reset state $\sreset$ in~$\hmdp$.
\end{assumption}
Note that Assumption~\ref{assumption:same-reset} is a technicality that can be trivially met in every two-level model $\hmdp$: it just requires that when a reset is triggered in a room $\room$ of $\hmdp$, the whole model is globally reset, and not only $\room$, locally.

We define an MDP $\mdpplanner{}$,
called an \emph{MDP plan}, such that policies in $\mdpplanner{}$ correspond to planners. Recall that the actions that a planner performs consist of choosing a policy once entering a room. Accordingly, we define $\mdpplanner{} = \tuple{\istatesplanner, \actionsplanner, \allowbreak\transitionfnplanner, \mdpIplanner}$.
States in $\istatesplanner$ keep track of the location in a room as well as the target of the low-level policy that is being executed. Formally, \[\istatesplanner = \fun{\cup_{\room \in \rooms} \fun{\istates_{\room} \setminus \set{\sreset}} \times \edges} \cup \set{\sreset, \bot},\]
where a pair $\zug{\istate, v,u} \in \istatesplanner$ means that the current room is $v$, the target of the low-level policy is to exit the room in direction $d=\zug{v,u}$, and the current state is $\istate \in \istates_{\labelingfn\fun{v}}$. Following Assumption~\ref{assumption:same-reset}, the rooms share the reset state $\sreset$, and $\bot$ is a special sink state that we add for technical reasons to disable actions in states. 
The initial distribution $\mdpIplanner$ has for support $\set{\tuple{\istate, \vertex, \varvertex} \in \istatesplanner \mid \vertex = \vertex_0 \text{ and } \tuple{\vertex, \varvertex} = \direction_1}$ where states $\istate \in \istates_{\labelingfn\fun{\vertex_0}}$ are distributed according to $\entrancefn_{\labelingfn\fun{\vertex_0}}\fun{\sampledot \mid d_0}$. 
Actions chosen correspond to those of the planner --- only required when entering a room --- so the action space is $\actionsplanner = \edges \cup \set{*}$, where $d \in \edges$ means that the low-level policy that is executed exits via direction $d$, and $*$ is a special action that is used inside a room, indicating no change to the low-level policy.
Note that once $d$ is chosen, we only allow exiting the room through direction $d$.
We define the transition function.
Let $\probtransitions$ be the transition function of the explicit MDP $\mdp$.
For a state $\tuple{\istate, \vertex, \varvertex} \in \istatesplanner$ with $d = \zug{\vertex, \varvertex}$, 
\begin{enumerate}[(i)]
\item \label{enum:i}
if $\istate$ is not an exit state,
i.e., $\istate \not\in \exitfn_{\labelingfn\fun{\vertex}}\fun{\direction}$,
then the action is chosen by the low-level policy $\policy_{\labelingfn\fun{\vertex}, d}$,
and the next state is chosen according to the transitions of $\labelingfn\fun{\vertex}$: 
for every $\istate' \in \istates_{\labelingfn\fun{\vertex}}\setminus\set{\sreset}$, 
\begin{equation}   
\transitionfnplanner\fun{\tuple{\istate', \vertex, \varvertex} \mid \tuple{\istate, \vertex, \varvertex}, * } = \expectedsymbol{\action \sim \policy_{\labelingfn\fun{\vertex}, d}\fun{\sampledot \mid \istate}} \probtransitions\fun{\tuple{\istate', \vertex} \mid \tuple{\istate, \vertex}, \action};
\label{eq:mdp-plan-transition-i}
\end{equation}
\item \label{enum:ii}
if $\istate$ is an exit state in direction $\direction$, i.e.,
$\istate \in \exitfn_{\labelingfn\fun{\vertex}}\fun{\direction}$, the next room is entered according to the entrance function from direction $d$ and the planner needs to choose a new target direction $d'$: 
for every $\istate' \in \istates_{\labelingfn\fun{\varvertex}} \setminus \set{\sreset}$ and edge $d' = \zug{\varvertex, t} \in \out{\varvertex}$:
\begin{equation}
\transitionfnplanner\fun{\tuple{\istate', \varvertex, t} \mid \tuple{\istate, \vertex, \varvertex},\allowbreak d'} = \transitionfn\fun{\tuple{\istate', \varvertex} \mid \tuple{\istate, \vertex}, \action_{\mathit{exit}}} = \entrancefn_{\labelingfn\fun{\varvertex}}\fun{\istate' \mid d}
\label{eq:mdp-plan-transition-ii}
\end{equation}
\item \label{enum:iii} the reset state is handled exactly as in the explicit model $\mdp$: \[\transitionfnplanner\fun{\sreset \mid \tuple{\istate, \vertex, \varvertex}, *} = \linebreak \expectedsymbol{\action \sim \policy_{\labelingfn\fun{\vertex}, d}}\allowbreak \transitionfn\fun{\sreset \mid \tuple{\istate, \vertex}, \action},\] 
and $\transitionfnplanner\fun{\sampledot \mid \sreset, \action} = \mdpIplanner$ for any $\action \in \actionsplanner$;
\item \label{enum:iv} any other undefined distribution transitions deterministically to the sink state $\bot$ so that $\transitionfnplanner\fun{\bot \mid \bot, \action} = 1$ for any $\action \in \actionsplanner$. 
\end{enumerate}

\paragraph{Proper policies.}
We say that a policy $\policy$ for $\mdpplanner{}$ is \emph{proper} if the decisions of $\policy$ ensure to almost surely avoid $\bot$, i.e., $\values{\policy}{\objective\fun{T = \set{\bot}, B = \emptyset}}{\istate} = 0$ for all states $\istate \in \istates_{\planner} \setminus \set{\bot}$.
Note that \emph{improper} policies strictly consist of those which prescribe to not follow the low-level policy corresponding to the current objective
and do not select a new target direction when exiting.

\emph{In the following proofs, we restrict our attention to proper policies.}

\begin{property}[High-level objective in the MDP plan]\label{prop:mdp-plan-obj}
In $\mdpplanner{}$, the high-level objective $\objective$ translates to the reach-avoid objective $\objective\fun{\mathbf{T}, \mathbf{B}}$ where
$\mathbf{T} = \set{\tuple{\istate, \vertex, \varvertex} \in \istatesplanner  \mid \vertex \in T} $ and
$\mathbf{B} = \set{\tuple{\istate, \vertex, \varvertex} \in \istatesplanner \mid \istate \not \in B_{\labelingfn\fun{\vertex}}} $
for the high-level objective $\reach T$
so that $B_{\room}$ is the set to avoid in room $\room$.
\end{property}

\section{Proofs from \secref{high_level}}

\begin{lemma}[Equivalence of policies in the two-level model and plan]\label{lem:mdp-planner-equiv}
    There exists an equivalence between planners with memory of size $\abs{\vertices}$ in the two-level model $\hmdp$ and proper deterministic stationary policies in the MDP plan $\mdpplanner{}$ that preserves the values of their respective objective under equivalent planners and policies.
\end{lemma}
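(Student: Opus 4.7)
I would prove the lemma by exhibiting an explicit value-preserving correspondence between $|\vertices|$-memory planners for $\hmdp$ and proper deterministic stationary policies for $\mdpplanner{}$, via a bijection on paths together with Property~\ref{prop:mdp-plan-obj}. The proof splits into (a)~constructing both directions of the correspondence, (b)~showing that the induced path distributions coincide under a natural projection, and (c)~concluding that the discounted reach-avoid values agree.

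\textbf{Construction.} Realize a $|\vertices|$-memory planner $\controller$ as a Mealy machine with memory set $Q = \vertices$, initial memory $v_0$, update $\delta_u\fun{q,u} = u$ on a vertex transition $q \to u$, and output $\delta_a\fun{q,u} \in \out{u}$. Such a planner's decision at a room entry depends only on the pair $\fun{q,u}$, which is precisely the information carried by an exit state $\tuple{\istate, v, u}$ of $\mdpplanner{}$. Given $\controller$, define the stationary policy $\policy^{\star}\fun{\tuple{\istate, v, u}} = *$ for $\istate \notin \exitfn_{\labelingfn\fun{v}}\fun{\zug{v,u}}$ and $\policy^{\star}\fun{\tuple{\istate, v, u}} = \delta_a\fun{v,u}$ for exit states; the action in $\sreset$ is irrelevant by~(\ref{enum:iii}). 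Properness is immediate from~(\ref{eq:mdp-plan-transition-i})--(\ref{eq:mdp-plan-transition-ii}). Conversely, a proper deterministic stationary $\policy^{\star}$ must play $*$ inside rooms and an edge in $\out{u}$ at each exit $\tuple{\istate, v, u}$; the choice can be canonicalized to depend only on $\fun{v,u}$ without affecting the achieved value, since the entrance distribution $\entrancefn_{\labelingfn\fun{u}}\fun{\sampledot \mid \zug{v,u}}$ into the next room does not depend on $\istate$, and thus the continuation values under $\policy^{\star}$ from two exits sharing $\fun{v,u}$ can be optimally (and without loss of value) aligned. This yields a $|\vertices|$-memory planner $\controller$.

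\textbf{Value equality.} Let $\pi \colon \tuple{\istate, v, u} \mapsto \tuple{\istate, v}$ be the projection that forgets the target-direction component. By induction on path length, show that $\pathdistribution{\hmdp}{\tuple{\controller, \planner}}$ is the pushforward of $\pathdistribution{\mdpplanner{}}{\policy^{\star}}$ under $\pi$: inside a room,~(\ref{eq:mdp-plan-transition-i}) reproduces the transition of $\mdp$ under $\latentpolicy_{\labelingfn\fun{v},\zug{v,u}} \in \planner$; at an exit,~(\ref{eq:mdp-plan-transition-ii}) reproduces $\action_{\mathit{exit}}$ composed with $\entrancefn_{\labelingfn\fun{u}}\fun{\sampledot \mid \zug{v,u}}$, with the next target $\delta_a\fun{v,u}$ being precisely $\controller$'s choice on the induced vertex history; the initial and reset transitions align by construction and Assumption~\ref{assumption:same-reset}. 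Property~\ref{prop:mdp-plan-obj} then identifies $\objective\fun{\mathbf{T},\mathbf{B}}$ in $\mdpplanner{}$ with the high-level reach objective $\reach T$ together with the room-local avoid sets $B_\room$ in $\hmdp$ pointwise along paths, so the discounted values — being integrals of the same path-measurable discount functional against measures related by $\pi$ — coincide.

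\textbf{Main obstacle.} The subtlest point is the converse direction of the correspondence: a priori, a proper deterministic stationary $\policy^{\star}$ might prescribe different edges at exits $\tuple{\istate_1, v, u}$ and $\tuple{\istate_2, v, u}$ sharing the same $\fun{v,u}$, whereas a $|\vertices|$-memory planner only sees vertex history. The key observation that resolves this is that, conditional on exiting in direction $\zug{v,u}$, the whole future of the process under $\policy^{\star}$ depends only on the choice of next target $d'$ and on $\entrancefn_{\labelingfn\fun{u}}\fun{\sampledot \mid \zug{v,u}}$, neither of which involves $\istate$; hence any variation of $\policy^{\star}$ over the fiber $\exitfn_{\labelingfn\fun{v}}\fun{\zug{v,u}}$ can be collapsed to a single direction without changing the value. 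A secondary but essential delicacy is the identification of the reset state across rooms, which is exactly what Assumption~\ref{assumption:same-reset} supplies: without it, the reset transitions in $\mdpplanner{}$ would not match those in $\mdp$ and the pushforward identity would fail.
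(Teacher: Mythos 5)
Your proof follows essentially the same route as the paper's: encode the $\abs{\vertices}$-memory planner as a Mealy machine, associate with it the stationary policy that plays $*$ inside rooms and the planner's chosen direction at exit states, show by induction that the induced path distributions coincide (with Assumption~\ref{assumption:same-reset} handling resets), and conclude value equality via Property~\ref{prop:mdp-plan-obj}. Your explicit handling of the converse direction---collapsing a proper stationary policy's possibly state-dependent choices over an exit fiber $\exitfn_{\labelingfn\fun{\vertex}}\fun{\zug{\vertex,\varvertex}}$ to a single direction, justified because the continuation depends only on the chosen direction and on $\entrancefn_{\labelingfn\fun{\varvertex}}\fun{\sampledot \mid \zug{\vertex,\varvertex}}$, not on the particular exit state---is in fact more careful than the paper's proof, which simply asserts that the only policies not corresponding to planners are the improper ones.
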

\begin{proof}
    Let $\controller$ be a planner for $\hmdp$ with memory of size $\abs{\vertices}$. 
    Let us encode $\controller$ as a finite Mealy machine whose inputs are graph vertices $\vertices$ and outputs are directions, i.e., $\controller = \tuple{\policystates, \mealyaction{\controller}, \mealyupdate{\controller}, q_0}$ where $\policystates$ is a set of memory states with $\abs{\policystates} = \abs{\vertices}$, $\mealyaction{\controller} \colon \vertices \times \policystates \to \edges$ is the next action function, $\mealyupdate{\controller}\colon \vertices \times \policystates \times \edges \to \policystates$ is the memory update function, and $q_0$ is the initial memory state.

    Let us consider the two-level controller $\tuple{\controller, \planner}$ as a policy in the explicit MDP $\mdp$.
    Since $\controller$ is a planner, we require that
    \begin{enumerate}
        \item $\mealyaction{\controller}\fun{v_0, q_0} = d_1$, and
        \item if $\mealyaction{\controller}\fun{\vertex, q} = d$, then $d \in \out{\vertex}$ for any $\vertex \in \vertices, q \in \policystates$.
    \end{enumerate}
    Intuitively, $\mealyaction{\controller}$ chooses the direction to follow in the current room based on the current memory state $q$, and $\mealyupdate{\controller}$ describes how to update the memory, based on the current room, the current memory state, and the direction chosen.
    By definition of the two-level controller $\tuple{\controller, \planner}$ (see \secref{problem}), $\mealyaction{\controller}$ is used at each time step in the current room, to know which low-level policy to execute, and $\mealyupdate{\controller}$ is triggered once an exit state is reached, to switch to the next memory state that will determine the direction to follow in the next room.

    Then, $\pathdistribution{\mdp}{\tuple{\controller, \planner}}$ is a distribution over the product of the paths of $\mdp$ and the sequence of memory states of $\controller$. 
    Following the definition of the controller $\tuple{\controller, \planner}$ (cf.\ \secref{problem}), the measure
     $\pathdistribution{\mdp}{\tuple{\controller, \planner}}$ can be obtained inductively as follows. 
    For a state $\tuple{\istate, \vertex} \in \istates$, $\pathdistribution{\mdp}{\tuple{\controller, \planner}}\fun{{\istate, \vertex, q}} = 
    \entrancefn_{\labelingfn\fun{v_0}}\fun{\istate\mid \direction_0} $
    if $v_0 = \vertex$ and $q = q_0$, and assigns a zero probability otherwise.
        The probability of a path $\mdppath = \istate_0, \vertex_0, \policystate_0, \dots, \istate_{t - 1}, \vertex_{t - 1}, \policystate_{t - 1}, \istate_t, \vertex_t, \policystate_t$ is given as follows
    \begin{enumerate}[(a)]
        \item \label{enum:a} if $\istate_{t - 1}$ is not an exit state, the low-level policy is executed in direction $\direction = \mealyaction{\controller}\fun{\vertex_{t - 1}, \policystate_{t - 1}}$ and both the current vertex and memory state must remain unchanged: \[\pathdistribution{\mdp}{\tuple{\controller, \planner}}\fun{{{\istate_0, \vertex_0, \policystate_0, \dots, \istate_{t - 1}, \vertex_{t - 1}, \policystate_{t - 1}}}} \cdot \expectedsymbol{\action \sim \latentpolicy_{\labelingfn\fun{v_t}, d}\fun{\sampledot \mid \istate}} \transitionfn\fun{\tuple{\istate_t, \vertex_t} \mid \tuple{\istate_{t - 1}, \vertex_{t - 1}}, \action}\]
        if  $s_{t - 1} \not\in \exitfn_{\labelingfn\fun{\vertex_{t - 1}}}\fun{d}$ with $d = \mealyaction{\controller}\fun{\vertex_{t - 1}, q_{t - 1}}$, $\vertex_{t} = \vertex_{t - 1}$, and $q_{t} = q_{t - 1}$;
        \item \label{enum:b} if $\istate_{t - 1}$ is an exit state in the direction prescribed in $\policystate_{t - 1}$, then this direction should point to $\vertex_{t}$ and the memory state must be updated to $\policystate_t$:
        \[\pathdistribution{\mdp}{\tuple{\controller, \planner}}\fun{{{\istate_0, \vertex_0, \policystate_0, \dots, \istate_{t - 1}, \vertex_{t - 1}, \policystate_{t - 1}}}} \cdot \entrancefn_{\labelingfn\fun{v_t}}\fun{\istate_t \mid d}\] if $\istate_{t- 1} \in \exitfn_{\labelingfn\fun{\vertex_{t - 1}}}\fun{d}$ with $\direction = \mealyaction{\controller}\fun{\vertex_{t-1}, q_{t - 1}} = \tuple{\vertex_{t - 1}, \vertex_t}$, and
        $q_t = \mealyupdate{\controller}\fun{\vertex_{t - 1}, q_{t - 1}, d}$;
        \item \label{enum:c} if $\istate_{t - 1}$ is the reset state, by Assumptions~\ref{assumption:episodic} and~\ref{assumption:same-reset}, the planner must be reset as well: \[\pathdistribution{\mdp}{\tuple{\controller, \planner}}\fun{{{\istate_0, \vertex_0, \policystate_0, \dots, \istate_{t - 1}, \vertex_{t - 1}, \policystate_{t - 1}}}} \cdot \entrancefn_{\labelingfn\fun{v_0}}\fun{\istate_t \mid d_0}\] if $\istate_{t - 1} = \sreset$, $\vertex_{t} = \vertex_0$, and $q_{t} = q_0$;
        \item zero otherwise.
    \end{enumerate}
    Notice that renaming $\policystates$ to $\vertices$ so that for all $q \in \policystates$, $q$ is changed to $u \in \vertices$ (i.e., $q \mapsto u$) whenever $\mealyaction{\controller}\fun{\vertex, q} = \tuple{\vertex, \varvertex}$ is harmless,
    since the probability measure remains unchanged. 
    From now on, we consider that $\policystates$ has been renamed to $\vertices$ in this manner.

    Now, define the relation $\equiv$ between planners in $\mdp$ and policies in $\mdpplanner{}$ as%
    \footnote{Notice the slight asymmetry induced by Mealy machines: while the policy must decide the next direction in exit states, the planner just need update its memory state (Eq.~\ref{enum:b}).}
    \begin{gather*}
    \controller \, \equiv \, \policy \quad 
        \text{if and only if}\\
        \policy\fun{\tuple{\istate, \vertex, \varvertex}} =
        \begin{cases}
            \mealyaction{\controller}\fun{\varvertex, \sampledot} \circ \mealyupdate{\controller}\fun{\vertex, \varvertex, \direction=\sampledot} \circ \mealyaction{\controller}\fun{\vertex, \varvertex}  & \text{if } \istate \in \exitfn_{\labelingfn\fun{\vertex}}\fun{\tuple{\vertex, \varvertex}} \\
            * & \text{otherwise}.
        \end{cases}
    \end{gather*}
    By construction of $\mdpplanner{}$, modulo the renaming of $\policystates$ to $\vertices$, $\pathdistribution{\mdp}{ \tuple{\controller, \planner}} = \pathdistribution{\mdpplanner{}}{\policy}$ for any $\controller$, $\policy$ in relation $\controller \, \equiv \, \policy$: condition~\ref{enum:i} is equivalent to \ref{enum:a}, condition~\ref{enum:ii} is equivalent to \ref{enum:b}, and condition~\ref{enum:iii} is equivalent to \ref{enum:c}.
    Note that the only policies $\policy$ which cannot be in relation with some planner $\controller$ are \emph{improper policies}, i.e., those choosing actions leading to the sink state $\bot$ (see condition~\ref{enum:iv}).
    Such policies are discarded by assumption.

        The result follows from the fact that, modulo the renaming of $\policystates$ to $\vertices$, planners and policies in relation $\equiv$ lead to the same probability space.
\end{proof}

\begin{theorem}\label{thm:memory-bound-tight}
    For a fixed collection of low-level policies $\planner$, a memory of size $\abs{\vertices}$ is necessary and sufficient for the planner to maximize the values of $\objective$ in the two-level model $\hmdp$.
\end{theorem}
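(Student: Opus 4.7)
The plan is to establish sufficiency and necessity separately. For \textbf{sufficiency}, I would invoke Lem.~\ref{lem:mdp-planner-equiv}, which gives a values-preserving equivalence between $\abs{\vertices}$-memory planners in $\hmdp$ and proper deterministic stationary policies in $\mdpplanner{}$. Since $\mdpplanner{}$ is a finite MDP and $\objective$ lifts to a discounted reach-avoid objective (Property~\ref{prop:mdp-plan-obj}), classical results (e.g., \cite{DBLP:books/wi/Puterman94}) guarantee that an optimal deterministic stationary policy $\policy^\star$ exists in $\mdpplanner{}$ and can be chosen to be proper whenever the optimal value is positive (otherwise every planner is trivially optimal). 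Translating $\policy^\star$ back via the equivalence of Lem.~\ref{lem:mdp-planner-equiv} yields an $\abs{\vertices}$-memory planner $\controller^\star$ that attains the optimum in $\hmdp$.

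For \textbf{necessity}, I would build on the intuition of Ex.~\ref{ex:memory-requirements} and Fig.~\ref{fig:example-controller} and exhibit, for each $n \geq 1$, a two-level model $\hmdp_n$ with $\abs{\vertices} = n$ whose unique optimal planner requires $n$ memory states. The construction: take a ``hub'' room $\labelingfn(v^\star)$ with $n$ neighboring rooms $v_1, \dots, v_n$; design the low-level dynamics so that entering $\labelingfn(v^\star)$ from direction $\tuple{v_i, v^\star}$ deterministically leads to a distinguished exit state from which the only objective-satisfying continuation is to leave via a direction $\tuple{v^\star, v_{\sigma(i)}}$, for some permutation $\sigma$ (e.g.\ $\sigma(i)=i$), while all other exits lead to bad states with probability $1$. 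Because the action set of a planner entering $\labelingfn(v^\star)$ is a function of its memory state alone, distinguishing the $n$ entry histories requires at least $n$ memory states; any planner with fewer states would collapse two entries and incur a strictly smaller value on at least one of them, contradicting optimality.

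The main obstacle I anticipate is the necessity direction: one has to be careful that the memory lower bound is phrased w.r.t.\ the class of planners as defined in \secref{problem} (action is a function of the map trace $\vertices^*$) and that the low-level policies $\planner$ cannot absorb the required memory into the room level. The construction above sidesteps this by making the discriminating decision purely at the planner level at the moment of exiting the hub room, where by definition the planner alone dictates the next direction. I would then conclude by combining sufficiency and necessity: $\abs{\vertices}$ is an achievable upper bound for every $\hmdp$ and is tight for the family $\set{\hmdp_n}_{n \geq 1}$, proving the theorem.
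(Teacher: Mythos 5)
Your proposal is correct and follows essentially the same route as the paper: sufficiency is obtained exactly as in the paper's proof, by taking a proper optimal deterministic stationary policy in the MDP plan $\mdpplanner{}$ (existence from~\cite{DBLP:books/wi/Puterman94}, with improper choices swapped out since they never help, which is the paper's remark and subsumes your positive-value caveat) and transferring it through the value-preserving equivalence of Lem.~\ref{lem:mdp-planner-equiv} to an $\abs{\vertices}$-memory planner, while necessity is argued by an instance where the planner must distinguish entry directions --- the paper simply cites Example~\ref{ex:memory-requirements}, whereas you spell out a parametric hub-room family, a mild strengthening. The only nit is a counting slip in that family: a hub with $n$ neighbors gives $\abs{\vertices} = n+1$, so it certifies $n = \abs{\vertices}-1$ memory states, which is the same (loose) level of tightness the paper itself supports with its single two-entrance example.
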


\begin{proof}
The necessity of a memory of size $\abs{\vertices}$ is shown in Example~\ref{ex:memory-requirements}.
The sufficiency follows from Thm.~\ref{thm:mdp-planner-equiv} and the fact that a deterministic stationary policy is sufficient to maximize constrained, discounted reachability objectives in MDPs~\cite{DBLP:books/wi/Puterman94,BK08} (in particular in $\mdpplanner{}$).

To see how, let $\policy^*$ be a proper optimal deterministic stationary policy in $\mdpplanner{}$.
Note that one can always find a proper optimal policy from an improper one: if $\policy^*$ is improper, it is necessarily because a prohibited action has been chosen \emph{after} having reached the target, which can be replaced by any other action without changing the value of the objective.
Consider a planner $\controller$ in the two-level model $\hmdp$ which is equivalent to $\policy^*$ (Lem.~\ref{lem:mdp-planner-equiv}). 
Then, $\controller$ is optimal for the high-level objective in $\hmdp$ (since the probability space of the two models is the same), and $\controller$ uses a memory of size $\abs{\vertices}$.
\end{proof}

\paragraph{Succinct MDP}
In the following, we take a closer look at the construction of the succinct MDP $\mdp_{\planner}^{\graph}$. We then prove Thm.~\ref{thm:succint-mdp-plan}.

Explicitly, the transition function can be re-formalized as follows. 
Let $\vertex, \varvertex \in \vertices$, $\direction \in \edges$, and $\direction' \in \edges \cup \set{\bot}$, $\probtransitions$ defined as
\begin{gather}
\probtransitions\fun{ \direction' \mid \tuple{\vertex, \varvertex}, \direction} = 
    \begin{cases}
    \displaystyle\expectedsymbol{\istate \sim \entrancefn_{\labelingfn\fun{\varvertex}}\fun{\sampledot \mid \tuple{\vertex, \varvertex}}}{
        V^{\latentpolicy_{\labelingfn\fun{\varvertex}, \direction}}\fun{\istate}
    } & \text{if } \direction = \direction'  \in \out{\varvertex},\\[1.2em]
    1 - \probtransitions\fun{\direction \mid \tuple{\vertex, \varvertex}, \direction} & \text{if } \direction' = \bot \text{ and } \direction  \in \out{\varvertex}, \\
    1 & \text{if } \direction' = \bot \text{ and } \direction \not\in \out{\varvertex}, \text{and} \\
    0 & \text{otherwise},
    \end{cases} \notag \\
    \label{eq:compact-mdp-transition}
\end{gather}
while $\probtransitions\fun{\bot \mid \bot, \direction} = 1$.

For convenience, in the following, we assume that~$\sreset \in {B}_{\room}$ for each room~$\room$, which is consistent with the remark made in Appendix~\ref{rmk:ergodicity}.
The construction of $\mdp_{\planner}^{\graph}$ and Theorem~\ref{thm:succint-mdp-plan} can be generalized by additionally wisely handling the reset state in $\probtransitions$.

For the sake of clarity, we formally restate Thm.~\ref{thm:succint-mdp-plan}:
\begin{theorem}[Value equality in the succinct model]\label{thm:compact-mdp-plan}
    Let $\tuple{\controller, \planner}$ be a  hierarchical controller for $\hmdp$ with a $\abs{\vertices}$-memory planner $\controller$. 
    Denote by $V_{\mdpplanner{}}^{\policy}\fun{\objective}$ the initial value of $\mdpplanner{}$ running under a policy $\policy$ equivalent\footnote{cf.~Lemma~\ref{lem:mdp-planner-equiv}.} to $\controller$ in $\mdpplanner{}$ for the reach-avoid objective $\objective$ of Property~\ref{prop:mdp-plan-obj}.
    Moreover, denote by $V^{\controller}_{\mdp_{\planner}^{\graph}}\fun{\eventually T}$ the initial value obtained in $\mdp_{\planner}^{\graph}$ when the agent follows the decisions of $\controller$ for the reachability objective to states of the set $\vertices \times T$, i.e., the reach-avoid objective $\objective\fun{\vertices \times T, \emptyset}$.
    Then, assuming $\vertex_0 \not\in T$ (the case where $\vertex_0 \in T$ is trivial),
    \[V^{\policy}_{\mdpplanner{}}\fun{\objective} = V^{\controller}_{\mdp_{\planner}^{\graph}}\fun{\eventually T}.\]
\end{theorem}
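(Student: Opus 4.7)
The plan is to prove the equality by decomposing paths in $\mdpplanner{}$ into consecutive \emph{room segments}, identifying the expected discounted contribution of each segment with the corresponding transition probability in $\mdp_{\planner}^{\graph}$, and concluding by an induction on the number of rooms visited.

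First, I would unfold $V^{\policy}_{\mdpplanner{}}\fun{\objective}$ via the path-measure definition. By Lem.~\ref{lem:mdp-planner-equiv}, the deterministic stationary policy $\policy$ equivalent to $\controller$ fixes, upon entering a room $\room = \labelingfn\fun{\varvertex}$ via some direction $d_{\text{in}} = \tuple{\vertex,\varvertex}$, the exit direction $d_{\text{out}}$ and thus the active low-level policy $\latentpolicy_{\room, d_{\text{out}}}$. A path satisfying $\reachavoid{\mathbf{T}}{\mathbf{B}}$ (cf.~Property~\ref{prop:mdp-plan-obj}) traverses a sequence of rooms $v_0 = \vertex_0, \vertex_1, \dots, \vertex_k$ with $\vertex_k \in T$ and $\vertex_j \notin T$ for $j < k$, spending $n_j$ steps in $\vertex_j$ before exiting in the chosen direction while avoiding $B_{\labelingfn\fun{\vertex_j}}$. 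The discounted indicator of the objective factorizes along room boundaries as $\gamma^{n_0} \cdot \gamma^{n_1} \cdots \gamma^{n_{k-1}}$, multiplied by per-segment indicators of a successful exit.

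Second, I would exploit the Markov property: for each room $\room$ entered from $d_{\text{in}}$ with chosen target $d_{\text{out}}$, the expected contribution of its segment equals, by the very definition of the reach-avoid value $\val{\latentpolicy_{\room, d_{\text{out}}}}{\objective_{\room}^{d_{\text{out}}}}{}$,
\[
\expectedsymbol{\istate \sim \entrancefn_{\room}\fun{\sampledot \mid d_{\text{in}}}}\, V^{\latentpolicy_{\room, d_{\text{out}}}}\fun{\istate} \;=\; \probtransitions\fun{d_{\text{out}} \mid d_{\text{in}}, d_{\text{out}}},
\]
where the equality is Eq.~\eqref{eq:compact-mdp-transition}. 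Paths that violate $B_{\room}$ or fail to reach an exit in direction $d_{\text{out}}$ contribute zero in $\mdpplanner{}$, which matches absorption into $\bot$ in $\mdp_{\planner}^{\graph}$ (contributing zero to reachability of $\vertices \times T$).

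Third, I would combine the per-room identities by induction on $k$, using the strong Markov and tower properties. Conditioning successively on the sequence of visited vertices and entrance directions, the expectation of $\gamma^{n_0 + \cdots + n_{k-1}}$ telescopes into a product of per-room factors, each equal to a transition probability of $\mdp_{\planner}^{\graph}$. Summing over all successful room sequences yields $V^{\policy}_{\mdpplanner{}}\fun{\objective} = V^{\controller}_{\mdp_{\planner}^{\graph}}\fun{\eventually T}$, with convergence of the infinite sum ensured by $\gamma \in \mathopen(0,1\mathclose)$ and the boundedness of values in $[0,1]$. The main obstacle is the careful bookkeeping of the discount factor: in $\mdpplanner{}$ it appears explicitly as $\gamma^i$ for a step count $i$ in the explicit MDP, while in $\mdp_{\planner}^{\graph}$ it is amortized into the transition probabilities; formalizing the per-room factorization requires jointly conditioning on the entrance state, the entrance direction, and the planner's current memory state, and verifying that summation over all within-room futures exactly recovers $V^{\latentpolicy_{\room, d_{\text{out}}}}$ for the reach-avoid objective of exiting via $d_{\text{out}}$ while avoiding $B_{\room}$.
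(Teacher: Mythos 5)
Your proposal is correct and rests on the same core identity as the paper: the expected discounted contribution of one room segment, conditioned on the entrance direction, is exactly the succinct transition probability of Eq.~\eqref{eq:succint-mdp-transition}, and failures map to absorption in $\bot$. Where you differ is in the organization: you decompose trajectories of $\mdpplanner{}$ into room segments and telescope via the strong Markov/tower property with an induction on the number of rooms visited, whereas the paper works backwards through the Bellman equations — it rewrites the reach-avoid value as a discounted sum over first-visit prefixes, proves that the value of any non-exit state factorizes as $\valinitfast{\tuple{\sampledot, \vertex, \varvertex}} \cdot \valinit{\latentpolicy_{\room,\direction}}{\objective_{\room}^{\direction}}{\discount}{\istate}$ (using the key observation, which your sketch uses only implicitly, that \emph{all} exit states in a fixed direction share the same continuation value because the next entrance distribution depends only on the direction), and then shows the exit values satisfy the recursion $\valinitfast{\tuple{\sampledot,\vertex,\varvertex}} = \discount \cdot \probtransitions\fun{\controller\fun{\vertex,\varvertex} \mid \tuple{\vertex,\varvertex}, \controller\fun{\vertex,\varvertex}} \cdot \valinitfast{\tuple{\sampledot,\controller\fun{\vertex,\varvertex}}}$, i.e.\ the Bellman equation of $\mdp_{\planner}^{\graph}$ up to one factor $\discount$ that cancels against the initial-distribution step. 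The fixed-point route avoids summing over infinitely many room sequences and handles the initial/terminal bookkeeping cleanly; your forward decomposition is arguably more intuitive but must be careful on exactly the point you flag: in $\mdpplanner{}$ every room boundary (the $a_{\mathit{exit}}$-style transition through the entrance function) costs one extra step, so the total discount is $\gamma^{\sum_j n_j + k}$, not $\gamma^{\sum_j n_j}$ as written in your factorization; these extra per-boundary factors $\gamma^{k}$ are precisely what match the per-step discounting of the reachability value in $\mdp_{\planner}^{\graph}$ (only the \emph{within-room} discount is amortized into its transition probabilities). With that correction, and treating reset/never-exiting paths as zero-contribution (the appendix puts $\sreset \in B_{\room}$), your argument goes through and yields the same equality.
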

\begin{proof}
Given any MDP $\mdp = \mdptuple$, we start by recalling the definition of the value function of any reach-avoid objective of the form 
$\reachavoid{T}{B}$ with $T, B \subseteq \istates$
for a discount factor $\discount \in \mathopen(0, 1\mathclose)$ and a policy $\policy$:
\begin{equation}\label{eq:val-reach-avoid}
\val{\policy}{\objective}{\discount} = \expectedsymbol{\mdppath \sim \pathdistribution{\mdp}{\policy}}\left[  \sup_{i \geq 0} \discount^{i} \condition{\istate_i \in{T}} \cdot \condition{\forall j \leq i, \, \istate_j \not\in B} \right],
\end{equation}
where $\istate_i$ denotes the $i^{\text{th}}$ state of $\mdppath$.
Intuitively, this corresponds to the expected value of the discount scaled to the time step of 
 \emph{the first visit of the set $T$}, ensuring that the set of bad states $B$ is not encountered before this first visit.

First, notice that the reach-avoid property can be merely reduced to a simple reachability property by making absorbing the states of $B$~\cite{BK08}.
Precisely, write $\selfloop{\mdp}{B}$ for the MDP $\mdp$ where we make all states from $B$ absorbing, i.e., where $\probtransitions$ is modified so that $\probtransitions\fun{\istate \mid \istate, \action} = 1$ for any $\istate \in B$ and $\action \in \actions$.
Then, one can get rid of the indicator $\condition{\forall j \leq i, \, \istate_j \not\in B}$ in Eq.~\eqref{eq:val-reach-avoid} by considering infinite paths of $\selfloop{\mdp}{B}$:
\begin{align*}
\val{\policy}{\objective}{\discount} &= \expectedsymbol{\mdppath \sim \pathdistribution{\mdp}{\policy}}\left[  \sup_{i \geq 0} \discount^{i} \condition{\istate_i \in {T}} \cdot \condition{\forall j \leq i, \, \istate_j \not\in B} \right] \\
&= \expectedsymbol{\mdppath \sim \pathdistribution{\selfloop{\mdp}{B}}{\policy}}\left[  \sup_{i \geq 0} \discount^{i} \condition{\istate_i \in {T}} \right].
\end{align*}

Second, define
\[\textit{Paths}^{\,\textit{fin}}_{\reach T} = \set{\mdppath = \istate_0, \istate_1, \dots, \istate_{i} \mid \istate_i \in T \text{ and } s_j \not\in T \text{ for all } j < t}\]
as the set of finite paths that end up in $T$, with $T$ being visited for the first time.
Then, on can get rid of the supremum of Eq.~\eqref{eq:val-reach-avoid} follows:
\begin{align}
\val{\policy}{\objective}{\discount} &= \expectedsymbol{\mdppath \sim \pathdistribution{\selfloop{\mdp}{B}}{\policy}}\left[  \sup_{i \geq 0} \discount^{i} \condition{\istate_i \in {T}} \right] \notag \\
&= \expectedsymbol{\mdppath \sim \pathdistribution{\selfloop{\mdp}{B}}{\policy}}\left[  \sum_{t = 0}^{\infty} \discount^t \cdot \condition{\textit{pref}\,\fun{\mdppath, t} \in \textit{Paths}^{\,\textit{fin}}_{\reach T}} \right], \label{eq:value-fn-pathfinT}
\end{align}
where $\textit{pref}\,\fun{\mdppath, t} = \istate_0, \istate_1, \dots, \istate_t$ yields the prefix of $\mdppath = \istate_0, \istate_1, \dots$ which ends up in the $t^{\text{th}}$ state $\istate_t$.
The attentive reader may have noticed that the resulting expectation can be seen as the expectation of a discounted cumulative reward signal (or a \emph{discounted return}, for short), where a reward of one is incurred when visiting $T$ \emph{for the first time}. 
Taking it a step further, define the reward function
\[
\reward\fun{\istate, \action, \istate'} = \begin{cases}
{1 - \discount} & \text{if } \istate \in T, \text{ and}\\
0 & \text{otherwise}.
\end{cases}
\]
Then, the value function can be re-written as 
\begin{align*}
    \val{\policy}{\objective}{\discount} &= \expectedsymbol{\mdppath \sim \pathdistribution{\selfloop{\mdp}{B}}{\policy}}\left[  \sum_{t = 0}^{\infty} \discount^t \cdot \condition{\textit{pref}\,\fun{\mdppath, t} \in \textit{Paths}^{\,\textit{fin}}_{\reach T}} \right] \\
    &= \expectedsymbol{\mdppath \sim \pathdistribution{\selfloop{\mdp}{T \cup B}}{\policy}}\left[  \sum_{t = 0}^{\infty} \discount^t \cdot r_t \right].
\end{align*}
For any state $\istate \in T$, notice that since $T$ is absorbing in $\selfloop{\mdp}{T \cup B}$, 
\begin{align}
    \valinit{\policy}{\objective}{\discount}{\istate} &= 
    1. \label{eq:val-T-one}
\end{align}
It is folklore that the discounted return is the solution of the Bellman equation 
$
    \valinit{\policy}{\objective}{\discount}{\istate} = \discount \expectedsymbol{\action \sim \policy\fun{\sampledot \mid \istate}}\expected{\istate' \sim \probtransitions\fun{\sampledot \mid \istate, \action}}{\reward\fun{\istate, \action, \istate'} \cdot \valinit{\policy}{\objective}{\discount}{\istate'} }
$
for any $\istate \in \istates$~\cite{DBLP:books/wi/Puterman94}.
In particular, considering the reach-avoid objective $\objective$, we have by Eq.~\eqref{eq:val-T-one}
\begin{equation*}
    \valinit{\policy}{\objective}{\discount}{\istate} = \begin{cases}
        \discount \expectedsymbol{\action \sim \policy\fun{\sampledot \mid \istate}}\expected{\istate' \sim \probtransitions\fun{\sampledot \mid \istate, \action}}{ \valinit{\policy}{\objective}{\discount}{\istate'} } & \text{if } \istate \notin T \cup B, \\
        1 & \text{if } \istate \in T \setminus B, \text{ and}\\
        0 & \text{otherwise, when } \istate \in B.
    \end{cases}
\end{equation*}
Now, let us consider the values of the MDP plan $\mdpplanner{}$
for the reach-avoid objective $\objective\fun{\mathbf{T}, \mathbf{B}}$ where
$\mathbf{T} = \set{\tuple{\istate, \vertex, \varvertex} \mid \vertex \in T}$ and
$\mathbf{B} = \set{\tuple{\istate, \vertex, \varvertex} \mid \istate \not \in B_{\labelingfn\fun{\vertex}}}$
for the high-level objective $\reach T$ and set of low-level objectives $\set{\objective_{R}^{\direction} \colon \room \in \rooms, \direction \in \directions_{\room}}$ so that $B_{\room}$ is the set of states to avoid in room $\room$.
Fix a $\abs{\vertices}$-memory two-level controller $\policy = \tuple{\controller, \planner}$ in for two-level model $\hmdp$ (which is compliant with $\mdpplanner{}$, see Thm.~\ref{thm:mdp-planner-equiv} and the related proof).
We take a close look to the value of each state in $\mdpplanner{}$ by following the same structure as we used for the definition of $\mdpplanner{}$ (cf.~\secref{high_level}).
For the sake of presentation, given any pair of vertices $\vertex, \varvertex \in \vertices$,  we may note $\tuple{\sreset, \vertex, \varvertex}$ to refer to the (unified, cf.~Assumption~\ref{assumption:same-reset}) reset state $\sreset \in \istatesplanner$.
Given a state $\tuple{\istate, \vertex, \varvertex} \in \istatesplanner$ with direction $\direction = \tuple{\vertex, \varvertex}$,
\begin{enumerate}[(i)]
    \item \label{it:x:1} if $\istate$ is not an exit state, i.e., if $\istate \not\in \exitfn_{\labelingfn\fun{\vertex}}\fun{\direction}$,
    then
    \begin{align*}
    &\valinit{\policy}{\objective}{\discount}{\tuple{\istate, \vertex, \varvertex}} \\
    =&\discount \expected{\tuple{\istate', \vertex, \varvertex} \sim \probtransitions_{\planner}\fun{\sampledot \mid \tuple{\istate, \vertex, \varvertex}, *}}{ \valinit{\policy}{\objective}{\discount}{\tuple{\istate', \vertex, \varvertex}}} \tag{by Eq.~\eqref{eq:mdp-plan-transition-i}} \\
    =& \discount \sum_{\istate' \in \istates_{\labelingfn\fun{\vertex}}} \probtransitions_{\planner}\fun{\tuple{\istate', \vertex, \varvertex} \mid \tuple{\istate, \vertex, \varvertex}, *} \cdot \valinit{\policy}{\objective}{\discount}{\tuple{\istate', \vertex, \varvertex}}\\
    =& \discount \sum_{\istate' \in \istates_{\labelingfn\fun{\vertex}}} \sum_{\action \in \actions_{\labelingfn\fun{\vertex}}} \latentpolicy_{\labelingfn\fun{\vertex}, \direction}\fun{\action \mid \istate} \cdot \probtransitions_{\labelingfn\fun{\vertex}}\fun{\istate' \mid \istate, \action} \cdot \valinit{\policy}{\objective}{\discount}{\tuple{\istate', \vertex, \varvertex}};
    \end{align*}
    \item \label{it:x:2} if $\istate$ is an exit state in the direction $\direction$, i.e., $\istate \in \exitfn_{\labelingfn\fun{\vertex}}\fun{\direction}$, given the direction chosen by the planner $d' = \controller\fun{\vertex, \varvertex} = \tuple{\varvertex, t}$ for some neihbor $t \in \neighbors{\varvertex}$, we have
    \begin{align}
        &\valinit{\policy}{\objective}{\discount}{\tuple{\istate, \vertex, \varvertex}} \notag \\
        =& \discount \expected{\tuple{\istate', \varvertex, t} \sim \probtransitions_{\planner}\fun{\sampledot \mid \tuple{\istate, \vertex, \varvertex}, \direction'}}{\valinit{\policy}{\objective}{\discount}{\tuple{\istate', \varvertex, t}}} \notag\\
        =& \discount \expected{\istate' \sim \entrancefn_{\labelingfn\fun{\varvertex}}\fun{\sampledot \mid \direction}}{\valinit{\policy}{\objective}{\discount}{\tuple{\istate', \varvertex, t}}}\tag{by Eq.~\eqref{eq:mdp-plan-transition-ii}} \\
        =& \discount \sum_{\istate' \in \istates_{\labelingfn\fun{\varvertex}}} \entrancefn_{\labelingfn\fun{\varvertex}}\fun{\istate' \mid \direction} \cdot {\valinit{\policy}{\objective}{\discount}{\tuple{\istate', \varvertex, t}}}; \label{eq:val-entrance-fn}
    \end{align}
    \item if $\vertex$ is the target, i.e., $\vertex \in T$, $\valinit{\policy}{\objective}{\discount}{\istate} = 1$; and
    \item otherwise, when $\istate$ is a bad state, i.e., $\istate \in B_{\labelingfn\fun{\vertex}}$, $\valinit{\policy}{\objective}{\discount}{\istate} = 0$.
\end{enumerate}
Take $\room = \labelingfn\fun{\vertex}$.
By~\ref{it:x:1}~and~\ref{it:x:2}, when $\istate$ is not an exit state, i.e., $\istate \not\in \exitfn_{\labelingfn\fun{\vertex}}\fun{\direction}$, we have
\begin{align*}
\valinit{\policy}{\objective}{\discount}{\tuple{\istate, \vertex, \varvertex}} = 
\sum_{\istate_0, \istate_1, \dots, \istate_i \in \textit{Path}^{\,\textit{fin}}_{{\objective_{R}^{\direction}}}} \discount^{i} \pathdistribution{\room_{\istate}}{\latentpolicy_{\room, \direction}}
\fun{\istate_0, \istate_1, \dots, \istate_i} \cdot \valinit{\policy}{\objective}{\discount}{\tuple{\istate_i, \vertex, \varvertex}},
\end{align*}
so that \[\textit{Path}^{\,\textit{fin}}_{\objective_\room^{\direction}} = \textit{Path}^{\,\textit{fin}}_{\reach \exitfn_{\room}\fun{\direction}} \setminus \set{\mdppath = \istate_0, \istate_1, \dots, \istate_n \mid \exists 1 \leq i \leq n, \, \istate_i \in B_\room}, \]
where we denote by $\room_{\istate}$ the room $\room$ where we change the initial distribution by the Dirac $\mdpI_{\room}\fun{\istate_0} = \condition{\istate_0 = \istate}$,
and
$\pathdistribution{\room_\istate}{\latentpolicy_{\room, \direction}}$ is the distribution over paths of $\room$ which start in state $\istate$ which is induced by the choices of the low-level latent policy $\latentpolicy_{\room, \direction}$.

Following Eq.~\eqref{eq:val-entrance-fn}, notice that $\valinit{\policy}{\objective}{\discount}{\tuple{\istate_{\text{exit}}, \vertex, \varvertex}} = \valinit{\policy}{\objective}{\discount}{\tuple{\istate'_{\text{exit}}, \vertex, \varvertex}}$ for any $\istate_{\text{exit}}, \istate'_{\text{exit}} \in \exitfn_{R}\fun{\direction = \tuple{\vertex, \varvertex}}$: the probability of going to the next room $R' = \labelingfn\fun{\varvertex}$ from an exit state of the current room $R$ only depends on the entrance function $\entrancefn_{R'}$ and is independent from the exact exit state which allowed to leave the current room $R$.
Therefore, we further denote by $\valinit{\policy}{\objective}{\discount}{\tuple{\sampledot, \vertex, \varvertex}}$ the value of any exit state of $\room$ in direction $\direction$, i.e., $\valinitfast{\tuple{\sampledot, \vertex, \varvertex}} = \valinitfast{\tuple{\istate_{\text{exit}}, \vertex, \varvertex}}$
for all $\istate_{\text{exit}} \in \exitfn_{\room}\fun{\direction}$.
Then, we have
\begin{align*}
    &\valinit{\policy}{\objective}{\discount}{\tuple{\istate, \vertex, \varvertex}} \\
    =& 
\sum_{\istate_0, \istate_1, \dots, \istate_i \in \textit{Path}^{\,\textit{fin}}_{\objective_{\direction}^{\room}}} \discount^{i} \pathdistribution{\room_{\istate}}{\latentpolicy_{\room, \direction}}
\fun{\istate_0, \istate_1, \dots, \istate_i} \cdot \valinit{\policy}{\objective}{\discount}{\tuple{\istate_i, \vertex, \varvertex}}\\
    =& 
\sum_{\istate_0, \istate_1, \dots, \istate_i \in \textit{Path}^{\,\textit{fin}}_{\objective_{\room}^{\direction}}} \discount^{i} \pathdistribution{\room_\istate}{\latentpolicy_{\room, \direction}}
\fun{\istate_0, \istate_1, \dots, \istate_i} \cdot \valinit{\policy}{\objective}{\discount}{\tuple{\sampledot, \vertex, \varvertex}}\\
=& \valinitfast{\tuple{\sampledot, \vertex, \varvertex}} \cdot \sum_{\istate_0, \istate_1, \dots, \istate_i \in \textit{Path}^{\,\textit{fin}}_{\objective_{\room}^{\direction}}} \discount^{i} \pathdistribution{\room_{\istate}}{\latentpolicy_{\room, \direction}}
\fun{\istate_0, \istate_1, \dots, \istate_i}\\
=& \valinitfast{\tuple{\sampledot, \vertex, \varvertex}} \cdot \valinit{\latentpolicy_{\direction, \room}}{\discount}{\objective_{\room}^{\direction}}{\istate} \tag{by Eq.~\eqref{eq:value-fn-pathfinT}},
\end{align*}
where $\valinit{\latentpolicy_{\room, \direction}}{\discount}{\objective_{\room}^{\direction}}{\istate}$ denotes the value of the reach-avoid objective $ \objective_{\room}^{\direction} = \objective\fun{\exitfn_{\room}\fun{\direction}, B_R}$ in the room $\room$ from state $\istate \in \room$. 
    Then, by~\ref{it:x:2}, assuming $\vertex \not\in G$, we have
    \begin{enumerate}
        \item if $\varvertex \not \in G$,\label{enum:1-value-plan}
    \begin{align}
        &\valinitfast{\tuple{\sampledot, \vertex, \varvertex}} \notag\\
        =& \discount \cdot {\sum_{\istate' \in \istates_{\labelingfn\fun{\varvertex}}} \entrancefn_{\labelingfn\fun{\varvertex}}\fun{\istate' \mid \direction=\tuple{\vertex, \varvertex}} \cdot \valinitfast{\tuple{\istate', \controller\fun{\vertex, \varvertex}}}} \label{eq:value-plan-init}\\
        =& \discount \cdot {\sum_{\istate' \in \istates_{\labelingfn\fun{\varvertex}}} \entrancefn_{\labelingfn\fun{\varvertex}}\fun{\istate' \mid \direction=\tuple{\vertex, \varvertex}} \cdot 
        \valinit{\latentpolicy_{\labelingfn\fun{\varvertex}, \controller\fun{\vertex, \varvertex}}}{\objective_{\room}^{\direction}}{\discount}{\istate}   
        \cdot \valinitfast{\tuple{\sampledot, \controller\fun{\vertex\cdot \varvertex}}}} \notag \\
        =& \discount \cdot \probtransitions\fun{\controller\fun{\vertex, \varvertex} \mid \tuple{\vertex, \varvertex}, \controller\fun{\vertex, \varvertex}} \cdot \valinitfast{\tuple{\sampledot, \controller\fun{\vertex\cdot \varvertex}}} \tag{where $\probtransitions$ is the transition function of $\mdp^{\graph}_{\planner}$, see~Eq.~\eqref{eq:succint-mdp-transition}}
    \end{align}
    \item \label{enum:2-value-plan} if $\varvertex \in G$, 
    \begin{align*}
        &\valinitfast{\tuple{\sampledot, \vertex, \varvertex}}\\
        =& \discount \cdot {\sum_{\istate' \in \istates_{\labelingfn\fun{\varvertex}}} \entrancefn_{\labelingfn\fun{\varvertex}}\fun{\istate' \mid \direction=\tuple{\vertex, \varvertex}} \cdot \valinitfast{\tuple{\istate', \controller\fun{\vertex, \varvertex}}}}\\
        =& \discount \cdot {\sum_{\istate' \in \istates_{\labelingfn\fun{\varvertex}}} \entrancefn_{\labelingfn\fun{\varvertex}}\fun{\istate' \mid \direction=\tuple{\vertex, \varvertex}}} \cdot 1 \tag{since $\controller\fun{\vertex, \varvertex} = \tuple{u, t}$ for some $t \in \neighbors{u}$}\\
        =& \discount.
    \end{align*}
    \end{enumerate}
    
    Now, respectively denote by $V_{\mdpplanner{}}^{\policy}\fun{\sampledot, \objective} \coloneqq \valinitfast{\sampledot}$ and $V_{\mdp_{\planner}^{\graph}}^{\controller}\fun{\sampledot, \eventually T}$ the value functions of $\mdpplanner{}$ and $\mdp_{\planner}^{\graph}$ for the objectives $\objective$ and $\eventually T$.
    By \ref{enum:1-value-plan} and \ref{enum:2-value-plan}, and by construction of $\mdp_{\planner}^{\graph}$, we have for any pair of vertices $\vertex, \varvertex \in \vertices$ that
    \[
        V_{\mdpplanner{}}^{\policy}\fun{\tuple{\sampledot, \vertex, \varvertex}, \objective} = \discount \cdot V_{\mdp_{\planner}^{\graph}}^{\controller}\fun{\tuple{\vertex, \varvertex}, \reach T},
    \]
    On the one hand, notice that, by construction of $\mdp_{\planner}^{\graph}$, we have for any pair of vertices $\tuple{\vertex, \varvertex} \in \edges$ that the initial values
       $ V_{\mdp_{\planner}^{\graph}}^{\controller}\fun{\reach T}$ are $\expectedsymbol{\direction \sim \mdpI} V_{\mdp_{\planner}^{\graph}}^{\controller}\fun{\direction, \reach T} = V_{\mdp_{\planner}^{\graph}}^{\controller}\fun{\direction_0, \reach T}$.
    On the other hand, we have 
    \begin{align*}
        V_{\mdpplanner{}}^{\policy}\fun{\objective} = \expectedsymbol{\istate' \sim \entrancefn_{\labelingfn\fun{\vertex_0}}\fun{\sampledot \mid \direction_0}} V_{\mdpplanner{}}^{\policy}\fun{\tuple{\istate', \controller\fun{\direction_0}}, \objective} = \nicefrac{1}{\discount} \cdot V_{\mdpplanner{}}^{\policy}\fun{\sampledot, \direction_0, \objective}
        \tag{by Eq.~\eqref{eq:value-plan-init}}.
    \end{align*}
    Then, we finally have:
    \[
        V_{\mdpplanner{}}^{\policy}\fun{\objective} = \nicefrac{1}{\discount} \cdot V_{\mdpplanner{}}^{\policy}\fun{\sampledot, \direction_0, \objective} = \nicefrac{\discount}{\discount} \cdot V_{\mdp_{\planner}^{\graph}}^{\controller}\fun{\direction_0, \reach T} = V_{\mdp_{\planner}^{\graph}}^{\controller}\fun{\reach T},
    \]
    which concludes the proof.
\end{proof}

\section{Initial Distribution Shifts: Training vs. Synthesis}\label{appendix:distribution-shifts}
\label{sec:training-vs-synthesis}
Our two-level controller construction occurs in two phases. First, we create a set of low-level policies $\planner$ by running Algorithm~\ref{algo:wae-dqn} in each room (\secref{wae-dqn}). Notably, training in each room is independent and can be executed in parallel. However, independent training introduces a challenge: an \emph{initial distribution shift} emerges when combining low-level policies using a planner. Our value bounds for a room $\room$ in direction $\direction$ depend on a loss $\transitionloss^{\room, \direction}$, computed based on the stationary distribution. This distribution may significantly change depending on a planner's choices. In this section, we address this challenge by showing, under mild assumptions on the initial distribution of each room $R$, that their transition losses $\transitionloss^{\room, \direction}$ obtained under any latent policy $\latentpolicy_{\room, \direction}$ for direction $\direction$ still guarantee to bound the gap between the values of the original and latent two-level models.

\highlight{
In the following, we first give details on this ``distribution shift,'' and then we prove Theorem~\ref{thm:lifting-guarantees} through Theorems~\ref{thm:reusable-components} and~\ref{thm:final-value-bound}.
}

\paragraph{Training rooms.}%
To construct $\planner$, we train low-level policies via Algorithm~\ref{algo:wae-dqn} {by simulating each room individually}.
Precisely, for room $\room \in \rooms$ and direction $\direction \in \directions_\room$, we train a WAE-DQN agent by considering $\room$ as episodic MDP with \emph{some} initial distribution $\mdpI_{\room}$, yielding (i)~low-level latent policy $\latentpolicy_{\room, \direction}$, (ii)~latent MDP $\latentmdp_{\room}$, and (iii)~state-embedding function $\embed_{\room}$.
Since $\latentpolicy_{\room, \direction}$ must learn to maximize the values of the objective $\objective^{\direction}_{\room}$, which asks to reach the exit state in direction $\direction$, we restart the simulation when the latter is visited.
Formally, the related training room is an episodic MDP $\room_{\direction} = \tuple{\istates_{\room}, \actions_{\room}, \probtransitions_{\room}^{\direction}, \mdpI_{\room}}$, where $\sreset \in \istates_{\room}$, $\probtransitions_{\room}^{\direction}\fun{\sampledot \mid \istate, \action}= \probtransitions_{\room}\fun{\sampledot \mid \istate, \action}$ when $\istate \not\in \exitfn_{\room}\fun{\direction}$, and $\probtransitions_{\room}^{\direction}\fun{\sreset \mid \istate, \action}=1$ otherwise.
We define $\latentprobtransitions_{\room}^{\direction}$ similarly for $\latentmdp_{\room}$ when the direction $\direction$ is considered.

\paragraph{Distribution shift.}%
Crucially, by considering rooms individually, a noticeable \emph{initial distribution shift} occurs when switching between training and synthesis phases. During training, there is no two-level controller, so the 
initial distribution of room~$\room$ is just~$\mdpI_{\room}$.
During synthesis, room entries and exits are determined by the distributions influenced by the choices made by the controller in the hierarchical MDP~$\hmdp$.
This implies that the induced initial distribution of each room depends on the likelihood of visiting other rooms and is further influenced by the other low-level policies.

We contend that this shift may induce significant consequences: denote by $\transitionloss^{\room, \direction}$ the transition loss of the room $\room_{\direction}$  operating under $\latentpolicy_{\room, \direction}$
and by $\transitionloss^{\controller, \planner}$ the transition loss of the two-level model $\hmdp$ operating under $\tuple{\controller, \planner}$.
Then, in the worst case, $\transitionloss^{\controller, \planner}$ and $\transitionloss^{\room, \direction}$ might be completely unrelated whatever the room $\room$ and direction $\direction$.
To see why, recall that transition losses are defined over stationary distributions of the respective models (Eq.~\ref{eq:transition-loss}). One can see this shift as a perturbation in the transition function of the rooms. Intuitively, by Assumption~\ref{assumption:episodic}, each room is almost surely entered infinitely often, meaning that such perturbations are also repeated infinitely often, possibly leading to completely divergent stationary distributions~\cite{OCinneide1993}, meaning that we loose the abstraction quality guarantees possibly obtained for each individual training room.

\paragraph{Entrance loss.}Fortunately, we claim that under some assumptions, when the initial distribution of each training room $\mdpI_{\room}$ is wisely chosen, we can still link the transition losses $\transitionloss^{\room, \direction}$ minimized in the training rooms to $\transitionloss^{\controller, \planner}$.
To provide this guarantee, the sole remaining missing component to our framework  is learning a \emph{latent entrance function}:
we define the entrance loss as
\begin{equation}
	\entranceloss = \expectedsymbol{\room, \direction \sim \stationary{\policy}} \divergence\fun{{\embed\entrancefn_{\room}\fun{\sampledot \mid \direction}, \,  \latententrancefn_{\room}\fun{\sampledot \mid \direction}}}, 
\end{equation}
where $\embed\entrancefn_{\room}\fun{\sampledot \mid \direction} = \expectedsymbol{\istate \sim \entrancefn_{\room}\fun{\sampledot \mid \direction}} \condition{\latentstate = \embed_{\room}\fun{\istate}}$, $\latententrancefn_{\room} \colon \directions_{\room} \to \distributions{\latentstates}$ is the latent entrance function, ${\policy}$ is the stationary policy in $\mdpplanner{}$ corresponding to the two-level controller $\tuple{\controller, \planner}$ where $\controller$ has a memory of size $\abs{\vertices}$,
$\divergence$ is total variation, and
and $\stationary{\policy}$ is the stationary distribution induced by $\policy$ in $\mdpplanner{}$.
The measure $\stationary{\policy}$ can also be seen as a distribution over rooms and directions chosen under the controller:%
\footnote{For simplicity, we consider here the special state $\tuple{\sreset, \vertex, \vertex_0}$ with $\tuple{\vertex, \vertex_0} = \direction_0$ as the joint reset state of the model (Assumption~\ref{assumption:same-reset}).}
\begin{align*}
\stationary{\policy}\fun{\room, \direction} \allowbreak =\allowbreak \expectedsymbol{\istate, \vertex, \varvertex \sim \stationary{\policy}} \left[\condition{\istate = \sreset, \room = \labelingfn\fun{\vertex}, \direction = \direction_0} +  \condition{\room = \labelingfn\fun{\vertex}, \direction = \tuple{\vertex, \varvertex}} \right].
\end{align*}

\begin{theorem}[Reusable RL components]\label{thm:reusable-components}
    Let $\tuple{\controller, \planner}$ be a two-level controller in $\hmdp$ where $\controller$ has finite memory of size $\abs{\vertices}$ and let $\policy$ be the equivalent stationary policy in the MDP plan $\mdpplanner{}$.
    Assume
    \begin{enumerate*}[(i)]
        \item
        $\planner$ only consists of latent policies and
        \item \label{asmp:projection-included}
        for any training room $\room \in \rooms$ and direction $\direction \in \directions_{\room}$, the projection\footnote{Formally speaking, this is the projection to $\istates_{\room}$ of the intersection of the BSCC of $\mdp_{\planner}$ operating under $\policy$ with $\istates_{\room} \times \directions_{\room}$.}of the BSCC of $\mdpplanner{}$ under $\policy$ to $\istates_{\room}$ is included in the BSCC of $\room_{\direction}$ under low-level policy~$\latentpolicy_{\room, \direction}$.
        Let
    \end{enumerate*}
    \begin{align*}
    \istates_{\room, \direction} &= \set{\tuple{\istate, \vertex, \varvertex} \in \istatesplanner \mid  \labelingfn\fun{\vertex} = \room \text{ and }\tuple{\vertex, \varvertex} = \direction},\\
    \stationary{\policy}\fun{\sreset \mid \room, \direction} &= \expected{\tuple{\istate, \vertex, \varvertex}, \action \sim \stationary{\policy}}{\probtransitions_{\planner}\fun{\sreset \mid \tuple{\istate, \vertex, \varvertex}, \action} \mid \istates_{\room, \direction}}, \text{ and}\\
    \stationary{\text{continue}}^{\min} &=1 - \max_{\room \in \rooms, \direction \in \directions} (\stationary{\policy}\fun{\sreset \mid \istates_{\room, \direction}} + \stationary{\policy}(\exitfn_{\room}\fun{\direction} \times{\set{\direction}} \mid \istates_{\room, \direction})).
    \end{align*}
    Then, there is a~$\kappa \geq 0$ with
     $   \transitionloss^{\controller, \planner} \leq  \entranceloss +  \frac{\kappa }{\stationary{\text{continue}}^{\min}} \; \expectedsymbol{\room, \direction \sim \stationary{\policy}} \transitionloss^{\room, \direction}.$
    Define
    the expected entrance function in room~$\room$ as
    \begin{equation*}
    \,\mdpI^{ \policy}_{\room}\fun{\istate} = \expectedsymbol{\dot{\istate}, \tuple{{\varvertex}, {\vertex}} \sim \stationary{\policy}}\left[ \entrancefn_{\room}\fun{\istate \mid \direction = \tuple{{\varvertex}, {\vertex}}}  \mid \dot{\istate} \in \exitfn_{\labelingfn\fun{{\varvertex}}}\fun{\tuple{{\varvertex}, {\vertex}}} \text{ and }\labelingfn\fun{\vertex} = \room \right] \text{ for any } \istate \in \istates_{\room}.
    \end{equation*}
    With $\support{P} = \set{x \in \measurableset \mid P(x) > 0}$ the support of distribution~$P$, if $\support{\mdpI_{\room}} = \support{\mdpI_{\room}^{\policy}}$, $\kappa$ can be set to the maximum probability ratio of room entry during training and synthesis:
    \begin{align*}
    \kappa = \max_{\room \in \rooms}\fun{\max_{\istate \in \support{\mdpI_{\room}}} \max\set{\frac{\mdpI_{\room}^{ \policy}\fun{\istate}}{\mdpI_{\room}\fun{\istate}}, \frac{\mdpI_{\room}\fun{\istate}}{\mdpI_{\room}^{ \policy}\fun{\istate}}}}^{\abs{\istates}}.
    \end{align*}
\end{theorem}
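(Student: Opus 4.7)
The plan is to view $\transitionloss^{\controller, \planner}$ as an expectation of the one-step total variation under the stationary measure $\stationary{\policy}$ of $\mdpplanner{}$, and to split this expectation according to the two kinds of transitions admitted by $\mdpplanner{}$: (a) intra-room moves from non-exit states, governed by $\latentpolicy_{\room, \direction}$ and the local transition $\probtransitions_\room$; and (b) inter-room moves from exit states, whose successor is drawn from $\entrancefn_{\room'}\fun{\sampledot \mid \direction}$ for the next room $\room'$. Since the latent plan is built block-wise from the same structure but with $\latentprobtransitions_\room$ and $\latententrancefn_{\room'}$ in the corresponding roles, the pointwise TV decomposes cleanly along this partition.

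For part (b), at any exit state $\tuple{\istate, \vertex, \varvertex}$ with $\istate \in \exitfn_\room\fun{\tuple{\vertex,\varvertex}}$ the one-step TV equals $\divergence\fun{\embed \entrancefn_{\room'}\fun{\sampledot \mid \direction}, \latententrancefn_{\room'}\fun{\sampledot \mid \direction}}$; pushing the expectation over $\stationary{\policy}$ from exit states to $\fun{\room', \direction}$-pairs recovers $\entranceloss$ and gives the first summand of the bound. For part (a), conditioned on $\istates_{\room, \direction}$ the TV integrand is exactly the integrand of $\transitionloss^{\room, \direction}$, but integrated against the conditional of $\stationary{\policy}$ on non-exit, non-reset in-room states rather than against $\stationary{\latentpolicy_{\room, \direction}}$. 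I would then do a change of measure: express this conditional as $\stationary{\latentpolicy_{\room, \direction}}$ times a density $h_{\room, \direction}$. Assumption~\ref{asmp:projection-included} ensures the density is well-defined on its support, and the substantive content of the theorem is that $h_{\room, \direction} \leq \kappa$. The normalising factor $1/\stationary{\text{continue}}^{\min}$ arises when converting between the restricted in-room conditional and the training stationary $\stationary{\latentpolicy_{\room, \direction}}$, whose support and total mass on continuation states differ: the mass lost by restricting away reset and exit states is at most $1 - \stationary{\text{continue}}^{\min}$, so its reciprocal inflates the bound.

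The main obstacle is proving the density bound $h_{\room, \direction} \leq \kappa$. The two stationary measures live on Markov chains over $\istates_\room$ with identical in-room dynamics under $\latentpolicy_{\room, \direction}$; they differ only in the restart distribution upon (re)entering the room --- $\mdpI_\room$ during training versus the synthesis-induced $\mdpI_\room^\policy$. My approach is to expand each stationary measure as a normalised sum over finite in-room excursions beginning from its restart distribution, and compare the two expansions excursion-by-excursion. Each excursion of length $k$ contributes a product of transition probabilities common to both expansions, multiplied by the restart probability of its first state, so the per-step ratio is bounded by $\max_\istate \mdpI_\room^\policy\fun{\istate}/\mdpI_\room\fun{\istate}$ and its reciprocal. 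Because the BSCC inclusion of Assumption~\ref{asmp:projection-included} lets one cover the support using excursions of length at most $\abs{\istates_\room}$, iterating the one-step ratio bound yields the stated exponent $\abs{\istates}$ and the symmetric $\max\set{\cdot, \cdot}$ form of $\kappa$. Keeping the normalising constants of both measures consistent throughout this iteration, and accounting for the fact that the two chains differ precisely at the restart step itself, is the technically delicate part.
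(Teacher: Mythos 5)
Your proposal follows the same skeleton as the paper's proof: expand $\transitionloss^{\controller, \planner}$ under $\stationary{\policy}$, split the one-step total variation according to whether the current state is an exit/reset state (yielding $\entranceloss$) or an interior room state, identify the conditional $\stationary{\policy}\fun{\sampledot \mid \direction}$ with the stationary distribution $\stationaryroom$ of an ``individual room'' whose restart distribution is the synthesis-induced $\mdpI^{\policy}_{\room}$ (the restriction away from the reset/exit mass producing the factor $\nicefrac{1}{\stationary{\text{continue}}^{\min}}$), and finally change measure to the training distribution $\stationaryroomtrain$ with a density bounded by $\kappa$. The only genuine divergence is how you justify the ratio bound: the paper invokes the Markov-chain perturbation theorem of O'Cinneide (1993, Thm.~1), which is where the exponent $\abs{\istates}$ in $\kappa$ originates, whereas you propose a regenerative (excursion) comparison. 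That route is viable and, done correctly, even sharper: the two chains agree on every transition except the one out of $\sreset$, so writing each stationary mass as expected visits per excursion divided by expected excursion length bounds $\stationaryroom\fun{\istate}/\stationaryroomtrain\fun{\istate}$ by $\bigl(\max_{\istate'} \mdpI^{\policy}_{\room}\fun{\istate'}/\mdpI_{\room}\fun{\istate'}\bigr)\cdot\bigl(\max_{\istate'} \mdpI_{\room}\fun{\istate'}/\mdpI^{\policy}_{\room}\fun{\istate'}\bigr)$, which is at most the stated $\kappa$ since its base is at least one. However, your stated mechanism for the exponent is wrong: excursions are not of length at most $\abs{\istates_{\room}}$ (only minimal paths are), and the restart ratio enters exactly once per excursion---every subsequent per-step ratio is $1$---so ``iterating the one-step ratio bound'' does not produce $\abs{\istates}$ factors. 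This is a repairable misstatement rather than a fatal gap (the corrected excursion computation, keeping the two normalising constants as a separate ratio as you anticipate, still yields the claimed inequality), but as written the key lemma bounding the density by $\kappa$ is not actually established; either fix the excursion bookkeeping as above or fall back on the perturbation bound the paper cites.
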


\begin{proof}
    For simplicity, assume that the reset state in $\istatesplanner$ is a triplet of the form $\tuple{\sreset, \vertex, \vertex_0}$ so that $\tuple{\vertex, \vertex_0} = \direction_0$ and $\exitfn_{\labelingfn\fun{\vertex_{\text{reset}}}}\fun{\direction_0} = \set{\sreset}$.
    We also may write $\embed\fun{\latentstate}$ for $\embed_{\room}\fun{\latentstate}$ when it is clear from the context that $\latentstate \in \istates_{\room}$.
    We respectively denote the marginal stationary distribution of states and directions by $\stationary{\policy}\fun{\istate} = \expectedsymbol{\istate', \vertex, \varvertex \sim \stationary{\policy}}\left[\condition{\istate = \istate'}\right]$ and 
    $\stationary{\policy}\fun{d} = \expectedsymbol{\istate, \vertex, \varvertex \sim \stationary{\policy}}\left[\condition{d = \tuple{\vertex, \varvertex}}\right]$.
    Furthermore, given a direction $d \in \edges$, we denote the conditional stationary distribution by 
    \begin{align*}
    \stationary{\policy}\fun{\istate, \action \mid \direction} &= 
    \expectedsymbol{\istate', \vertex, \varvertex \sim \stationary{\policy}}\left[ \latentpolicy_{\labelingfn\fun{\vertex}, \direction}\fun{\action \mid \embed\fun{\istate}} \cdot \condition{\istate = \istate'} \mid \set{\tuple{{\istate'}, \vertex, \varvertex} \in \istates_{\planner} \mid \tuple{\vertex, \varvertex} = \direction} \right] \\
    &=\expectedsymbol{\istate', \vertex, \varvertex \sim \stationary{\policy}}\left[ \latentpolicy_{\labelingfn\fun{\vertex}, \direction}\fun{\action \mid \embed\fun{\istate}} \cdot \condition{\istate = \istate'} \frac{\condition{d = \tuple{\vertex, \varvertex}}}{\stationary{\policy}\fun{\vertex, \varvertex}}\right]
    \end{align*}
        In the following, we also write $\transitionfn\fun{\istate' \mid \istate, \action}$ as shorthand for $\transitionfn\fun{\tuple{\istate', \vertex} \mid \tuple{\istate, \vertex}, \action}$ (the transition function of the explicit MDP of $\hmdp$) if and only if $\istate, \istate' \in \istates_{\labelingfn\fun{\vertex}}$ and $\istate \not\in \exitfn_{\labelingfn\fun{\vertex}}\fun{\direction}$ for some $\vertex \in \vertices$, $\direction \in \out{\vertex}$.
    Denote by $\latentprobtransitions_{\planner}$ the latent transition function of the \emph{latent MDP plan} $\latentmdp_{\planner}$, constructed from the collection of low-level policies $\planner$, the latent rooms $\set{\latentmdp_{\room} \colon \room \in \rooms}$, and the latent entrance functions $\set{\latententrancefn_{\room} \colon \room \in \rooms}$.
    Then:
\allowdisplaybreaks
\begin{align*}
    &\transitionloss^{\controller, \planner} \\
    =&\ \frac{1}{2} \, \expectedsymbol{\tuple{\istate, \vertex, \varvertex}, \action \sim \stationary{\policy}} 
     \norm{\embed\transitionfnplanner\fun{\sampledot \mid \tuple{\istate, \vertex, \varvertex}, \action} - \latenttransitionfnplanner\fun{\sampledot \mid \tuple{\embed\fun{\istate}, \vertex, \varvertex}, \action} }_1\\
    =&\ 
        \frac{1}{2} \, \expectedsymbol{\istate, \vertex, \varvertex \sim \stationary{\policy}} \left[\condition{\istate \neq \sreset} \, \condition{\istate \not\in \exitfn_{\labelingfn\fun{\vertex}}\fun{\tuple{\vertex, \varvertex}}} \right.
         \norm{\embed\transitionfnplanner\fun{\sampledot \mid \tuple{\istate, \vertex, \varvertex}, *} - \latenttransitionfnplanner\fun{\sampledot \mid \tuple{\embed\fun{\istate}, \vertex, \varvertex}, *} }_1 \Bigg]\\
    &\ 
    +\frac{1}{2} \, \expectedsymbol{\tuple{\istate, \vertex, \varvertex}, d' \sim \stationary{\policy}} \Bigg[ \condition{\istate \neq \sreset} \, \condition{\istate \in \exitfn_{\labelingfn\fun{\vertex}}\fun{\tuple{\vertex, \varvertex}}} 
    \norm{\embed\transitionfnplanner\fun{\sampledot \mid \tuple{\istate, \vertex, \varvertex}, d'} - \latenttransitionfnplanner\fun{\sampledot \mid \tuple{\embed\fun{\istate}, \vertex, \varvertex}, d'} }_1 \Bigg]
    \\
    &\ + \frac{1}{2} \, \expectedsymbol{\istate, \vertex, \varvertex \sim \stationary{\policy}} \left[\condition{\istate = \sreset} \, 
         \norm{\embed\transitionfnplanner\fun{\sampledot \mid \tuple{\istate, \vertex, \varvertex}, *} - \latenttransitionfnplanner\fun{\sampledot \mid \tuple{\embed\fun{\istate}, \vertex, \varvertex}, *} }_1 \right] \tag{$\policy$ is proper}
    \\
    =&\
    \frac{1}{2} \, \expectedsymbol{\istate, \vertex, \varvertex \sim \stationary{\policy}} \left[\condition{\istate \neq \sreset}\, \condition{\istate \not\in \exitfn_{\labelingfn\fun{\vertex}}\fun{\tuple{\vertex, \varvertex}}}\right.
     \left.\norm{\expectedsymbol{a \sim \latentpolicy_{\labelingfn\fun{\vertex}, \tuple{\vertex, \varvertex}}\fun{\sampledot \mid \embed\fun{\istate}}}\bigg[\embed\transitionfn\fun{\sampledot \mid \istate, \action} - \latentprobtransitions\fun{\sampledot \mid {\embed\fun{\istate}, \action}} \bigg]}_1 \right]
    \tag{by definition of $\mdpplanner{}$ \ref{enum:i}}\\
    &\ 
    +\frac{1}{2} \, \expectedsymbol{\tuple{\istate, \vertex, \varvertex}, d' \sim \stationary{\policy}} \Bigg[ \condition{\istate \neq \sreset} \, \condition{\istate \in \exitfn_{\labelingfn\fun{\vertex}}\fun{\tuple{\vertex, \varvertex}}} 
    \norm{ \embed\entrancefn_{\labelingfn\fun{\varvertex}}\fun{\sampledot \mid \tuple{\vertex, \varvertex}} - \latententrancefn_{\labelingfn\fun{\varvertex}}\fun{\sampledot \mid \tuple{\vertex, \varvertex}} }_1 \Bigg]
    \tag{by definition of $\mdpplanner{}$ \ref{enum:ii}}\\
    &\ + \frac{1}{2} \, \expectedsymbol{\istate, \vertex, \varvertex \sim \stationary{\policy}} \left[\condition{\istate = \sreset} \,  \norm{ \embed\entrancefn_{\labelingfn\fun{\vertex_0}}\fun{\sampledot \mid d_0} - \latententrancefn_{\labelingfn\fun{\vertex_0}}\fun{\sampledot \mid d_0} }_1 \right]
    \tag{by definition of $\mdpplanner{}$ \ref{enum:iii}}\\
    = &\
    \frac{1}{2} \, \expectedsymbol{\istate, \vertex, \varvertex \sim \stationary{\policy}} \left[\condition{\istate \neq \sreset}\, \condition{\istate \not\in \exitfn_{\labelingfn\fun{\vertex}}\fun{\tuple{\vertex, \varvertex}}}\right.
     \left.\norm{\expectedsymbol{a \sim \latentpolicy_{\labelingfn\fun{\vertex}, \tuple{\vertex, \varvertex}}\fun{\sampledot \mid \embed\fun{\istate}}}\bigg[\embed\transitionfn\fun{\sampledot \mid \istate, \action} - \latentprobtransitions\fun{\sampledot \mid {\embed\fun{\istate}, \action}} \bigg]}_1 \right]
    \\
    &\ + \frac{1}{2} \, \expectedsymbol{\room, \direction \sim \stationary{\policy}}  \norm{ \embed\entrancefn_{\room}\fun{\sampledot \mid \direction} - \latententrancefn_{\room}\fun{\sampledot \mid \direction} }_1  \\
    =&\
    \frac{1}{2} \, \expectedsymbol{\istate, \vertex, \varvertex \sim \stationary{\policy}} \left[\condition{\istate \neq \sreset}\, \condition{\istate \not\in \exitfn_{\labelingfn\fun{\vertex}}\fun{\tuple{\vertex, \varvertex}}}\right.
     \left.\norm{\expectedsymbol{a \sim \latentpolicy_{\labelingfn\fun{\vertex}, \tuple{\vertex, \varvertex}}\fun{\sampledot \mid \embed\fun{\istate}}}\bigg[\embed\transitionfn\fun{\sampledot \mid \istate, \action} - \latentprobtransitions\fun{\sampledot \mid {\embed\fun{\istate}, \action}} \bigg]}_1 \right]
    \\
    &\ + \entranceloss \\
    \leq &\
    \frac{1}{2} \, \expectedsymbol{\istate, \vertex, \varvertex \sim \stationary{\policy}} \expectedsymbol{a \sim \latentpolicy_{\labelingfn\fun{\vertex}, \tuple{\vertex, \varvertex}}\fun{\sampledot \mid \embed\fun{\istate}}}\left[\condition{\istate \neq \sreset}\, \condition{\istate \not\in \exitfn_{\labelingfn\fun{\vertex}}\fun{\tuple{\vertex, \varvertex}}}\right.
     \left. \norm{\embed\transitionfn\fun{\sampledot \mid \istate, \action} - \latentprobtransitions\fun{\sampledot \mid {\embed\fun{\istate}, \action}} }_1 \right]+ \entranceloss
    \tag{Jensen's inequality}\\
    = &\
    \frac{1}{2} \, \expectedsymbol{\direction \sim \stationary{\policy}} \expectedsymbol{\istate, \action \sim \stationary{\policy}\fun{\sampledot \mid \direction}}\left[\condition{\istate \neq \sreset}\, \condition{\istate \not\in \exitfn_{\labelingfn\fun{\vertex}}\fun{{\direction}}}\right.
     \left. \norm{\embed\transitionfn\fun{\sampledot \mid \istate, \action} - \latentprobtransitions\fun{\sampledot \mid {\embed\fun{\istate}, \action}} }_1 \right]+ \entranceloss \tag{$\star$}\label{eq:proof-star}
\end{align*}
Now, let $\direction = \tuple{\vertex, \varvertex} \in \edges$ be a \emph{target direction} for the room $\room = \labelingfn\fun{\vertex}$. 
We consider the room $\room$ as an episodic MDP (cf.\ Assumption~\ref{assumption:episodic}) where (i)~the initial distribution corresponds to the expected entrance probabilities in $\room$ under $\policy$: for any $\istate \in \istates_{\room}$
\begin{equation*}
  \mdpI^{\policy}_{\room}\fun{\istate} = \expectedsymbol{\dot{\istate}, \tuple{\dot{\varvertex}, \dot{\vertex}} \sim \stationary{\policy}}\left[ \entrancefn_{\room}\fun{\istate \mid \direction_I = \tuple{\dot{\varvertex}, \dot{\vertex}}}  \mid \dot{\istate} \in \exitfn_{\labelingfn\fun{\dot{\varvertex}}}\fun{\tuple{\dot{\varvertex}, \dot{\vertex}}}\text{ and } \dot{\vertex} = \vertex \right] 
\end{equation*}
(where $\direction_I$ is the direction from which $\room$ is entered);
and (ii)~the room is reset when an exit state in direction $d$ is visited: for any $\istate, \istate' \in \istates_{\room}$, $\action \in \actions_{\room}$,
\begin{equation}
    \probtransitions^{\direction, \policy}_{\room}\fun{\istate' \mid  \allowbreak\istate, \action} = 
    \begin{cases}
        1 & \text{if $\istate' = \sreset$ and $\istate \in \exitfn_{\room}\fun{\direction}$}, \\
        \mdpI^{ \policy}_{\room}\fun{\istate'} & \text{if $\istate = \sreset$, and} \\
         \probtransitions^{}_{\room}(\istate' \mid \istate, \action) & \text{otherwise.}
    \end{cases} \label{eq:room-transition}
\end{equation}
We call the resulting MDP the \emph{individual room} version of $\room$ that we denote by $\room_{\direction, \policy}$.
The stationary distribution of the room $\room_{\direction, \policy}$ for the low-level policy $\latentpolicy_{R, \direction}$ is $\stationaryroom$.
Observe that $\stationaryroom$ is over $\istates_{\room}$, which includes the reset state $\sreset$, while $\stationary{\policy}\fun{\sampledot \mid \direction}$ is over the exact same state space but without the reset state (since the reset state is a special state outside $\room$, shared by all the rooms in the two-level model; cf.\ Assumption~\ref{assumption:same-reset} and the definition of $\mdpplanner{}$).
\begin{figure}[t]
    \centering
    \includegraphics[width=.8\linewidth]{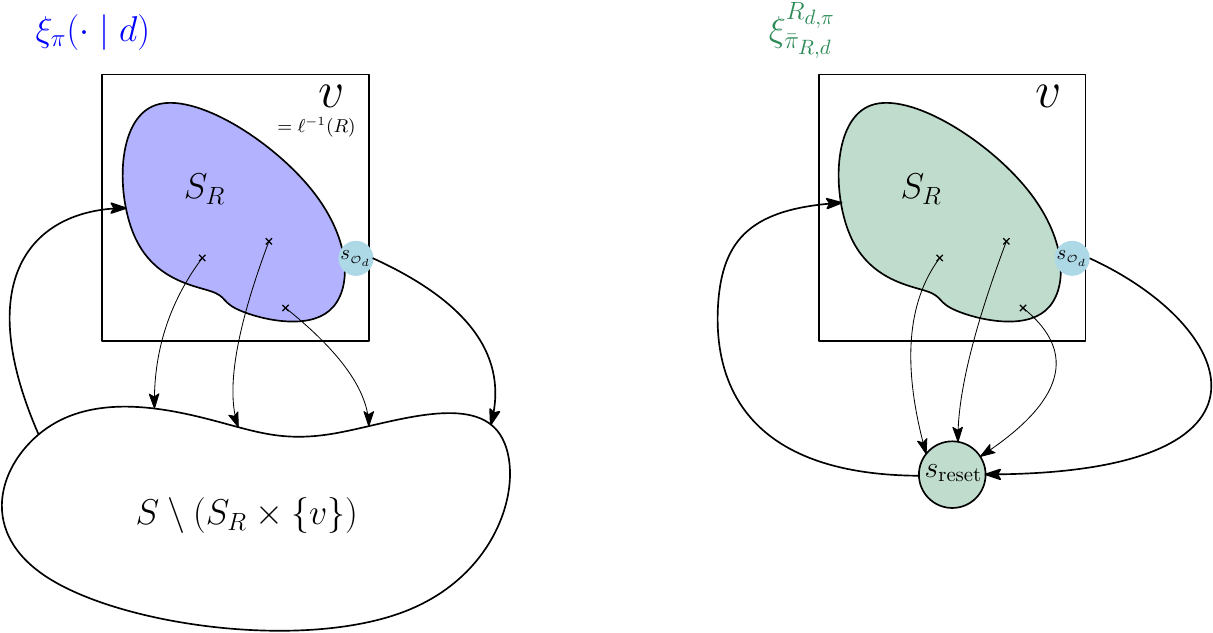}
    \caption{Room $\room = \labelingfn\fun{\vertex}$ in the two-level model (left) and the same room taken individually (right). 
    Both distributions $\stationary{\policy}\fun{\sampledot \mid \direction}$ and $\stationaryroom$ correspond to the limiting distributions over $\istates_{\room}$ when $\latentpolicy_{\room, \direction}$ is executed in $\room$.
    The sole difference remains in the fact that the reset is considered outside $\room$ in the two-level model (Assumption~\ref{assumption:same-reset}) while it is considered to be part of the state space when $\room$ is taken individually (Assumption~\ref{assumption:episodic}).}
    \label{fig:room-distribution}
\end{figure}
Furthermore, notice that, modulo this reset state, the two distributions are the same (see Fig.~\ref{fig:room-distribution}): they both consist of the limiting distribution over $\istates_{\room}$  when $\latentpolicy_{\room, \direction}$ is executed in $\room$.
All the transition distributions remain the same, except those of the exit states: in the two-level model $\hmdp$, every state $\istate \in \exitfn_{\room}\fun{\direction=\tuple{\vertex, \varvertex}}$ transitions to $\varvertex$ deterministically, while in the individual room $\room_{\direction, \policy}$, they transition to the reset state deterministically.
Still, in both cases, $\room$ is entered and exited with the same probability
(respectively from and to $\fun{\istates \setminus \istates_{\room} \times \set{\vertex}}$ in $\hmdp$ and $\sreset$ in the individual room $\room_{\direction, \policy}$).
Therefore, we have:
\begin{equation}
\stationary{\policy}\fun{\istate \mid \direction} = \stationaryroom\fun{\istate \mid \istates_{\room} \setminus \set{\sreset}} = \frac{\stationaryroom\fun{\istate} \cdot \condition{\istate \neq \sreset}}{1 - \stationaryroom\fun{\sreset}}.
\end{equation}

Instead of sampling from $\stationary{\policy}\fun{\istate \mid \direction}$ in Eq.~\eqref{eq:proof-star}, we would rather like to sample from  the distribution of the individual room $\stationaryroom\fun{\istate \mid \istates_{\room} \setminus \set{\sreset}}$.
We have:
\allowdisplaybreaks
\begin{align}
&\ \expectedsymbol{\istate, \action \sim \stationary{\policy}\fun{\sampledot \mid \direction}}\left[\condition{\istate \neq \sreset}\, \condition{\istate \not\in \exitfn_{\labelingfn\fun{\vertex}}\fun{{\direction}}}
 \norm{\embed\transitionfn\fun{\sampledot \mid \istate, \action} - \latentprobtransitions\fun{\sampledot \mid {\embed\fun{\istate}, \action}} }_1 \right] \notag \\
=&\ 
\sum_{\istate \in \istates } \sum_{\action \in \actions} \left[\stationary{\policy}\fun{\istate \mid \direction} \, \latentpolicy_{\room, {\direction}}\fun{\action \mid \embed\fun{\istate}} \, \right. 
\left. \condition{\istate \neq \sreset}\, \condition{\istate \not\in \exitfn_{\labelingfn\fun{\vertex}}\fun{{\direction}}} \, \norm{\embed\transitionfn\fun{\sampledot \mid \istate, \action} - \latentprobtransitions\fun{\sampledot \mid {\embed\fun{\istate}, \action}} }_1 \right] \label{eq:from-cond-prob}
\\
=&\ 
\sum_{\istate \in \istates } \sum_{\action \in \actions} \left[\frac{\stationaryroom\fun{\istate} \condition{\istate \neq \sreset}}{1 - \stationaryroom\fun{\sreset}} \, \latentpolicy_{\room, {\direction}}\fun{\action \mid \embed\fun{\istate}} \, \right. 
\left. \condition{\istate \neq \sreset}\, \condition{\istate \not\in \exitfn_{\labelingfn\fun{\vertex}}\fun{{\direction}}} \, \norm{\embed\transitionfn_{\room}\fun{\sampledot \mid \istate, \action} - \latentprobtransitions_{\room}\fun{\sampledot \mid {\embed\fun{\istate}, \action}} }_1 \right]  
 \label{eq:to-stationary}\\
=&\ \expectedsymbol{\istate, \action \sim \stationaryroom}\!\!\left[\frac{\condition{\istate \neq \sreset}}{1 - \stationaryroom\fun{\sreset}} \condition{\istate \not\in \exitfn_{\labelingfn\fun{\vertex}}\fun{{\direction}}}
 \norm{\embed\transitionfn_{\room}\fun{\sampledot \mid \istate, \action} - \latentprobtransitions_{\room}\fun{\sampledot \mid {\embed\fun{\istate}, \action}} }_1 \right] \notag
\end{align}
Notice that we can pass from Eq.~\eqref{eq:from-cond-prob} to~\eqref{eq:to-stationary} because we only consider states $\istate \neq \sreset$ and $\istate \not\in \exitfn_{\labelingfn\fun{\vertex}}\fun{\direction}$.
States that do not satisfy both constraints are the only ones for which  
$\probtransitions\fun{\sampledot \mid \istate, \action}$ differs from $\probtransitions_{\room}^{\direction, \policy}\fun{\sampledot \mid \istate, \action}$ (Eq.~\ref{eq:room-transition}).
Furthermore, in that case, we have $\probtransitions_{\room}^{\direction, \policy}\fun{\sampledot \mid \istate, \action} = \probtransitions_{\room}\fun{\sampledot \mid \istate, \action}$.
Then we have:
\allowdisplaybreaks
\begin{align*}
&\ \expectedsymbol{\istate, \action \sim \stationaryroom}\!\!\left[\frac{\condition{\istate \neq \sreset}}{1 - \stationaryroom\fun{\sreset}} \condition{\istate \not\in \exitfn_{\labelingfn\fun{\vertex}}\fun{{\direction}}}
 \norm{\embed\transitionfn_{\room}\fun{\sampledot \mid \istate, \action} - \latentprobtransitions_{\room}\fun{\sampledot \mid {\embed\fun{\istate}, \action}} }_1 \right]\\
= &\ \expectedsymbol{\istate, \action \sim \stationaryroom}\!\!\left[\frac{\condition{\istate \neq \sreset}}{1 - \stationaryroom\fun{\sreset}} 
 \norm{\embed\transitionfn_{\room}^{\direction}\fun{\sampledot \mid \istate, \action} - \latentprobtransitions_{\room}^{\direction}\fun{\sampledot \mid {\embed\fun{\istate}, \action}} }_1 \right] \tag{by definition of $\probtransitions_{\room}^{\direction}$ and $\latentprobtransitions_{\room}^{\direction}$}\\
= &\ \frac{1}{1 - \stationaryroom\fun{\sreset}} \expectedsymbol{\istate, \action \sim \stationaryroom} \left[ \condition{\istate \neq \sreset} \,
 \norm{\embed\transitionfn_{\room}^{\direction}\fun{\sampledot \mid \istate, \action} - \latentprobtransitions_{\room}^{\direction}\fun{\sampledot \mid {\embed\fun{\istate}, \action}} }_1 \right] \\
 \leq &\ \frac{1}{1 - \stationaryroom\fun{\sreset}} \expectedsymbol{\istate, \action \sim \stationaryroom} 
 \norm{\embed\transitionfn_{\room}^{\direction}\fun{\sampledot \mid \istate, \action} - \latentprobtransitions_{\room}^{\direction}\fun{\sampledot \mid {\embed\fun{\istate}, \action}} }_1 \\
\end{align*}
Assuming that the projection of the BSCC of $\mdpplanner{}$ under $\policy$ to $\istates_{\room}$ is included in the BSCC of $\room$ when it operates under $\latentpolicy_{\room, \direction}$, we have that $\support{\stationaryroom} \subseteq  \support{\stationaryroomtrain}$, where $\stationary{\latentpolicy_{\room, \direction}}^{\room}$ denotes the stationary distribution of the training room $\room_{\direction}$ under the latent policy $\latentpolicy_{\room, \direction}$.
Then:
\begin{align*}
&\ \frac{1}{1 - \stationaryroom\fun{\sreset}} \expectedsymbol{\istate, \action \sim \stationaryroom} 
 \norm{\embed\transitionfn_{\room}^{\direction}\fun{\sampledot \mid \istate, \action} - \latentprobtransitions_{\room}^{\direction}\fun{\sampledot \mid {\embed\fun{\istate}, \action}} }_1\\
= &\ 
\frac{1}{1 - \stationaryroom\fun{\sreset}} \sum_{\istate \in \support{\stationaryroom}} \sum_{\action \in \actions_{\room}} \left[ 
\stationaryroom\fun{\istate}\, \latentpolicy_{\room, \direction}\fun{\action \mid \embed\fun{\istate}} \right.
\left.\norm{\embed\transitionfn_{\room}^{\direction}\fun{\sampledot \mid \istate, \action} - \latentprobtransitions_{\room}^{\direction}\fun{\sampledot \mid {\embed\fun{\istate}, \action}} }_1 \right]
\\
= &\ 
\frac{1}{1 - \stationaryroom\fun{\sreset}} \sum_{\istate \in \support{\stationaryroomtrain}} \sum_{\action \in \actions_{\room}} \left[ 
\frac{\stationaryroom\fun{\istate}}{\stationaryroomtrain\fun{\istate}}\, \stationaryroomtrain\fun{\istate}  \, \latentpolicy_{\room, \direction}\fun{\action \mid \embed\fun{\istate}} \right.
\left.\norm{\embed\transitionfn_{\room}^{\direction}\fun{\sampledot \mid \istate, \action} - \latentprobtransitions_{\room}^{\direction}\fun{\sampledot \mid {\embed\fun{\istate}, \action}} }_1 \right]
\\
=&\ \frac{1}{1 - \stationaryroom\fun{\sreset}} \expectedsymbol{\istate, \action \sim \stationaryroomtrain} \left[
\frac{\stationaryroom\fun{\istate}}{\stationaryroomtrain\fun{\istate}}\,
 \norm{\embed\transitionfn_{\room}^{\direction}\fun{\sampledot \mid \istate, \action} - \latentprobtransitions_{\room}^{\direction}\fun{\sampledot \mid {\embed\fun{\istate}, \action}} }_1\right] \\
\leq &\ 
\frac{1}{1 - \stationaryroom\fun{\sreset}} \expectedsymbol{\istate, \action \sim \stationaryroomtrain} \left[
\max_{\istate' \in \support{\stationaryroomtrain}}\fun{\frac{\stationaryroom\fun{\istate'}}{\stationaryroomtrain\fun{\istate'}}}\right.
 \left.\norm{\embed\transitionfn_{\room}^{\direction}\fun{\sampledot \mid \istate, \action} - \latentprobtransitions_{\room}^{\direction}\fun{\sampledot \mid {\embed\fun{\istate}, \action}} }_1\right] 
 \\
= &\ 
\frac{1}{1 - \stationaryroom\fun{\sreset}} \max_{\istate \in \support{\stationaryroomtrain}}\fun{\frac{\stationaryroom\fun{\istate}}{\stationaryroomtrain\fun{\istate}}}
 \expectedsymbol{\istate, \action \sim \stationaryroomtrain} \left[\norm{\embed\transitionfn_{\room}^{\direction}\fun{\sampledot \mid \istate, \action} - \latentprobtransitions_{\room}^{\direction}\fun{\sampledot \mid {\embed\fun{\istate}, \action}} }_1\right] 
 \\
 = &\ \max_{\istate \in \support{\stationaryroomtrain}}\fun{\frac{\stationaryroom\fun{\istate}}{\stationaryroomtrain\fun{\istate}}} \frac{2 \transitionloss^{\room, \direction}}{1 - \stationaryroom\fun{\sreset}}
\end{align*}

If the initial distributions of the individual room $\room_{\direction, \policy}$ and the training room $\room_{\direction}$ have the same support, then 
the projection and the BSCCs coincide since the same set of states is eventually visited under $\latentpolicy$ from states of $\support{\mdpI_{\room}} = \support{\mdpI_{\room}^{ \policy}}$.
Furthermore, by~\cite[Thm.~1]{OCinneide1993}, we have
\allowdisplaybreaks
\begin{align}
	&\ \max_{\istate \in \support{\stationaryroomtrain}}\fun{\frac{\stationaryroom\fun{\istate}}{\stationaryroomtrain\fun{\istate}}}\\
    \leq&\ \max_{\istate \in \support{\stationaryroomtrain}} \max\set{\frac{\stationaryroom\fun{\istate}}{\stationaryroomtrain\fun{\istate}}, \frac{\stationaryroomtrain\fun{\istate}}{\stationaryroom\fun{\istate}}} \notag \\
    \leq &\  \fun{\max_{\istate \in \support{\mdpI_{\room}}} \max\set{\frac{\mdpI_{\room}^{\policy}\fun{\istate}}{\mdpI_{\room}\fun{\istate}}, \frac{\mdpI_{\room}\fun{\istate}}{\mdpI_{\room}^{ \policy}\fun{\istate}}}}^{\abs{\istates}} \tag{cf.\ Eq.~\eqref{eq:room-transition}}\\
    =&\ \kappa_{\room, \direction};
\end{align}
otherwise, we set $\kappa_{\room, \direction}$ to $\max_{\istate \in \support{\stationaryroomtrain}}\fun{\frac{\stationaryroom\fun{\istate}}{\stationaryroomtrain\fun{\istate}}}$.
Moreover, let $\istates_{\room, \direction} = \set{\tuple{\istate, \vertex, \varvertex} \in \istatesplanner \mid  \labelingfn\fun{\vertex} = \room \text{ and }\tuple{\vertex, \varvertex} = \direction}$ and define \[\stationary{\policy}\fun{\sreset \mid \room, \direction} = \expected{\tuple{\istate, \vertex, \varvertex}, \action \sim \stationary{\policy}}{\probtransitions_{\planner}\fun{\sreset \mid \tuple{\istate, \vertex, \varvertex}, \action} \mid \istates_{\room, \direction}}.\]
Notice that
\begin{align*}
    \stationaryroom\fun{\sreset} &=
    \expected{\tuple{\istate, \vertex, \varvertex}, \action \sim \stationary{\policy}}{\probtransitions_{\planner}\fun{\sreset \mid \tuple{\istate, \vertex, \varvertex}, \action} + \condition{\istate \in \exitfn_{\room}\fun{\direction}} \mid \istates_{\room, \direction}}\\
    &= \stationary{\policy}\fun{ \sreset  \mid {\room, \direction} } + \stationary{\policy}\fun{  \exitfn_{\room}\fun{\direction} \times \set{\direction} \mid \istates_{\room, \direction} }
\end{align*}
by (i)~the stationary property, (ii)~definition of $\probtransitions_{\room}^{\direction, \policy}$ (cf.\ Eq.~\eqref{eq:room-transition} and Fig.~\ref{fig:room-distribution}), (iii)~the fact that the probability of exiting the room is equal to the probability of visiting an exit state, and (iv)~the fact that resetting the room and visiting an exit state are disjoint events (when an exit state is visited, it always transitions to the next room, never to the reset state).

By putting all together, we have
\allowdisplaybreaks
\begin{align*}
    &\ \transitionloss^{\controller, \planner} \\
    \leq&\ 
    \entranceloss + \frac{1}{2} \, \expectedsymbol{\direction \sim \stationary{\policy}} \expectedsymbol{\istate, \action \sim \stationary{\policy}\fun{\sampledot \mid \direction}}\left[\condition{\istate \neq \sreset}\, \condition{\istate \not\in \exitfn_{\labelingfn\fun{\vertex}}\fun{{\direction}}}\right.
     \left. \norm{\embed\transitionfn\fun{\sampledot \mid \istate, \action} - \latentprobtransitions\fun{\sampledot \mid {\embed\fun{\istate}, \action}} }_1 \right]
     \\
    \leq &\ \entranceloss + \expectedsymbol{\room, \direction \sim \stationary{\policy}}  \frac{\kappa_{\room, \direction} \transitionloss^{\room, \direction}}{1 - \stationary{\latentpolicy_{\room, \direction}}^{\room_{\direction, \policy}}\fun{\sreset}} \\
    = &\ \entranceloss  + \expectedsymbol{\room, \direction \sim \stationary{\policy}}  \frac{\kappa_{\room, \direction} \transitionloss^{\room, \direction}}{1 - \stationary{\policy}\fun{\sreset \mid {\room, \direction}} - \stationary{\policy}\fun{\exitfn_{\room}\fun{\direction} \times \set{\direction} \mid \istates_{\room, \direction}}} \\
    \leq &\ 
    	 \entranceloss + \expectedsymbol{\room, \direction \sim \stationary{\policy}} \frac{\max \set{{\kappa_{\room^{\star}, \direction^{\star}}} \colon \room^{\star} \in \rooms, \direction^{\star} \in \directions_{\room^{\star}}} \, \transitionloss^{\room, \direction}}{1 - \max_{\room^{\star} \in \rooms, \direction^{\star} \in \directions_{\room}}\fun{\stationary{\policy}\fun{\sreset \mid {\room^{\star}, \direction^{\star}}} + \stationary{\policy}\fun{\exitfn_{\room^{\star}}\fun{\direction^{\star}} \times \set{\direction^{\star}} \mid \istates_{\room^{\star}, \direction^{\star}}}}}
    	\\
    \leq &\ \entranceloss + \frac{\kappa}{ \stationary{\text{continue}}^{\min}} \, \expectedsymbol{\room, \direction \sim \stationary{\policy}}{ \transitionloss^{\room, \direction}}
\end{align*}
where $\kappa = \max \set{{\kappa_{\room^{\star}, \direction^{\star}}} \colon \room^{\star} \in \rooms, \direction^{\star} \in \directions_{\room^{\star}}}$ and
\begin{equation}
  \stationary{\text{continue}}^{\min} = 1 - \max_{\room \in \rooms, \direction \in \directions_{\room}}\fun{\stationary{\policy}\fun{\sreset \mid {\room, \direction}} + \stationary{\policy}\fun{\exitfn_{\room}\fun{\direction} \times \set{\direction}\mid \istates_{\room, \direction}}}.
\end{equation}

This concludes the proof.
\end{proof}

\paragraph{Discussion.} Assumption~\ref{asmp:projection-included}
boils down to design an initial distribution for the simulator of each room that provides a sufficient coverage of the state space: the latter should include the states likely to be seen when the room is entered under any planner.
Then, if this initial distribution is powerful enough to provide an exact coverage of the entrance states visited under the planner $\controller$, the multiplier of the transition loss $\kappa$ can be determined solely based on the ratio of the initial distributions obtained during training and synthesis.
We summarize the results as follows.

\begin{theorem}[Value bound in $\hmdp$]\label{thm:final-value-bound}
Under the assumptions {of Thm.~\ref{thm:reusable-components}},
    \begin{align}
      \left| \values{\policy}{}{} -
      \latentvalues{\policy}{}{} \right| &\leq \frac{\entranceloss +  \nicefrac{\kappa }{\stationary{\text{continue}}^{\min}} \; \expectedsymbol{\room, \direction \sim \stationary{\policy}} \transitionloss^{\room, \direction}}{\stationary{\policy}\fun{\sreset}\fun{1 {-} \discount }}.
    \end{align}
\end{theorem}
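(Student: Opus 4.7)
The plan is to derive Theorem~\ref{thm:final-value-bound} as an immediate composition of two results already established in the paper: the initial-value bound (Thm.~\ref{thm:initial-value-bound}) and the transition-loss decomposition for reusable components (Thm.~\ref{thm:reusable-components}). The key observation is that the two-level controller $\tuple{\controller, \planner}$, where $\controller$ has memory of size $\abs{\vertices}$, corresponds via Thm.~\ref{thm:mdp-planner-equiv} (equivalently Lem.~\ref{lem:mdp-planner-equiv}) to a stationary policy $\policy$ in the MDP plan $\mdpplanner{}$, and this MDP plan admits a natural latent abstraction $\latentmdp_{\planner}$ obtained by lifting the per-room embeddings to triples via $\tuple{\istate, \vertex, \varvertex} \mapsto \tuple{\embed_{\labelingfn(\vertex)}(\istate), \vertex, \varvertex}$ and using the latent entrance functions $\latententrancefn_{\room}$ introduced in Sect.~\ref{sec:high_level}.

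First, I would check that Assumption~\ref{assumption:episodic} lifts from individual rooms to $\mdpplanner{}$: the shared reset state from Assumption~\ref{assumption:same-reset} is almost surely visited under any proper policy (since each room is episodic and the planner only concatenates rooms), and the transition function out of $\sreset$ is exactly the initial distribution $\mdpI_{\planner}$ by construction~\ref{enum:iii}. The analogous property for $\latentmdp_{\planner}$ follows from the fact that $\embed$ preserves the reset state. With these preconditions in place, Thm.~\ref{thm:initial-value-bound} applied to the pair $(\mdpplanner{}, \latentmdp_{\planner})$ under the stationary latent policy $\policy$ gives
\[
    \left| \values{\policy}{}{} - \latentvalues{\policy}{}{} \right| \leq \frac{\transitionloss^{\controller, \planner}}{\stationary{\policy}(\sreset)(1-\discount)},
\]
where $\transitionloss^{\controller, \planner}$ is the transition loss of the composed model with respect to the stationary distribution $\stationary{\policy}$ of $\mdpplanner{}$.

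Second, I would invoke Thm.~\ref{thm:reusable-components} directly, whose hypotheses coincide with those assumed in the current statement, to upper-bound the numerator:
\[
    \transitionloss^{\controller, \planner} \leq \entranceloss + \frac{\kappa}{\stationary{\text{continue}}^{\min}} \, \expectedsymbol{\room, \direction \sim \stationary{\policy}} \transitionloss^{\room, \direction}.
\]
Substituting this bound into the previous inequality yields the desired result. The main obstacle, which is entirely absorbed by Thm.~\ref{thm:reusable-components}, is the distribution-shift issue: the local losses $\transitionloss^{\room, \direction}$ are measured with respect to each individual training room's stationary distribution $\stationaryroomtrain$, while the composed loss is measured under $\stationary{\policy}\fun{\sampledot \mid \direction}$; reconciling these requires the BSCC inclusion hypothesis and the support-coverage assumption on $\mdpI_{\room}$ to bound the likelihood ratios by $\kappa$. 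Once that local-to-global reduction is granted, the theorem reduces to a one-line substitution.
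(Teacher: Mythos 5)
Your proposal is correct and follows exactly the route the paper intends: the theorem is stated as a summary obtained by applying Thm.~\ref{thm:initial-value-bound} to the MDP plan $\mdpplanner{}$ and its latent counterpart (after checking that the episodicity/reset assumptions lift, as you do), and then substituting the bound $\transitionloss^{\controller, \planner} \leq \entranceloss + \nicefrac{\kappa}{\stationary{\text{continue}}^{\min}}\,\expectedsymbol{\room, \direction \sim \stationary{\policy}} \transitionloss^{\room, \direction}$ from Thm.~\ref{thm:reusable-components}. No gaps; the distribution-shift work is indeed entirely contained in Thm.~\ref{thm:reusable-components}, just as you note.
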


\section{Experiments}\label{appendix:experiments}
In this section, we provide additional details on the experiments we performed.

\paragraph{Setup.}
Models have been trained on a cluster running under \texttt{CentOS Linux 7 (Core)} composed of a mix of nodes containing Intel processors with the following CPU microarchitectures:
(i) \texttt{10-core INTEL E5-2680v2}, (ii) \texttt{14-core INTEL E5-2680v4}, and (iii) \texttt{20-core INTEL Xeon Gold 6148}.
We used $8$ cores and $42$ GB of memory for each run during the hyperparameter search.
\paragraph{Learning the low-level policies.}%
We run WAE-DQN to learn the set of low-level policies~$\planner$ along with their latent-space models.
Recall the representation quality guarantees of our algorithm (cf.\ \secref{wae-dqn}): the same latent space can be used for rooms sharing similar features.
We leverage this property to learn only four latent policies (one per direction). 
In other words, only one pair of latent MDP and policy is learned per direction, which encompasses and generalize the behavior of the agent in all the \emph{training rooms} (cf.\ Appendix~\ref{sec:training-vs-synthesis}).
For instance, in a grid world environment composed of~$9$ rooms
with similar shapes,
we only train one latent policy per exit direction $\set{\leftarrow, \rightarrow, \uparrow, \downarrow}$ instead of~$9 \cdot 4 = 36$.
For training in a room~$\room$, we let~$\mdpI_{\room}$ uniformly distribute the agent's possible entry positions.
Adversaries' initial positions are randomly set by~$\mdpI_{\room}$ but may vary according to the function~$\entrancefn_\room$ in the high-level model (unknown at training time).
Objectives~$\objective^{\direction}_{\room}$ specify reaching the target exit while avoiding adversaries before the episode ends.

\paragraph{Synthesis.}%
To estimate the latent entrance function, we explore the high-level environment through random execution of the low-level latent policies.
We further consider \emph{Masked Autoencoders}
(MADEs, \cite{DBLP:conf/icml/GermainGML15}), which
allow
to learn complex distributions from a dataset.
With
the data collected via this exploration,
we train a
MADE to learn~$\latententrancefn_{\room}$ for any room~$\room$.
To learn those latent entrance functions,
consistently with WAE-MDPs, we use the same kind of MADE as the one introduced by \cite{delgrange2023wasserstein} for estimating the probability of the latent transition function.
We finally construct~$\latentmdp_{\planner}^{\graph}$ (cf.\ \secref{high_level}) and apply the synthesis procedure to obtain a two-level controller~$\policy = \tuple{\controller, \planner}$.
Tab.~\ref{tab:synthesis} reports the values of~$\policy$ obtained for various environment sizes.

\paragraph{Hyperparameter search.}
To train our WAE-DQN agent, we ran $4$ environments in parallel per training instance and used a replay buffer of capacity $7.5 \cdot 10^{5}$.
We performed a grid search to find the best parameters for our WAE-DQN algorithm.
Tab.~\ref{table:hyperparameter-search} presents the range of hyperparameters used. 
In particular, we found that prioritized experience replay does not improve the results in our environments significantly.
We used a batch size of $128$ for the WAE-MDP.

For the MADE modeling the latent entrance function, we used a dataset of size $25600$, and the training was split into $100$ epochs (i.e., the model performed $100$ passes through the entire dataset) with a learning rate of $10^{-3}$.
We used a batch size of $32$ or $64$, and two hidden layers, either with $64$ or $128$ neurons.

For generating the set of low-level policies $\planner$, we used the hyperparameters that worked the best for each specific direction.
We used the same parameters for the DQN training instances shown in  \figref{DRL}.

\subsection{Grid World Environment}
We provide additional details on the state space of our environment.
The agent has~$\textsc{Lp}$ life points, decrementing upon adversary contact or timeout.
Collecting power-ups (appearing randomly) shortly makes the agent invincible.
The state space comprises two components:
\begin{enumerate*}[(i)]
\item A $4$-dimensional bitmap $\mathbf{M} \in \set{0, 1}^{N \times l \times m \times n}$, where each layer in $k \in \set{1, \dots, l}$ corresponds to an item type on the grid; entry~$\mathbf{M}_{R, k, i, j}$ is~$1$ iff room~$R$ has item~$k$ in cell~$\fun{i, j}$;
\item step, power-up, and life-point counters~$\tuple{a, b, c}$.
\end{enumerate*}
\figref{grid-world-example} shows examples of rooms composed of $20\times20$ cells.
\begin{figure}
 \centering
 \includegraphics[width=.35\linewidth]{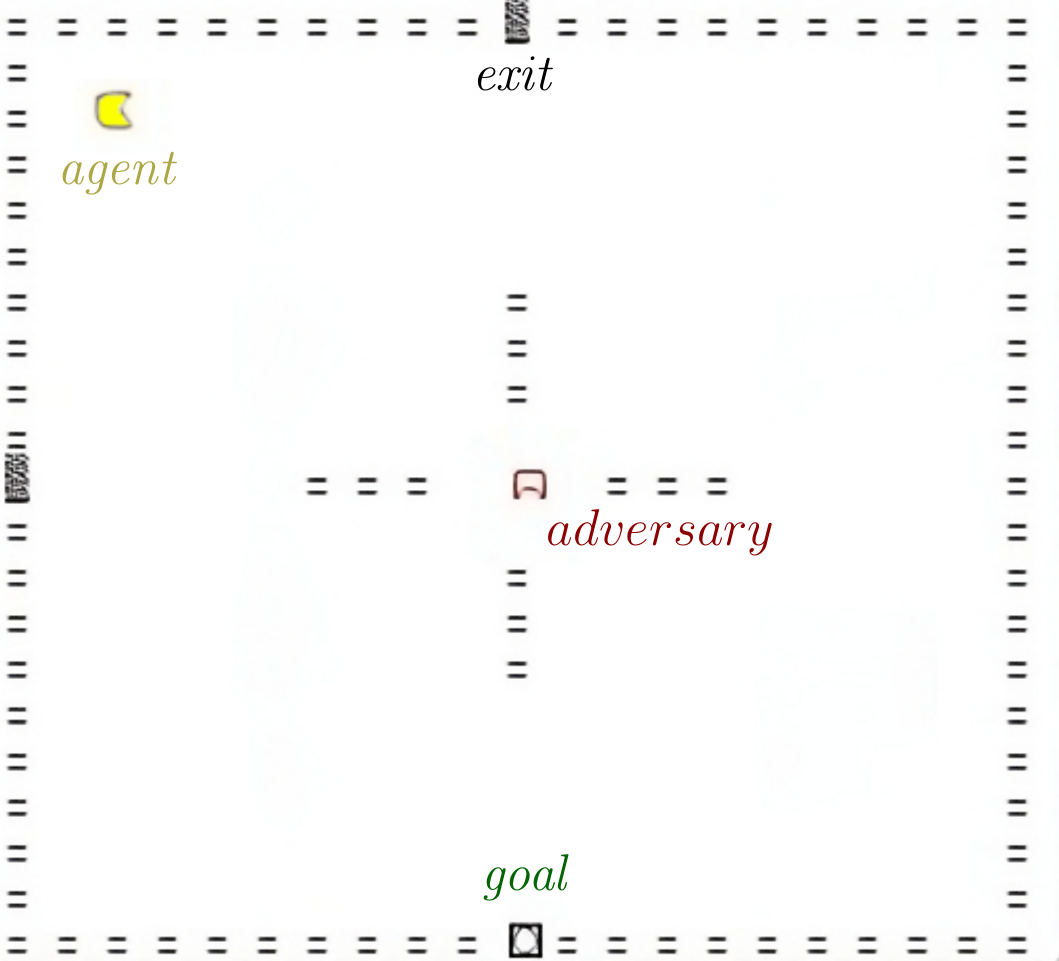}
 \includegraphics[width=.35\linewidth]{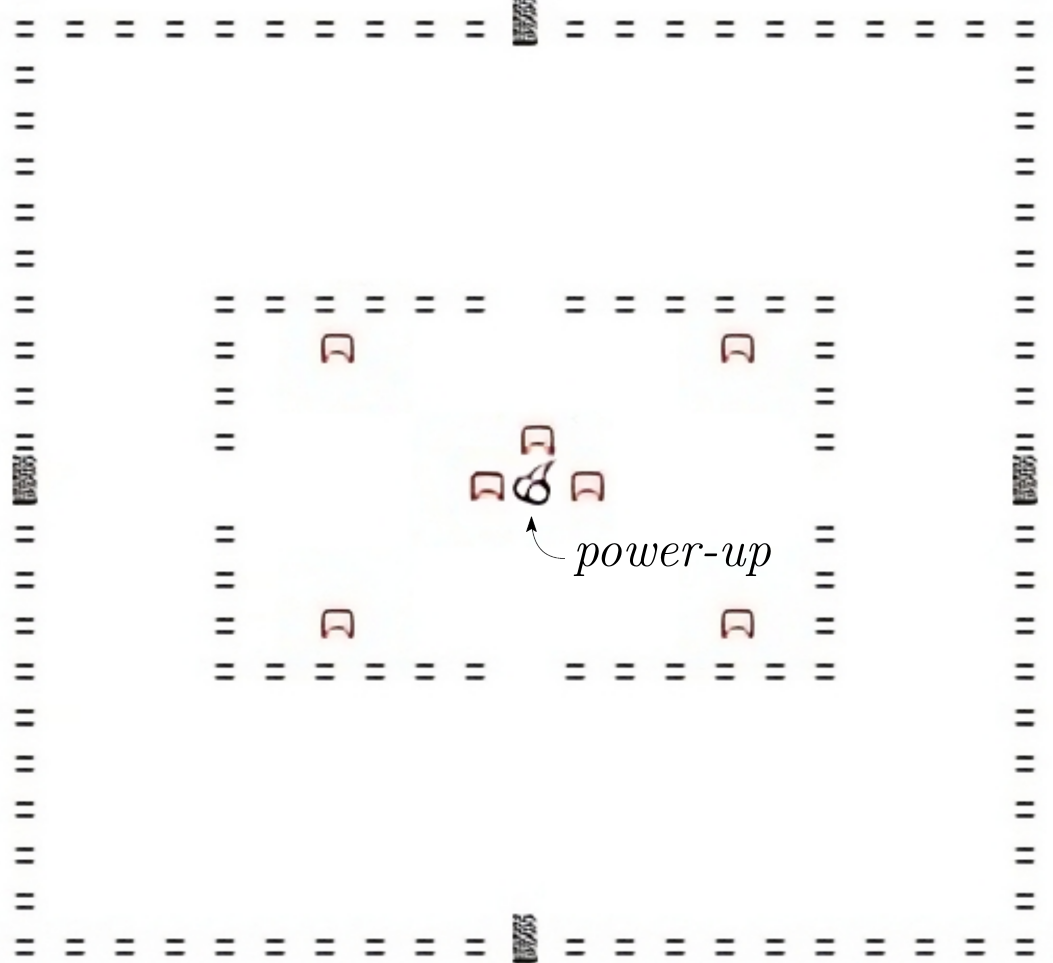}
 \caption{
 Two rooms of $20 \times 20$ cells ($9$ rooms in~\autoref{fig:grid-world-wide}).
 Demonstration with $9$ rooms: \texttt{\url{https://youtu.be/crowN8-GaRg}}.
 } 
 \label{fig:grid-world-example}
\end{figure}

\paragraph{DRL components.} 
We use
CNNs~\cite{DBLP:journals/neco/LeCunBDHHHJ89} to process bitmaps~$\mathbf{M}$ and a sparse reward signal $\reward\fun{\istate, \action, \istate'} = r^* \cdot \condition{\istate \in T} - r^* \cdot \condition{\istate \in B}$, where~$r^{*} > 0$ is an arbitrary reward (or conversely, a penalty) obtained upon reaching the target~$T$ (or an undesirable state in~$B$).
To guide the agent, we add a \emph{potential-based reward shaping}~\cite{DBLP:conf/icml/NgHR99,DBLP:journals/jair/Wiewiora03} based on the~$L_1$ distance to the target.
The resulting reward function is $\reward_{\Phi}\fun{\istate, \action, \istate'} = \discount\Phi\fun{\istate'} - \Phi\fun{\istate} + \reward\fun{\istate, \action, \istate'}$ where
\begin{equation}   
\Phi\fun{\istate} = 1 - \frac{\min\set{\abs{x(t_1) - x\fun{\istate}} + \abs{y\fun{t_2} - y\fun{\istate}} \colon t_1, t_2 \in T}}{N \cdot \fun{m + n}},
\label{eq:reward-shaping}
\end{equation}
and~$x\fun{\istate}$, $y\fun{\istate}$ respectively return the Euclidean coordinates
along the horizontal and vertical axes
corresponding to state~$\istate \in \istates$.
Intuitively, $ \abs{\Phi\fun{\istate} - 1}$ reflects the normalized distance of state~$\istate$ to the targets~$T$.
When the agent gets closer (resp.~further) to~$T$ when executing an action, the resulting reward is positive (resp.~negative).
Our DQN implementation uses state-of-the-art extensions and improvements from~\cite{DBLP:conf/aaai/HesselMHSODHPAS18}. 
Nevertheless, as demonstrated in Fig.~\ref{fig:DRL}, while DQN reduces contact with adversaries,
the two-level nature of the decisions required to reach a target hinders learning the high-level objective.

\paragraph{DQN and WAE-DQN experiments.}
We provide a more detailed version of \figref{DRL} in \figref{DRL-2}, where the WAE-DQN performance is specified per direction.
Precisely, we trained five different instances of the algorithm per policy with different random seeds, where the solid line is the mean and the shaded interval is the standard deviation.
To train the DQN agent, we set a time limit five times longer than that used for training rooms with the WAE-DQN agents. Furthermore, the DQN agent is equipped with~$3$ life points, while the WAE-DQN agents are limited to one.

\begin{figure}[t]
    \centering
    \begin{subfigure}[c]{0.475\textwidth}
        \centering
        \includegraphics[width=.9\textwidth]{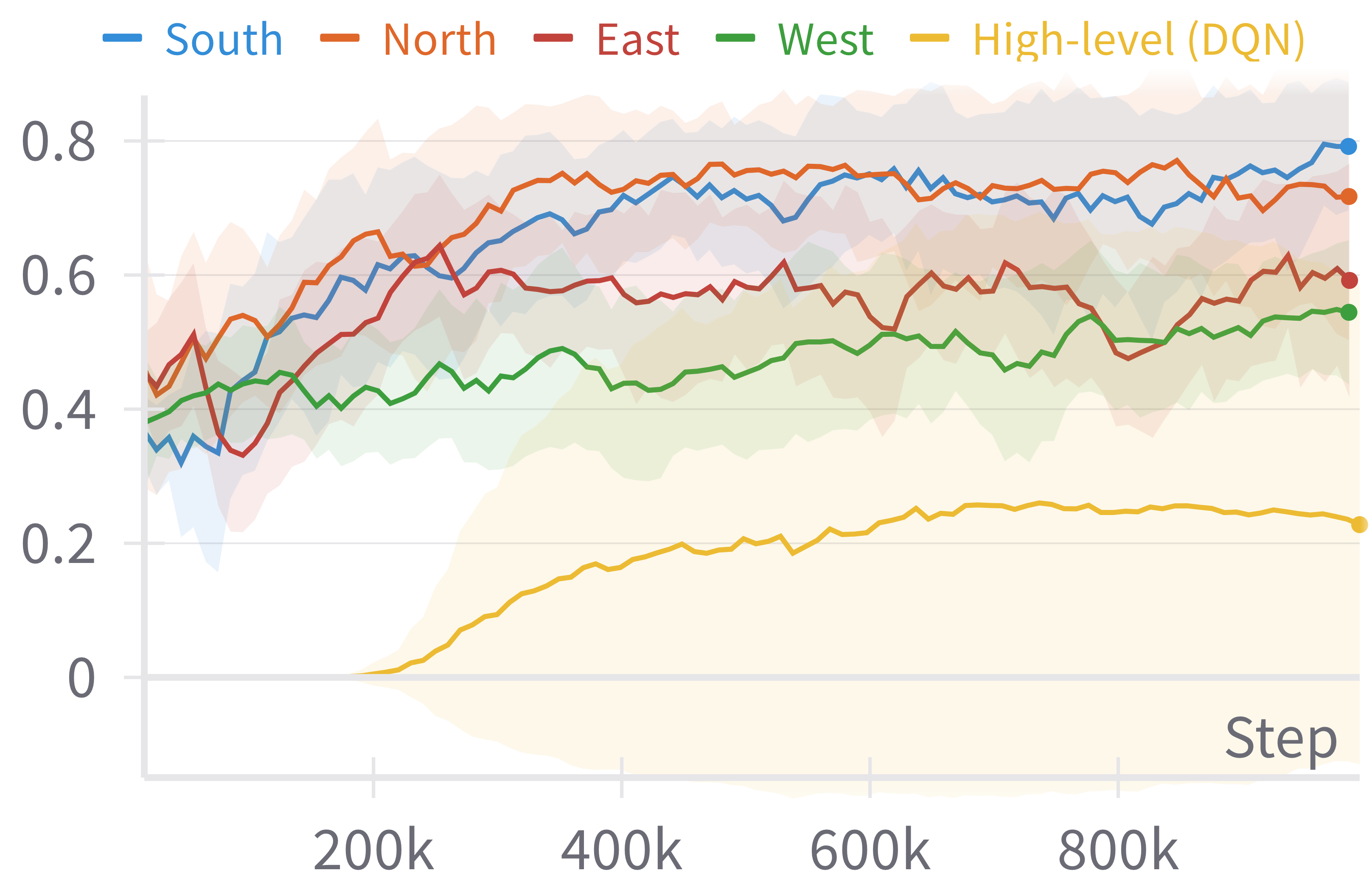}
        \caption{Goal reached (average over $30$ episodes).}
        \label{subfig:goal-reached-2}
    \end{subfigure}
    \hfill
    \begin{subfigure}[c]{0.475\textwidth}
        \centering
        \includegraphics[width=.9\textwidth]{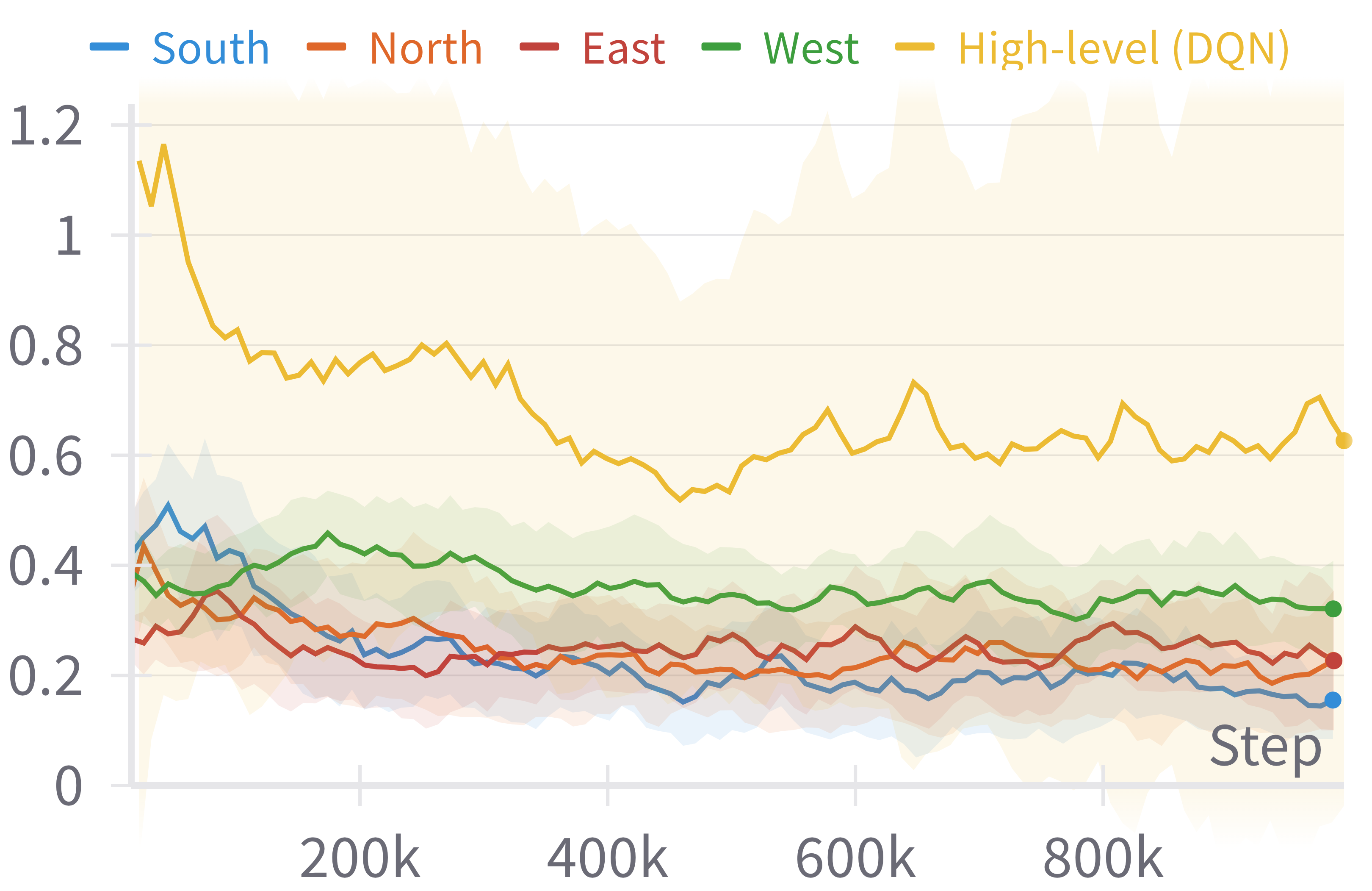}
        \caption{Failures: adversaries hit (averaged over $30$ episodes).}
        \label{subfig:failures-2}
    \end{subfigure}
    \caption{A more detailed version of \figref{DRL}, where the WAE-DQN performance is specified per direction.
    We train five different instances of the algorithm per policy with different random seeds, where the solid line corresponds to the mean and the shaded interval to the standard deviation. To train the DQN agent, we set a time limit five times longer than that used for training rooms with the WAE-DQN agents. Note that the DQN agent is equipped with $3$ life points, while the WAE-DQN agents are limited to one.
    }
    \label{fig:DRL-2}
\end{figure}

\begin{figure}
  \centering
  \includegraphics[width=.9\linewidth]{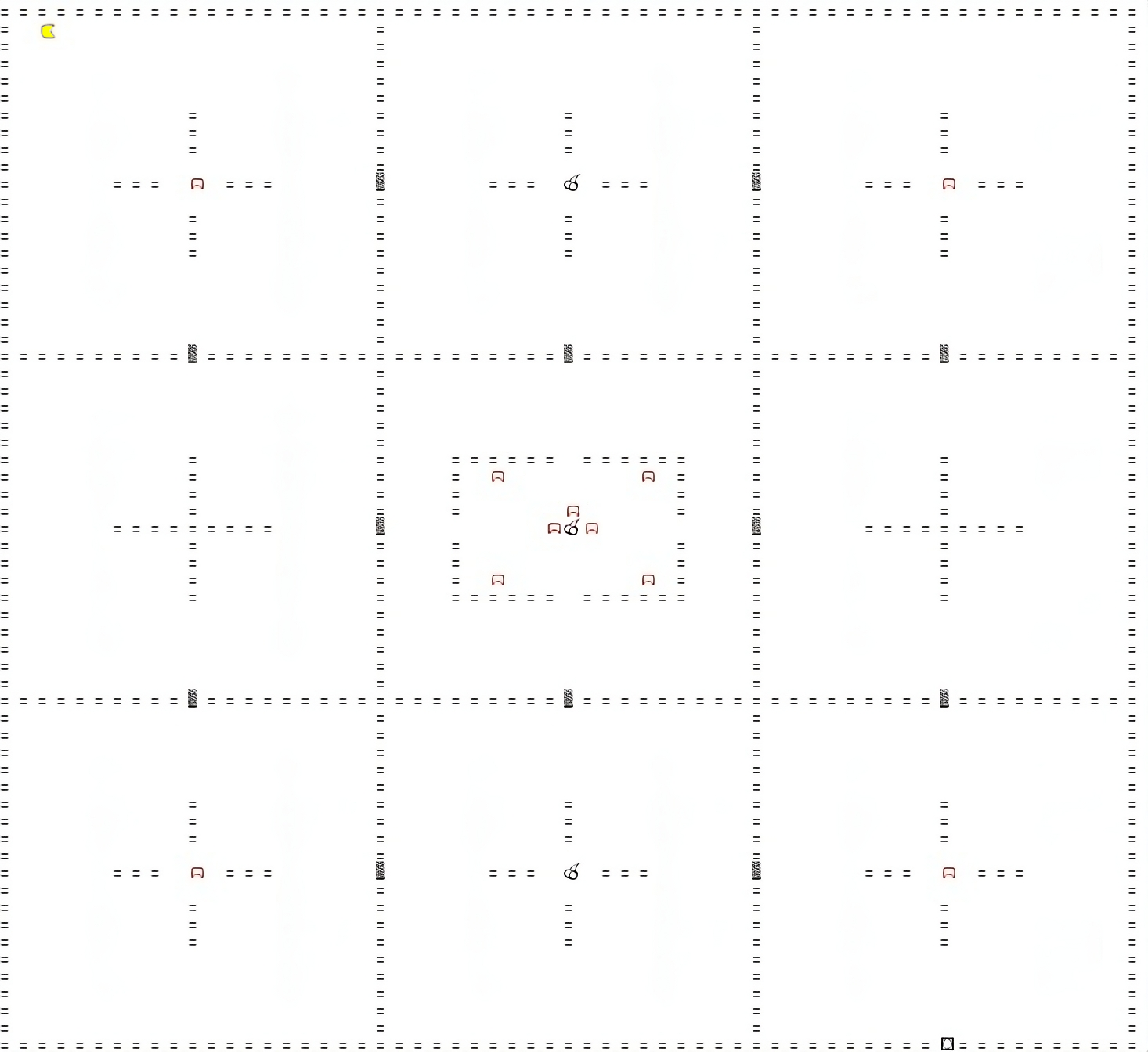}
  \caption{Environment for $N = 9$ rooms of $20 \times 20$ cells. The agent is depicted in yellow (top left), adversaries in red, power-ups as cherries, and the goal at the bottom right.}\label{fig:grid-world-wide}
  \label{fig:grid-world}
\end{figure}

\highlight{%
\subsection{\texttt{ViZDoom} Environment}
\begin{figure}
    \centering
    \begin{subfigure}[c]
    {.6\linewidth}
    \centering\includegraphics[width=\linewidth]{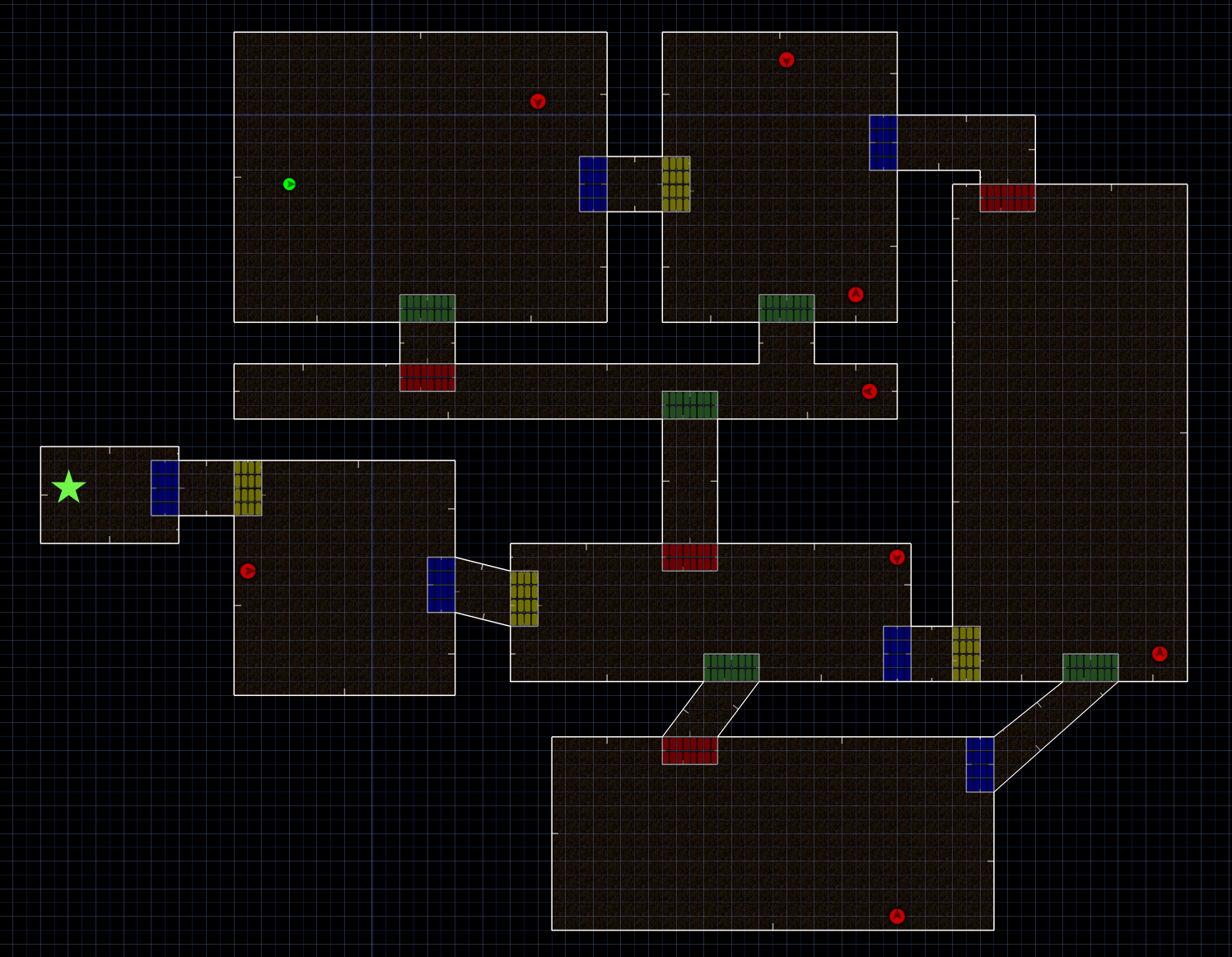}
    \caption{Top view of the \texttt{ViZDoom} environment ($8$ rooms). 
    The agent is located at the top leftmost room (depicted by a {\color{green}green point}). 
    {\color{red}Red points} depict enemies on the map. 
    The goal of the agent is to reach the goal position ${\color{green}\star}$. 
    Exit positions are depicted by colored rectangles: in {\color{blue}blue}, the agent needs to exit to the right, in {\color{yellow} yellow} to the left, in {\color{green}green} below, and in {\color{red} red} above.
    Note that enemies spawn at random positions every $250$ ``ticks'' ($4$ game ticks correspond to $1$ step in the environment).
    }   
    \label{fig:doom-map}
    \end{subfigure}
    \hfill
    \begin{subfigure}[c]{.38\linewidth}
        \centering
        \includegraphics[width=.475\textwidth]{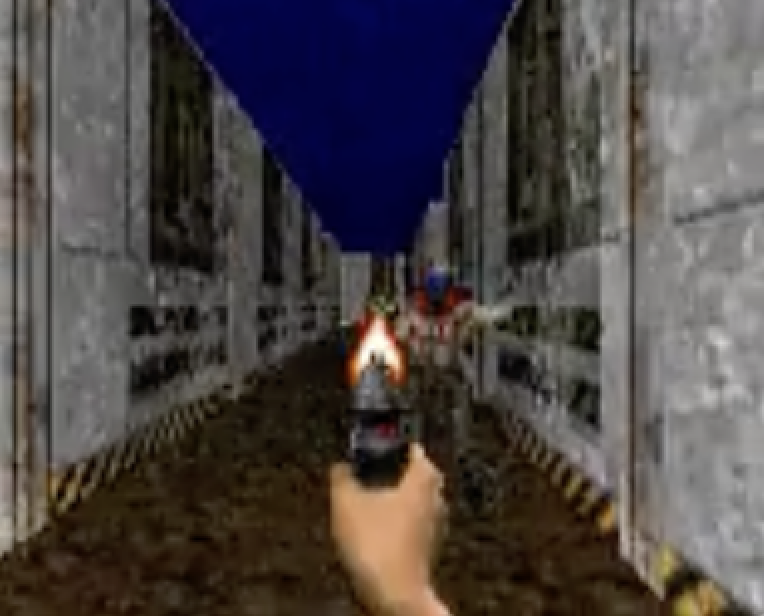}
        \hfill
        \includegraphics[width=.5\textwidth]{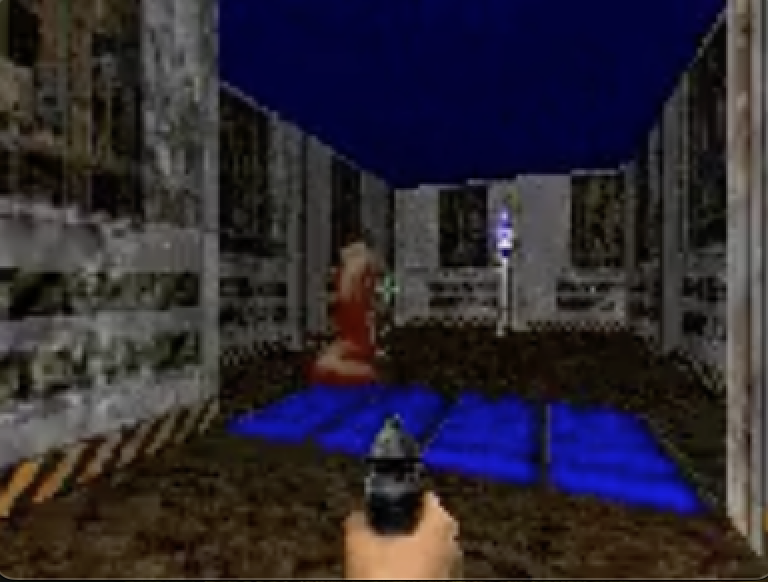}
        \hfill
        \includegraphics[width=.475\textwidth]{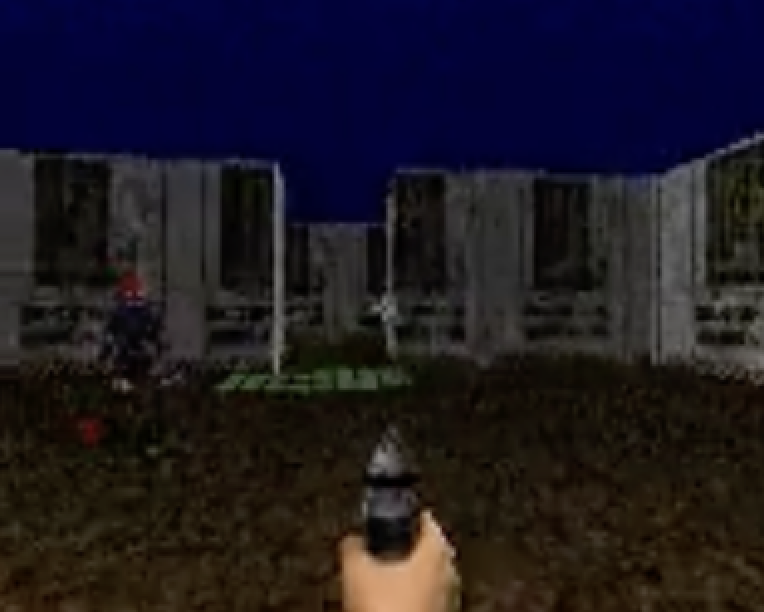}
        \hfill
        \includegraphics[width=.475\textwidth]{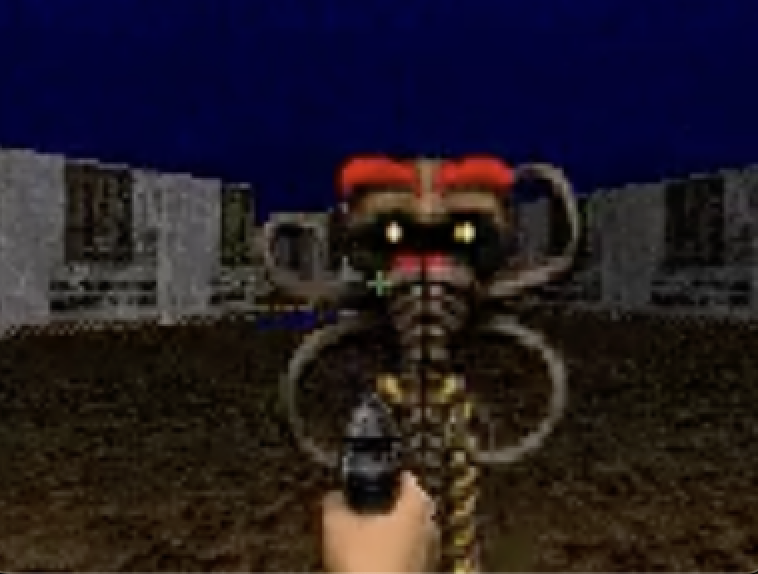}
        \caption{Samples of frames processed by the agent.}
    \end{subfigure}
    \caption{\texttt{ViZDoom environment}.
    Demonstration of a synthesized controller: \url{https://youtu.be/BAVLmsWEaQY}.}
    \label{fig:vizdoom}
\end{figure}
A top view of the environment and samples of visual inputs processed by the agent are provided in Fig.~\ref{fig:vizdoom}.
Each game frame consists of an image of size $60 \times 45$. 
Note that we do not stack frames in the agent's observation. 
Indeed, the agent additionally processes velocities, its angle in the map, and its health, which makes the observation Markovian.
We used CNNs to process the game frames with the same architecture as in~\cite{DBLP:conf/cig/KempkaWRTJ16}.

As mentioned in Sect.~\ref{sec:evaluation}, this environment is very challenging due to the nature of its observation space. 
In our experiments, we found that, instead of using the discrete latent states directly, it was beneficial to use the continuous relaxation of discrete random variables \cite{DBLP:conf/iclr/MaddisonMT17} learned by WAE-MDPs (see~\cite{delgrange2023wasserstein}) as latent states to explore the environment.
This allowed for a smoother optimization. 
Continuous relaxations rely on a temperature parameter, which intuitively controls the continuity of the latent space. 
When annealed to zero (the ``zero-temperature limit''), the latent space is guaranteed to be discrete. 
We used the latent space at its zero-temperature limit to verify the values in the latent model learned.

\paragraph{Reward function}
Denote by $\textit{health}\fun{\istate}$ the health of the agent in state $\istate \in \istates$ and
define $\textit{hit}\fun{\istate, \action, \istate'}$ as the function that returns a constant $C > 0$ when $\action \in \actions$ is the ``shoot'' action and an enemy is hit in $\istate'$, $-C$ if no enemy is hit, and $0$ if $\action$ is any other action.
The base reward function of the agent is given by 
$\reward\fun{\istate, \action, \istate'} = \textit{health}\fun{\istate'} - \textit{health}\fun{\istate} + r^* \cdot \condition{\istate \in T} + \textit{hit}\fun{\istate, \action, \istate'}$, where, as for the grid world environment, $r^* > 0$ is an arbitrary reward obtained upon reaching the target set $T$. 
% A state $\istate$ is labeled as ``bad'' when, whatever the action $\action$ played, the probability that $\textit{health}\fun{\istate'} - \textit{health}\fun{\istate}$ turns negative is non-zero.
To guide the agent, we use exactly the same reward shaping scheme as the one we defined for the grid world environment (Eq.~\ref{eq:reward-shaping}).
}

\definecolor{mediumgreen1}{RGB}{0, 0, 0}
\begin{table}
\centering
\caption{Hyperparameter range used for (WAE-)DQN.
If ``optimization scheme'' is set to ``round robin'', separate optimizers are used for the policy, the state embedding function, the WAE-MDP minimizer, and the WAE-MDP maximizer.
If ``concatenate'' is used, the policy, the state embedding function, and the WAE-MDP minimizer share the same optimizer. 
In that case, all those components are optimized at once by concatenating their loss functions.
For details about the WAE-MDP parameters, see~\cite{delgrange2023wasserstein}.}\label{table:hyperparameter-search}
\label{tab:hyperparameters}
\begin{tabular}{@{}llll@{}}
\toprule
\textbf{Parameter} &
  \textbf{Range} &
  \textbf{Grid World} &
  \texttt{\highlight{ViZDoom}} \\ \midrule
\multicolumn{4}{c}{Common to DQN and WAE-DQN} \\ \midrule
Activation &
  $\set{\text{\color{mediumgreen1}ReLU}, \text{\color{mediumgreen1}LeakyReLu}, \text{ELU}, {\tanh}, \text{sigmoid}}, \text{SiLU}$ &
  LeakyReLu &
  sigmoid \\
\# Hidden layers per network &
  $\set{1, 2, {\color{mediumgreen1}3}}$ &
  3 &
  2 \\
\# Neurons per layer &
  $\set{{\color{mediumgreen1}128}, 256, {512}}$ &
  128 &
  256 \\
CNN filters &
   &
  $3 \to 3 \to 3$ &
  $7 \to 4$ (strides $2$) \\
CNN kernels &
   &
  $64 \to 32 \to 16$ &
  $32 \to 32$ \\
Optimization scheme &
  $\set{\text{round robin, concatenate}}$ &
  round robin &
  concatenate \\ \midrule
\multicolumn{4}{c}{DQN} \\ \midrule
Use Boltzmann exploration &
  $\set{\text{\color{mediumgreen1}Yes}, \text{No}}$ &
  Yes &
  Yes \\
Boltzmann temperature &
  $\set{0.25, 0.10, 0.5, {\color{mediumgreen1}0.75}, {\color{mediumgreen1}1}, {\color{mediumgreen1}10}, 100}$ &
  0.75 &
  0.5 \\
Use $\epsilon$-greedy exploration (decay to $\epsilon=0.1$) &
  $\set{\text{Yes}, \text{\color{mediumgreen1}No}}$ &
  No &
  No \\
Target update period &
  $\set{{\color{mediumgreen1}1}, {\color{mediumgreen1}250}, 500, 1000}$ &
  250 &
  250 \\
Target update scale ($\alpha$ in Algorithm~\ref{algo:wae-dqn}) &
  $\set{{\color{mediumgreen1}10^{-4}}, 5 \cdot 10^{-4}, {\color{mediumgreen1}10^{-3}}, 5 \cdot 10^{-3}, 1}$ &
  1 &
  1 \\
Reward scaling &
  $\set{1, 10, 25, {\color{mediumgreen1}100}}$ &
  100 &
  1 \\
Learning rate &
  $\set{{\color{mediumgreen1}6.25 \cdot 10^{-5}}, {10}^{-4}, 2.5 \cdot 10^{-4}, 10^{-3} }$ &
  $6.25 \cdot 10^{-5}$ &
  $6.25 \cdot 10^{-5}$ \\
Batch size &
  $\set{32, {\color{mediumgreen1}64}, 128}$ &
  $64$ &
  $64$ \\
Use double $Q$-networks~\cite{DBLP:conf/aaai/HasseltGS16} &
  $\set{\text{\color{mediumgreen1}Yes}, \text{No}}$ &
  Yes &
  Yes \\
Categorical network \cite{DBLP:conf/icml/BellemareDM17} &
  $\set{\text{\color{mediumgreen1}Yes}, \text{No}}$ &
  No &
  Yes \\ \midrule
\multicolumn{4}{c}{WAE-MDP} \\ \midrule
Latent state size (power of $2$) &
  $\set{12, {\color{mediumgreen1}13}, 14, 15}$ &
  13 &
  14 \\
State embedding function temperature &
  $\set{{\color{mediumgreen1}\nicefrac{1}{3}}, \nicefrac{1}{2}, \nicefrac{2}{3}, \nicefrac{3}{4}, 0.999}$ &
  $\nicefrac{1}{3}$ &
  $0.999$ \\
Transition function temperature &
  $\set{ {\color{mediumgreen1}\nicefrac{1}{3}}, \nicefrac{1}{2}, \nicefrac{2}{3}, \nicefrac{3}{4}, 0.999}$ &
  $\nicefrac{1}{3}$ &
  $\nicefrac{1}{2}$ \\
Steady-state regularizer scale factor &
  $\set{0.01, 0.1, 1, 10, 25, {\color{mediumgreen1}50}, 75}$ &
  50 &
  $0.01$ \\
Transition regularizer scale factor &
  $\set{10^{-2} , 10^{-1}, 1, 10, 25, {\color{mediumgreen1}50}, 75}$ &
  50 &
  $0.1$ \\
Minimizer learning rate &
  $\set{{\color{mediumgreen1}10^{-4}}, 5\cdot10^{-4}, {\color{mediumgreen1}10^{-3}}}$ &
  $10^{-4}$ &
  / \\
Maximizer learning rate &
  $\set{{\color{mediumgreen1}10^{-4}}, 5\cdot10^{-4}, 10^{-3}}$ &
  $10^{-4}$ &
  $10^{-3}$ \\
State embedding function learning rate &
  $\set{{\color{mediumgreen1}10^{-4}}, 5\cdot10^{-4}, 10^{-3}}$ &
  $10^{-4}$ &
  / \\
\# critic updates &
  $\set{3, {\color{mediumgreen1}5}, 10, 15}$ &
  $5$ &
  $3$ \\
State reconstruction function &
  $\set{\text{none}, {\color{mediumgreen1}L_2}, \text{binary cross entropy (for $\mathbf{M}$)} }$ &
  $L_2$ &
  No reconstruction \\ \bottomrule
\end{tabular}%
\end{table}

\section{Broader Impact}\label{sec:broader-impact}
Our work presents primarily theoretical and fundamental results, enhancing the reliability of RL solutions.
Our claims are also illustrated experimentally with an experimental environment (involving an agent moving within a grid world amid moving adversaries).
Specifically, our approach focuses on providing performance (``reach-'') and safety (``avoid'') guarantees with RL policies.
We believe our work may have positive societal impacts in the long-term, including
\begin{enumerate*}[(i)]
    \item \emph{safety-critical applications}: prevent failures (in, e.g., autonomous driving, healthcare, robotics);
    \item \emph{trust and wide adoption}: builds and improves confidence in RL solutions;
    \item \emph{avoiding harmful behavior}: mitigates unintended, risky actions; and
    \item \emph{performance compliance}: check whether performance standard are met (e.g., in industry).
\end{enumerate*}

\end{document}